\RequirePackage{fix-cm}
\documentclass[twocolumn]{svjour3}          
\smartqed  
\usepackage{graphicx}
%
%
%
%
%

\usepackage{amssymb}
\usepackage{amsmath}
\usepackage{natbib}
\usepackage{microtype}
\usepackage{xcolor}
\usepackage[caption=false]{subfig}
\usepackage{algorithm}
\usepackage{algorithmicx}
\usepackage{algpseudocode}
\usepackage{balance}
\usepackage{gensymb}
\usepackage{hyperref}
\hypersetup{colorlinks=true}
\hypersetup{citecolor=blue}

\newcommand{\tn}[1]{\textnormal{#1}}

\newcommand{\mat}[0]{\begin{bmatrix}}
\newcommand{\mate}[0]{\end{bmatrix}}
\newcommand{\va}{\mathbf{a}}

\newcommand{\vc}{\mathbf{c}}
\newcommand{\vd}{\mathbf{d}}

\newcommand{\vf}{\mathbf{f}}
\newcommand{\vg}{\mathbf{g}}

\newcommand{\vp}{\mathbf{p}}

\newcommand{\vu}{\mathbf{u}}
\newcommand{\vv}{\mathbf{v}}

\newcommand{\vx}{\mathbf{x}}
\newcommand{\vy}{\mathbf{y}}


\newcommand{\cD}{\mathcal{D}}

\newcommand{\cG}{\mathcal{G}}

\newcommand{\cI}{\mathcal{I}}

\newcommand{\cN}{\mathcal{N}}
\newcommand{\cO}{\mathcal{O}}

\newcommand{\cS}{\mathcal{S}}

\newcommand{\cU}{\mathcal{U}}
\newcommand{\cV}{\mathcal{V}}
\newcommand{\cW}{\mathcal{W}}

\newcommand{\R}{\mathbb{R}}

\newcommand{\hx}{\hat{\mathbf{x}}}
\newcommand{\hp}{\hat{\mathbf{p}}}

\newcommand{\pr}{\textnormal{Pr}}
\newcommand{\Gau}{\mathcal{N}}
\newcommand\norm[1]{\left\|#1\right\|}              

\DeclareMathOperator*{\argmin}{arg\,min}            



\newcommand{\half}{\frac{1}{2}}

\newcommand{\rebuttal}[1]{{#1}}

\setlength{\textfloatsep}{0.5\baselineskip plus 0.2\baselineskip minus 0.3\baselineskip}

\newcommand{\subparagraph}{}
\usepackage{titlesec}
\titlespacing*{\section}{0pt}{1.0\baselineskip}{0.7\baselineskip}
\titlespacing*{\subsection}{0pt}{0.9\baselineskip}{0.7\baselineskip}
\titlespacing*{\subsubsection}{0pt}{0.7\baselineskip}{0.5\baselineskip}
\titlespacing*{\paragraph}{0pt}{0.7\baselineskip}{0.5\baselineskip}

\begin{document}

\title{Decentralized Probabilistic Multi-Robot Collision Avoidance Using Buffered Uncertainty-Aware Voronoi Cells
\thanks{This work was supported in part by the Netherlands Organization for Scientific Research (NWO) domain Applied Sciences (Veni 15916) and the U.S. Office of Naval Research Global (ONRG) NICOP-grant N62909-19-1-2027. We are grateful for their support.}
\thanks{A video of the experimental results is available at \url{https://youtu.be/5F3fjjgwCSs}}
}


\author{Hai Zhu         \and
        Bruno Brito     \and
        Javier Alonso-Mora
}


\institute{ Hai Zhu \at
                Department of Cognitive Robotics, Delft University of Technology, Mekelweg 2, 2628 CD Delft, The Netherlands \\
                \email{h.zhu@tudelft.nl}           
           \and
            Bruno Brito \at
                Department of Cognitive Robotics, Delft University of Technology, Mekelweg 2, 2628 CD Delft, The Netherlands \\
                \email{bruno.debrito@tudelft.nl} 
            \and 
            Javier Alonso-Mora \at 
                Department of Cognitive Robotics, Delft University of Technology, Mekelweg 2, 2628 CD Delft, The Netherlands \\
                \email{j.alonsomora@tudelft.nl}
}

\date{Received: date / Accepted: date}

\maketitle              

\begin{abstract}
    In this paper, we present a decentralized and communication-free collision avoidance approach for multi-robot systems that accounts for both robot localization and sensing uncertainties. The approach relies on the computation of an uncertainty-aware safe region for each robot to navigate among other robots and static obstacles in the environment, \rebuttal{under the assumption of Gaussian-distributed uncertainty}. In particular, at each time step, we construct a chance-constrained buffered uncertainty-aware Voronoi cell (B-UAVC) for each robot given a specified collision probability threshold. Probabilistic collision avoidance is achieved by constraining the motion of each robot to be within its corresponding B-UAVC, i.e. the collision probability between the robots and obstacles remains below the specified threshold. The proposed approach is decentralized, communication-free, scalable with the number of robots and robust to robots' localization and sensing uncertainties. We applied the approach to single-integrator, double-integrator, differential-drive robots, and robots with general nonlinear dynamics. Extensive simulations and experiments with a team of ground vehicles, quadrotors, and heterogeneous robot teams are performed to analyze and validate the proposed approach. 
    \keywords{Collision avoidance \and Motion planning \and Planning under uncertainty \and Multi-robot systems}
\end{abstract}

\section{Introduction}\label{sec:intro}

Multi-robot collision avoidance in cluttered environments is a fundamental problem when deploying a team of autonomous robots for applications such as coverage \citep{Breitenmoser2016}, target tracking \citep{zhou2018resilient}, formation flying \citep{Zhu2019ICRA} and multi-view cinematography \citep{Nageli2017multiple}. Given the robot current states and goal locations, the objective is to plan a local motion for each robot to navigate towards its goal while avoiding collisions with other robots and obstacles in the environment. Most existing algorithms solve the problem in a deterministic manner, where the robot states and obstacle locations are perfectly known. Practically, however, robot states and obstacle locations are generally obtained by an estimator based on sensor measurements that have noise and uncertainty. Taking this uncertainty into consideration is of utmost importance for safe and robust multi-robot collision avoidance. 

In this paper, we present a decentralized probabilistic approach for multi-robot collision avoidance under localization and sensing uncertainty that does not rely on communication. Our approach is built on the buffered Voronoi cell (BVC) method developed by \citet{Zhou2017}. The BVC method is designed for collision avoidance among multiple single-integrator robots, where each robot only needs to know the positions of neighboring robots. We extend the method into probabilistic scenarios considering robot localization and sensing uncertainties by mathematically formalizing a buffered uncertainty-aware Voronoi cell (B-UAVC). Furthermore, we consider static obstacles with uncertain locations in the environment. We apply our approach to double-integrator dynamics, differential-drive robots, and general high-order dynamical robots.

\subsection{Related Works}\label{sec:relatedWork}

\subsubsection{Multi-robot collision avoidance}
The problem of multi-robot collision avoidance has been well studied for deterministic scenarios, where the robots' states are precisely known. One of the state-of-the-art approaches is the reciprocal velocity obstacle (RVO) method \citep{VandenBerg2008}, which builds on the concept of velocity obstacles (VO) \citep{Fiorini1998}. The method models robot interaction pairwise in a distributed manner and estimates future collisions as a function of relative velocity. Based on the basic framework, RVO has been extended towards several revisions: the optimal reciprocal collision-avoidance (ORCA) method \citep{VanDenBerg2011} casting the problem into a linear programming formulation which can be solved efficiently, the generalized RVO method \citep{Bareiss2015} applying for heterogeneous teams of robots, and the $\varepsilon$-cooperative collision avoidance ($\varepsilon$CCA) method \citep{Alonso-Mora2018} accounting for the cooperation of nonholonomic robots. In addition to those RVO-based methods, the model predictive control (MPC) framework has also been widely used for multi-robot collision avoidance, which includes decentralized MPC \citep{Shim2003}, decoupled MPC \citep{Chen2015}, and sequential MPC \citep{Morgan2016, Luis2020}. 
While those approaches typically require the robots position and velocity, or more detailed future trajectory information to be known among neighboring robots, the recent developed buffered Voronoi cell (BVC) method \citep{Zhou2017,pierson2020weighted} only requires the robots to know the 
positions of other robots. 
In this paper, we build upon the concept of BVC and extend it to probabilistic scenarios, where each robot only needs to estimate the positions of its neighboring robots. 

\subsubsection{Collision avoidance under uncertainty}
Some of the above deterministic collision avoidance approaches have been extended to scenarios where robot localization or sensing uncertainty is considered. 
Based on RVO, the COCALU method \citep{Claes2012} takes into account bounded localization uncertainty of the robots by constructing an error-bounded convex hull of the VO of each robot. 
\citet{Gopalakrishnan2017} presents a probabilistic RVO method for single-integrator robots. 
\citet{Kamel2017} presents a decentralized MPC where robot motion uncertainty is taken into account by enlarging the robots with their 3-$\sigma$ confidence ellipsoids. 
A chance constrained MPC problem was formulated by \citet{Lyons2012} for planar robots, where rectangular regions were computed and inter-robot collision avoidance was transformed to avoid overlaps of those regions. 
Using local linearization, \citet{Zhu2019RAL} proposed a chance constrained nonlinear MPC (CCNMPC) method to ensure that the probability of inter-robot collision is below a specified threshold. 

Among these attempts to incorporate uncertainty into multi-robot collision avoidance, several limitations are observed. 
Probabilistic VO-based methods are limited to systems with simple first-order dynamics, or limited to homogeneous teams of robots. 
Probabilistic MPC-based methods typically demand communication of the planned trajectory of each robot to guarantee collision avoidance, which does not scale well with the number of robots in the system. 
An alternative to communicating trajectories is to assume that all other robots move with constant velocity \citep{Kamel2017}, which has been shown to lead to collisions in cluttered environments \citep{Zhu2019RAL}. 
Recently, \citet{luo2020multi} proposes probabilistic safety barrier certificates (PrSBC) to define the space of admissible control actions that are probabilistic safe, but it is only designed for single-integrator robots. In this paper, we define the probabilistic safe region for each robot directly based on the concept of buffered Voronoi cell (BVC). 

\rebuttal{
The BVC method has also been extended to probabilistic scenarios by \cite{Wang2019}. Taking into account the robot measurement uncertainty of other robots, they present the probabilistic buffered Voronoi cell (PBVC) to assure a safety level given a collision probability threshold. However, since the PBVC of each robot does not have an analytic solution, they employ a sampling-based approach to approximate it. In contrast, our proposed B-UAVC has an explicit and analytical form, which is more efficient to be computed. Moreover, our B-UAVC can be incorporated with MPC to handle general nonlinear systems, while the PBVC method developed by \cite{Wang2019} cannot be directly applied within a MPC framework. }

\subsubsection{Spatial decomposition in motion planning}
Our method constructs a set of local safe regions for the robots, which decompose the workspace. Spatial decomposition is broadly used in robot motion planning. \citet{Deits2015} proposes the IRIS (iterative regional inflation by semi-definite programming) algorithm to compute safe convex regions among obstacles given a set of seed points. The algorithm is then used for UAV path planning \citep{deits2015efficient} and multi-robot formation control \citep{Zhu2019ICRA}. \citet{Liu2017} presents a simpler but more efficient iteratively inflation algorithm to compute a convex polytope around a line segment among obstacles and utilizes it to construct a safe flight corridor for UAV navigation \citep{tordesillas2019faster}. Similar safe flight corridors are constructed for trajectory planning of quadrotor swarms \rebuttal{\citep{Honig2018}}, by computing a set of max-margin separating hyperplanes between a line segment and convex polygonal obstacles. The max-margin separating hyperplanes are also used by \citet{Arslan2019} to construct a local robot-centric safe region in convex sphere worlds for sensor-based reactive navigation. While those spatial decomposition methods have shown successful application in robot motion planning, they all assume perfect knowledge on robots and obstacles positions. In this paper, we consider both the robot localization and obstacle position uncertainty and construct a local uncertainty-aware safe region for each robot.

\subsection{Contribution}
The main contribution of this paper is a decentralized and communication-free method for probabilistic multi-robot collision avoidance in cluttered environments. The method considers robot localization and sensing uncertainties and relies on the computation of buffered uncertainty-aware Voronoi cells (B-UAVC). At each time step, each robot computes its B-UAVC based on the estimated position and uncertainty covariance of itself, neighboring robots and obstacles, and plans its motion within the B-UAVC. Probabilistic collision avoidance is ensured by constraining each robot's motion to be within its corresponding B-UAVC, such that the inter-robot and robot-obstacle collision probability is below a user-specified threshold.

An earlier version of this paper was published by \citet{Zhu2019MRS}. In this version, three main additional extensions are developed: a) we further consider static obstacles with uncertain locations in the environment; b) we extend the approach to double-integrator dynamics and differential-drive robots and c) we provide thorough simulation and experimental results and analyses. 

\subsection{Organization}
The remaining of this paper is organized as follows. In Section \ref{sec:preliminary} we present the problem statement and briefly summarize the concept of BVC. In Section \ref{sec:method_1} we formally introduce the buffered uncertainty-aware Voronoi cell (B-UAVC) and its construction method. We then describe how the B-UAVC is used for probabilistic multi-robot collision avoidance in Section \ref{sec:method_2}. Simulation and experimental results are presented in Section \ref{sec:sim_result} and Section \ref{sec:exp_result}, respectively. Finally, Section \ref{sec:conclsuion} concludes the paper. 

\section{Preliminaries}\label{sec:preliminary}

Throughout this paper vectors are denoted in bold lowercase letters, $\vx$, matrices in plain uppercase $M$, and sets in calligraphic uppercase, $\cS$. \rebuttal{$I$ indicates the identity matrix.} A superscript $ \vx^T $ denotes the transpose of $ \vx $. $\norm{\vx}$ denotes the Euclidean norm of $\vx$ and $\norm{\vx}_Q^{2} = \vx^TQ\vx$ denotes the weighted square norm. A hat $\hx$ denotes the mean of a random variable $\vx$. $\pr(\cdot)$ indicates the probability of an event and $p(\cdot)$ indicates the probability density function.

\subsection{Problem Statement}
Consider a group of $n$ robots operating in a $d$-dimensional space $\cW \subseteq \R^d$, where $d~\rebuttal{\in}~\{2, 3\}$, populated with $m$ static polygonal obstacles. 
For each robot $i \in \cI = \{1,\dots,n\}$, $\vp_i \in \R^d$ denotes its position, $\vv_i = \dot{\vp}_i$ its velocity and $\va_i = \dot{\vv}_i$ its acceleration. 
Let $\cG = \{\vg_1,\dots,\vg_n\}$ denote their goal locations.
\rebuttal{A safety radius $r_s$ is given for all robots.}
We consider that the position of each robot is obtained by a state estimator and is described as a Gaussian distribution with covariance $\Sigma_i$, i.e. $\vp_i \sim \cN(\hp_i, \Sigma_i)$. 
\rebuttal{We also consider static polytope obstacles with known shapes but uncertain locations. 
For each obstacle $o\in \cI_o = \{1,\dots,m\}$, denote by $\hat{\cO}_o \subset \R^d$ its occupied space when located at the expected (mean) position. $\hat{\cO}_o$ is given by a set of vertices. Hence, the space actually occupied by the obstacle can be written as $\cO_o = \{\vx+\vd_o~|~\vx\in\hat{\cO}_o,\vd_o \sim \Gau(0, \Sigma_o) \} \subset \R^d$, where $\vd_o $ is the uncertain translation of the obstacle's position, which has a zero mean and covariance $\Sigma_o$. 
}

A robot $i$ in the group is collision free with another robot $j$ if their distance is greater than the sum of their radii, i.e. \rebuttal{$\tn{dis}(\vp_i,\vp_j) \geq 2r_s$}
and with the obstacle $o$ if the minimum distance between the robot and the obstacles is larger than its radius, i.e. \rebuttal{$\tn{dis}(\vp_i, \cO_o) \geq r_s$}.
\rebuttal{
The distance function $\tn{dis}(\cdot)$ between a robot with another robot or an obstacle are defined as $\tn{dis}(\vp_i,\vp_j) = \norm{\vp_i - \vp_j}$, and $\tn{dis}(\vp_i, \cO_o) = \min_{\vp \in \cO_o}\norm{\vp_i - \vp}$, respectively. 
}
Note that the robots' and obstacles' positions are random variables following Gaussian distributions, which have an infinite support. Hence, the collision-free condition can only be satisfied in a probabilistic manner, which is defined as a chance constraint as follows.

\begin{definition}[Probabilistic Collision-Free]\label{def:pcollfree}
    A robot $i$ at position $\vp_i \sim \cN(\hp_i, \Sigma_i)$ is probabilistic collision-free with a robot $j$ at position $\vp_j \sim \cN(\hp_j, \Sigma_j)$ and an obstacle $o$ at position $\vp_o \sim \cN(\hp_o, \Sigma_o)$ if 
    \begin{align}
            \pr(\rebuttal{\tn{dis}(\vp_i,\vp_j) \geq 2r_s}) &\geq 1 - \delta, ~~\forall j\in\cI, j\neq i, \label{eq:chanceConRobot} \\
            \pr(\rebuttal{\tn{dis}(\vp_i, \cO_o) \geq r_s}) &\geq 1 - \delta, ~~\forall o \in \cI_o, \label{eq:chanceConObs}
    \end{align}
    where $\delta$ is the collision probability threshold for inter-robot and robot-obstacle collisions. 
\end{definition}

The objective of probabilistic collision avoidance is to compute a local motion plan, $\vu_i$, for each robot in the group, that respects its kinematic and dynamical constraints, makes progress towards its goal location, and is probabilistic collision free with other robots as well as obstacles in the environment. 
In this paper, we first consider single-integrator dynamics for the robots,
\begin{equation}
    \dot{\vp}_i = \vu_i,
\end{equation}
and then extend it to double-integrator systems, differential-drive robots and robots with general high-order dynamics.

\subsection{Buffered Voronoi Cell}\label{subsec:vc}
The key idea of our proposed method is to compute an uncertainty-aware collision-free region for each robot in the system, which is a major extension of the deterministic buffered Voronoi cell (BVC) method \citep{Zhou2017,pierson2020weighted}. In this section, we briefly describe the concept of BVC.

For a set of deterministic points $(\vp_1, \dots, \vp_n) \in \R^d$, the standard Voronoi cell (VC) of each point $i \in \cI$ is defined as \citep{Okabe2009}
\begin{equation}\label{eq:vcDef}
    \cV_i = \{\vp\in\R^d : \norm{\vp- \vp_i} \leq \norm{\vp - \vp_j}, \forall j\neq i \},
\end{equation}
which can also be written as
\begin{equation}\label{eq:vcPlane}
    \cV_i = \{ \vp\in\R^d: \vp_{ij}^T\vp \leq \vp_{ij}^T\frac{\vp_i+\vp_j}{2}, \forall j\neq i \},
\end{equation}
where $\vp_{ij} = \vp_j - \vp_i$.
It can be observed that $\cV_i$ is the intersection of a set of hyperplanes which separate point $i$ with any other point $j$ in the group, as shown in Fig. \ref{subfig:vc}. Hence, VC can be obtained by computing the separating hyperplanes between each pair of points. 

To consider the footprints of robots, a buffered Voronoi cell for each robot $i$ is defined as follows:
\begin{equation}\label{eq:bvcPlane}
    \cV_i^b = \{ \vp\in\R^d: \vp_{ij}^T\vp \leq \vp_{ij}^T\frac{\vp_i+\vp_j}{2} - r_s\norm{\vp_{ij}}, \forall j\neq i \},
\end{equation}
which is obtained by retracting the edges of the VC with a safety distance (buffer) \rebuttal{$r_s$}. 

\rebuttal{In deterministic scenarios, if the robots are mutually collision-free, then the BVC of each robot is a non-empty set \citep{Zhou2017}.}
It is also trivial to prove that the BVCs are disjoint and if the robots are within their corresponding BVCs individually, they are collision free with each other. Using the concept of BVC, \citet{Zhou2017} proposed a control policy for a group of single-integrator robots whose control inputs are velocities. Each robot can safely and continuously navigate in its BVC, given that other robots in the system also follow the same rule. However, the guarantee does not hold for double-integrator dynamics or non-holonomic robots such as differential-drive robots. 

\begin{figure*}[t]
    \centering
    \rebuttal{
    \subfloat[]{\label{subfig:vc}
       \includegraphics[width=.20\textwidth]{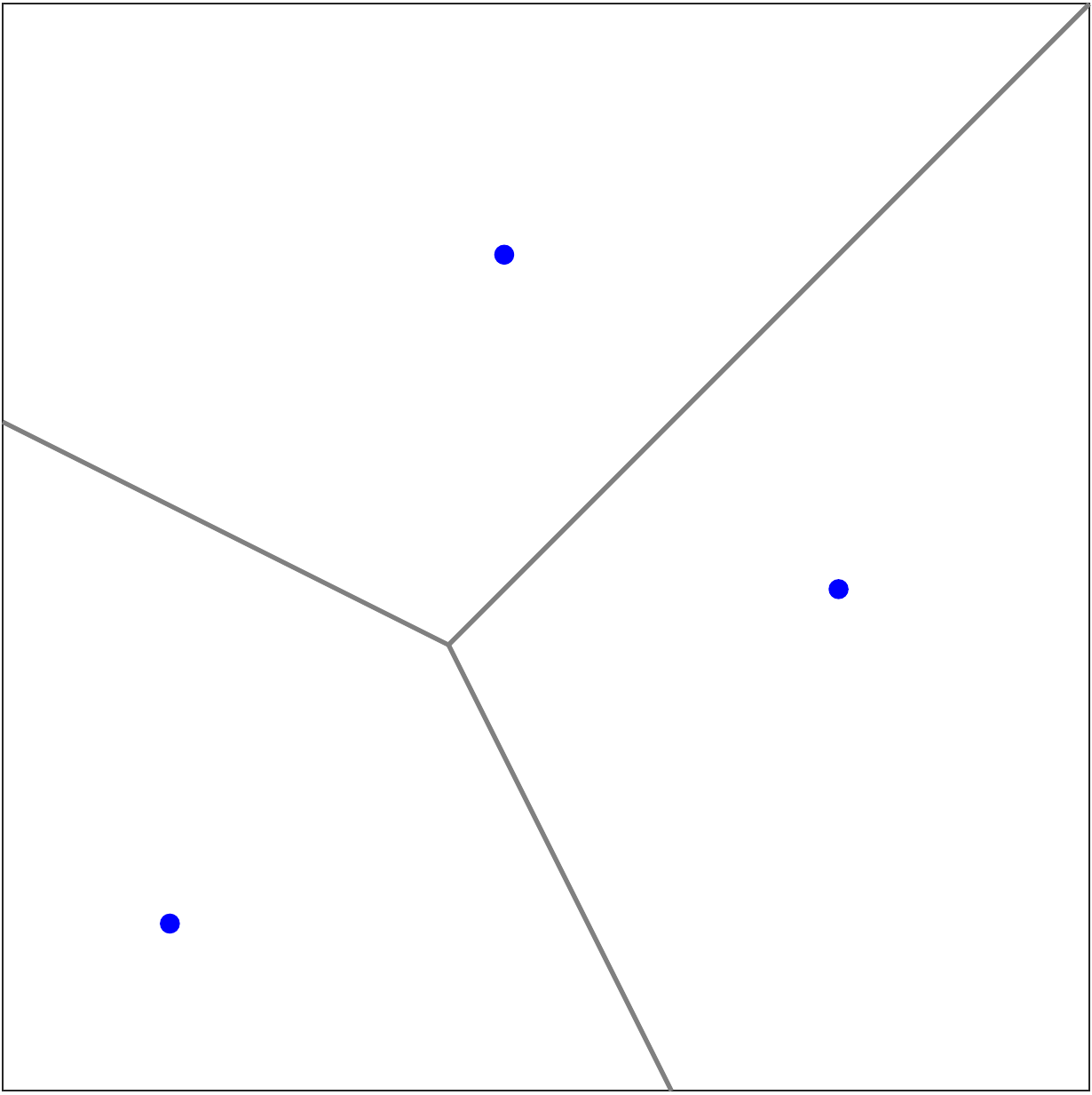}}}
    \quad
    \rebuttal{
    \subfloat[]{\label{subfig:uavc}
       \includegraphics[width=.20\textwidth]{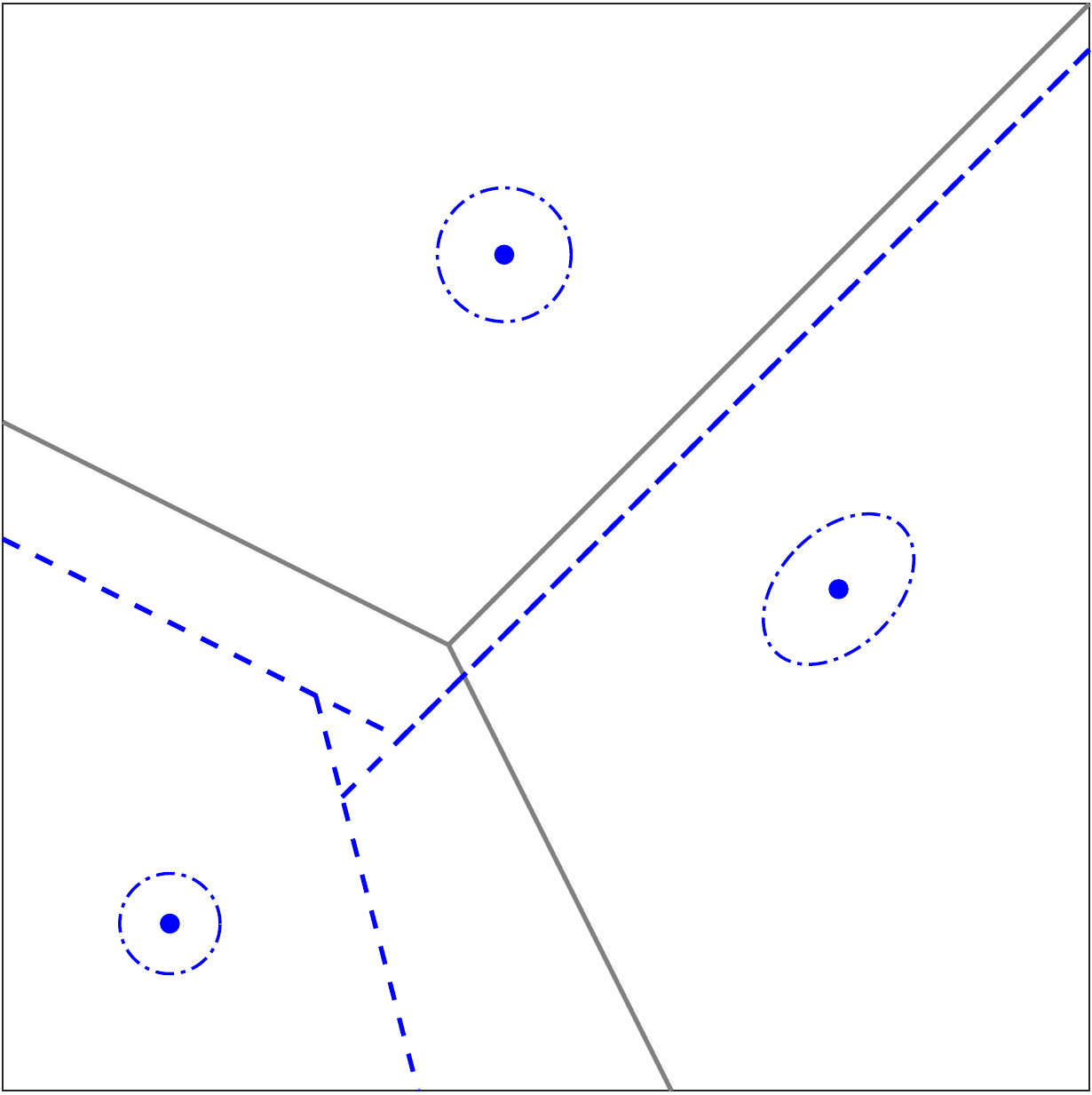}}}
    \quad
    \subfloat[]{\label{subfig:buavc1}
        \includegraphics[width=.20\textwidth]{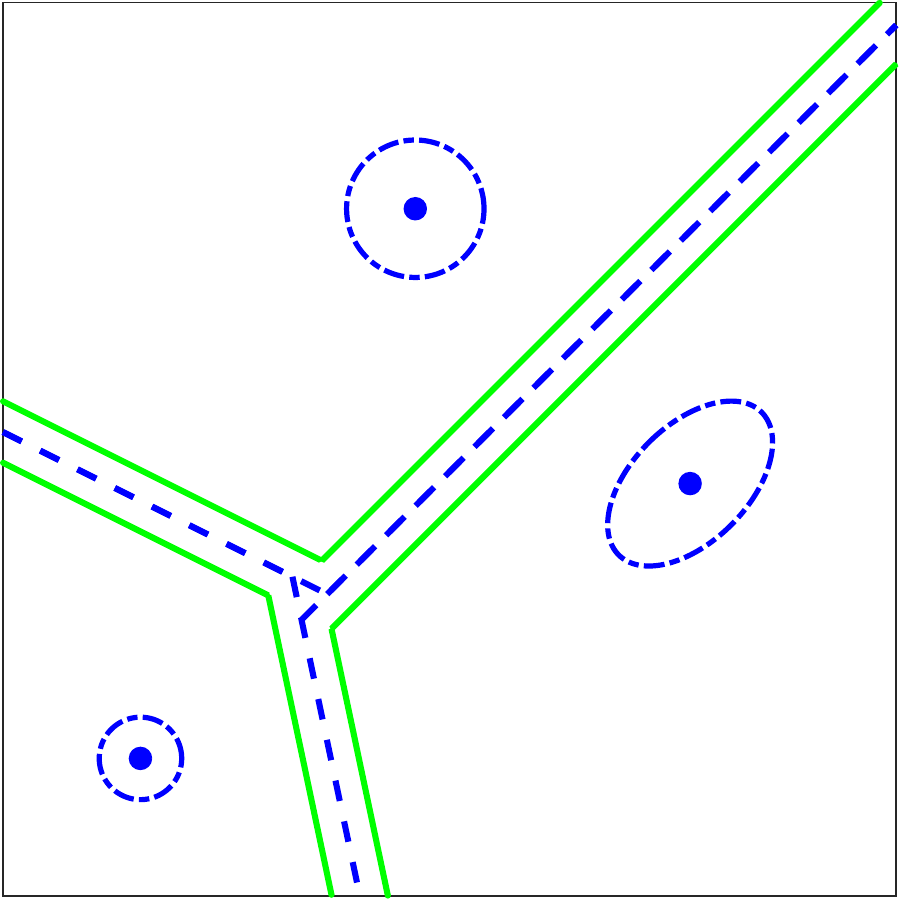}}
    \quad
    \subfloat[]{\label{subfig:buavc2}
        \includegraphics[width=.20\textwidth]{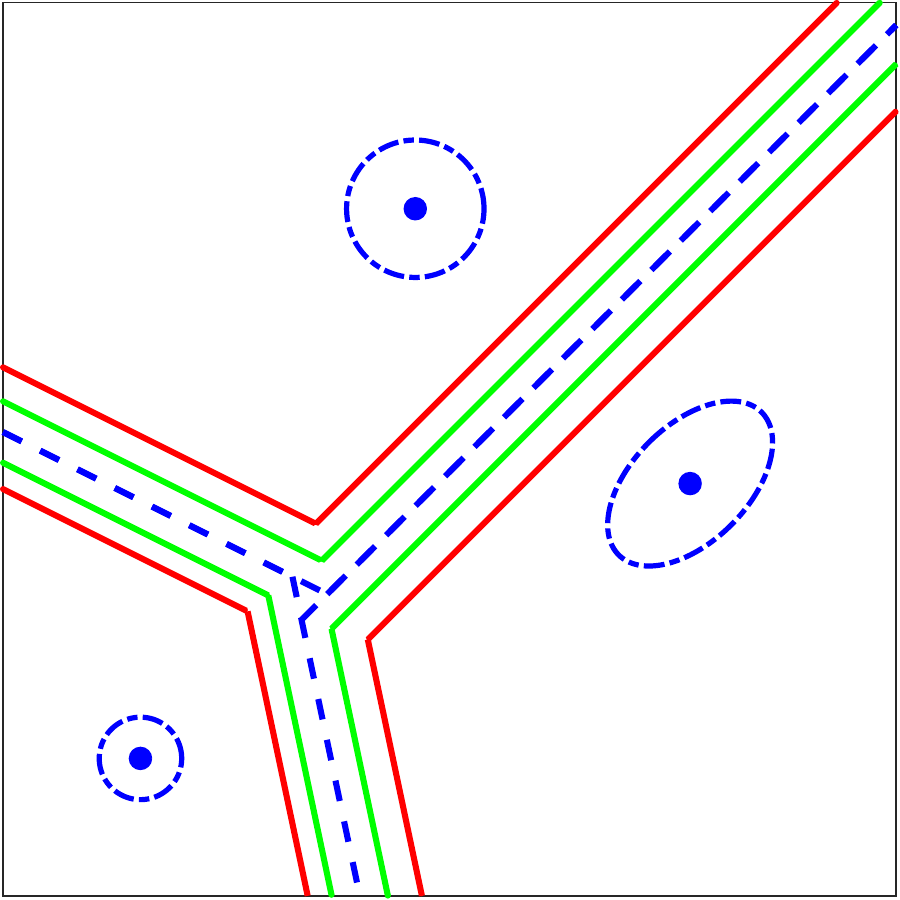}}

    \caption{Example of buffered uncertainty-aware Voronoi cells (B-UAVC). Blue dots are robots; blue dash-dot ellipses indicate the 3-$\sigma$ confidence ellipsoid of the position uncertainty. (a) Deterministic Voronoi cell (VC, the boundary in \rebuttal{gray solid} line). (b) Uncertainty-aware Voronoi cell based on the best linear separators (UAVC, the boundary in blue dashed line). (c) UAVC with robot raidus buffer (the boundary in green solid line). (d) Final B-UAVC with robot radius and collision probability buffer (the boundary in red solid line). }\label{fig:buavc}
\end{figure*}

\subsection{Shadows of Uncertain Obstacles}
To account for uncertain obstacles in the environment, we rely on the concept of obstacle shadows introduced by \citet{Axelrod2018}. The $\epsilon$-shadow is defined as follows:
\begin{definition}[$\epsilon$-Shadow]\label{def:shadow}
    A set $\cS_o \subseteq \R^d$ is an $\epsilon$-shadow of an uncertain obstacle $\cO_o$ if the probability $\pr(\cO_o \subseteq \cS_o) \geq 1 - \epsilon$.
\end{definition}
Geometrically, an $\epsilon$-shadow is a region that contains the uncertain obstacle with probability of at least $1 - \epsilon$, which can be non-unique. For example $\cS_o = \R^d$ is an $\epsilon$-shadow of any uncertain obstacle. To preclude this trivial case, the maximal $\epsilon$-shadow is defined:
\begin{definition}[Maximal $\epsilon$-Shadow]\label{def:max_shadow}
    A set $\cS_o \subseteq \R^d$ is a maximal $\epsilon$-shadow of an uncertain obstacle $\cO_o$ if the probability $\pr(\cO_o \subseteq \cS_o) = 1 - \epsilon$.
\end{definition}

The above definition ensures that if there exists a maximal $\epsilon$-shadow $\cS_o$ of the uncertain obstacle $\cO_o$ that does not intersect the robot, i.e. 
\rebuttal{$\tn{dis}(\vp_i, \cS_o) \geq r_s$}, 
then the collision probability between the robot and obstacle is below $\epsilon$, i.e. \rebuttal{$\pr({\tn{dis}}(\vp_i, \cO_o) \geq r_s) \geq 1-\epsilon$}. 
\rebuttal{Note that the maximal $\epsilon$-shadow may also be non-unique.
In this paper, we employ the method proposed by \cite{Dawson2020IROS} to construct such shadows. 
Recall that the uncertain obstacle $\cO_o$ is related to the nominal geometry $\hat{\cO}_o$ by $\cO_o = \{ \vx+\vd_o~|~ \vx\in\hat{\cO}_o, \vd_o \sim \Gau(0, \Sigma_o) \}$.}
To construct the maximal $\epsilon$-shadow, we first define the following ellipsoidal set
\begin{equation}\label{eq:shadowset}
    \cD_o = \{ \vd: \vd^T\Sigma_o^{-1}\vd \leq F^{-1}(1-\epsilon) \},
\end{equation}
where $F^{-1}(\cdot)$ is the inverse of the cumulative distribution function (CDF) of the chi-squared distribution with $d$ degrees of freedom. 
Next, Let 
\begin{equation}\label{eq:shadowobs}
    \cS_o = \hat{\cO}_o + \cD_o \rebuttal{=\{\vx + \vd~|~\vx\in\hat{\cO}_o,\vd\in\cD_o\}},
\end{equation}
be the Minkowski sum of \rebuttal{the nominal obstacle shape $\hat{\cO}_o$ and the ellipsoidal set $\cD_o$. 
Then, we have the following lemma \citep{Axelrod2018} and theorem \citep{Dawson2020IROS}:
\begin{lemma}\label{lemma:probability_Do}
    Let $\vd_o \sim \Gau(0, \Sigma_o) \in \R^d$ and $\cD_o = \{ \vd:\vd^T\Sigma_o^{-1}\vd \leq F^{-1}(1-\epsilon) \} \subset \R^d$, then $\pr(\vd_o \in \cD_o) = 1 - \epsilon$.
\end{lemma}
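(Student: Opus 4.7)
The plan is to reduce the event $\{\vd_o \in \cD_o\}$ to a chi-squared tail probability by a standard whitening transformation, and then invoke the definition of $F$ to finish.

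First, I would note that since $\Sigma_o$ is a positive-definite covariance matrix, it admits a symmetric square root $\Sigma_o^{1/2}$ (equivalently a Cholesky factor) with inverse $\Sigma_o^{-1/2}$. Setting $\vz = \Sigma_o^{-1/2}\vd_o$, standard properties of affine transformations of Gaussians give $\vz \sim \Gau(0, I)$ in $\R^d$. Substituting into the defining quadratic form of $\cD_o$ yields
\begin{equation}
\vd_o^T \Sigma_o^{-1}\vd_o \;=\; (\Sigma_o^{1/2}\vz)^T \Sigma_o^{-1}(\Sigma_o^{1/2}\vz) \;=\; \vz^T\vz \;=\; \sum_{k=1}^d z_k^2.
\end{equation}

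Next, because $z_1,\dots,z_d$ are independent standard normals, the sum $\vz^T\vz$ is by definition distributed as $\chi^2_d$, whose CDF is precisely the function $F$ appearing in the statement. Hence
\begin{equation}
\pr(\vd_o \in \cD_o) \;=\; \pr\!\bigl(\vd_o^T\Sigma_o^{-1}\vd_o \leq F^{-1}(1-\epsilon)\bigr) \;=\; F\!\bigl(F^{-1}(1-\epsilon)\bigr) \;=\; 1-\epsilon,
\end{equation}
which is the claim.

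There is no real obstacle here: the only point that deserves a sentence of justification is that $F$ is continuous and strictly increasing on the relevant range so that $F\circ F^{-1}$ is the identity; this follows because the $\chi^2_d$ distribution has a strictly positive density on $(0,\infty)$. The invertibility of $\Sigma_o$ is implicit in the statement (it appears as $\Sigma_o^{-1}$), so the square-root step is well defined. I would present the argument in essentially three lines: whiten, identify the chi-squared law, and apply $F^{-1}$.
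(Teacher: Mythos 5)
Your proof is correct and follows essentially the same route as the paper's: both whiten $\vd_o$ into a standard Gaussian (the paper factors $\Sigma_o = \Sigma_o^{\prime}\Sigma_o^{\prime T}$ and writes $\vd_o = \Sigma_o^{\prime}\vd_o^{\prime}$, while you use the symmetric square root, which is equivalent), identify the quadratic form as a $\chi^2_d$ variable, and invoke the definition of $F^{-1}$. No gaps; your added remark on the strict monotonicity of $F$ is a minor point the paper leaves implicit.
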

\begin{theorem}\label{theorem:maximal_shadow}
    $\cS_o$ is a maximal $\epsilon$-shadow of $\cO_o$.
\end{theorem}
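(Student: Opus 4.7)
The plan is to reduce the event $\{\cO_o \subseteq \cS_o\}$ to the simpler event $\{\vd_o \in \cD_o\}$, whose probability is already delivered by Lemma \ref{lemma:probability_Do}. The key observation is that, for any realization of $\vd_o$, the random set $\cO_o$ is just the rigid translation $\hat{\cO}_o + \{\vd_o\}$ of the deterministic nominal obstacle, so whether $\cO_o$ fits inside the inflated set $\cS_o = \hat{\cO}_o + \cD_o$ should be determined entirely by whether the translation vector $\vd_o$ itself lies in $\cD_o$.

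First I would dispatch the easy direction: if $\vd_o \in \cD_o$, then for every $\vx \in \hat{\cO}_o$ we have $\vx + \vd_o \in \hat{\cO}_o + \cD_o = \cS_o$, so $\cO_o \subseteq \cS_o$. Combined with Lemma \ref{lemma:probability_Do}, this already gives $\pr(\cO_o \subseteq \cS_o) \geq \pr(\vd_o \in \cD_o) = 1 - \epsilon$, which establishes that $\cS_o$ is an $\epsilon$-shadow but not yet a \emph{maximal} one.

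For the maximality I need the converse implication $\cO_o \subseteq \cS_o \Rightarrow \vd_o \in \cD_o$, and here I would argue by contrapositive using a separating hyperplane. Suppose $\vd_o \notin \cD_o$. Since $\cD_o$ is a closed convex set (a sublevel set of a positive-definite quadratic form), there exist a unit vector $\vn \in \R^d$ and a scalar $c$ with $\vn^T \vd_o > c \geq \vn^T \vd$ for every $\vd \in \cD_o$. Let $\vx^\star \in \argmax_{\vx \in \hat{\cO}_o} \vn^T \vx$, which exists because $\hat{\cO}_o$ is a compact polytope. Then for every $\vx \in \hat{\cO}_o$ and $\vd \in \cD_o$, $\vn^T(\vx^\star + \vd_o) > \vn^T \vx^\star + c \geq \vn^T \vx + \vn^T \vd = \vn^T(\vx + \vd)$, so the point $\vx^\star + \vd_o$ is strictly separated from $\cS_o$. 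Since $\vx^\star + \vd_o \in \cO_o$, this shows $\cO_o \not\subseteq \cS_o$, as required.

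Combining the two directions, the events $\{\cO_o \subseteq \cS_o\}$ and $\{\vd_o \in \cD_o\}$ coincide, and Lemma \ref{lemma:probability_Do} then yields $\pr(\cO_o \subseteq \cS_o) = 1 - \epsilon$ exactly, which is the maximality property in Definition \ref{def:max_shadow}. I expect the main obstacle to be the converse direction, as it is the only place where the geometry of $\hat{\cO}_o$ and $\cD_o$ is used non-trivially; the separation argument must pick the extremal vertex of $\hat{\cO}_o$ along the right direction $\vn$ so that the translated nominal obstacle actually protrudes out of the shadow.
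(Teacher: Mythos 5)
Your proof is correct, and it is genuinely tighter than the one in the paper. The paper argues pointwise: it reduces $\pr(\cO_o \subseteq \cS_o)$ to the claim that each individual point $\vy = \vx + \vd_o$ of $\cO_o$ satisfies $\pr(\vy \in \cS_o) = \pr(\vd_o \in \cD_o) = 1-\epsilon$, and then concludes. Both steps of that reduction are loose: for a fixed $\vx \in \hat{\cO}_o$ one only has $\vy \in \cS_o \Leftrightarrow \vd_o \in (\hat{\cO}_o - \vx) + \cD_o \supseteq \cD_o$, so the pointwise probability is in general only $\geq 1-\epsilon$; and set containment is the intersection of uncountably many pointwise events, so exact pointwise probabilities would not by themselves yield an exact probability for the containment. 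You instead establish the event identity $\{\cO_o \subseteq \cS_o\} = \{\vd_o \in \cD_o\}$: the inclusion $\vd_o \in \cD_o \Rightarrow \cO_o \subseteq \cS_o$ is immediate from the Minkowski-sum structure, and the converse follows from your separating-hyperplane argument, where strictly separating $\vd_o$ from the closed convex set $\cD_o$ by a direction $\vn$ and translating the $\vn$-extremal vertex $\vx^\star$ of the compact polytope $\hat{\cO}_o$ produces a point of $\cO_o$ strictly outside $\cS_o$. This makes the appeal to Lemma \ref{lemma:probability_Do} give the exact value $1-\epsilon$ required for maximality. What your route buys is rigor on precisely the ``maximal'' half of the claim (the $\leq 1-\epsilon$ direction), which the paper's proof does not actually establish; what it costs is the extra convex-geometry step, which relies on $\cD_o$ being closed and convex and $\hat{\cO}_o$ being compact --- both of which hold here. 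One could shorten your converse slightly by noting $0 \in \cD_o$ and working with the support function of $\cS_o$, but the argument as you give it is sound.
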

}
\rebuttal{Proofs of the above lemma and theorem are given in Appendix \ref{appendix:proof_probability_Do} and \ref{appendix:proof_maximal_shadow}.}

\section{Buffered Uncertainty-Aware Voronoi Cells} \label{sec:method_1}
In this section, we formally introduce the concept of buffered uncertainty-aware Voronoi cells (B-UAVC) and give its construction method. 

\subsection{Definition of B-UAVC}\label{subsec:defBUAVC}
Our objective is to obtain a probabilistic safe region for each robot in the workspace given the robots and obstacles positions, and taking into account their uncertainties.

\begin{definition}[Buffered Uncertainty-Aware \\Voronoi Cell]\label{def:buavc}
    Given a team of robots $i \in \{1,\dots,n\}$ with positions mean $\hp_i \in \R^d$ and covariance $\Sigma_i \in \R^{d\times d}$, and a set of convex polytope obstacles $o \in \{1,\dots,m\}$ with known shapes and locations mean $\hp_o \in \R^d$ and covariance $\Sigma_o \in \R^{d\times d}$, the buffered uncertainty-aware Voronoi cell (B-UAVC) of each robot is defined as a convex polytope region:
    \begin{align}
        \cV_i^{u,b} = \{ \vp\in\R^d: \va_{ij}^T\vp &\leq b_{ij} - \beta_{ij}, \forall j\neq i, j\in\cI, \\ \tn{and~~} \va_{io}^T\vp &\leq b_{io} - \beta_{io}, \forall o\in\cI_o \},
    \end{align}
    such that the probabilistic collision free constraints in Definition \ref{def:pcollfree} are satisfied. 
\end{definition}

In the above B-UAVC definition, $\va_{ij}, \va_{io} \in \R^d$ and $b_{ij}, b_{io} \in \R$ are parameters of the hyperplanes that separate the robot from other robots and obstacles, which results in a decomposition of the workspace. $\beta_{ij}$ and $\beta_{io}$ are additional buffer terms added to retract the decomposed space for probabilistic collision avoidance. Accordingly, we further define 
\begin{align}\label{eq:uavc}
    \cV_i^{u} = \{ \vp\in\R^d: \va_{ij}^T\vp &\leq b_{ij}, \forall j\neq i, j\in\cI, \\ \tn{and~~} \va_{io}^T\vp &\leq b_{io}, \forall o\in\cI_o \},
\end{align}
that does not include buffer terms to be the uncertainty-aware Voronoi cell (UAVC) of robot $i$. 

It can be observed the UAVC and B-UAVC of robot $i$ are the intersection of the following:
\begin{enumerate}
    \item $n-1$ half-space hyperplanes separating robot $i$ from robot $j$ for all $j\neq i, j\in \cI$;
    \item $m$ half-space hyperplanes separating robot $i$ from obstacle $o$ for all $o \in \cI_o$.
\end{enumerate}
In the following, we will describe how to calculate the separating hyperplanes with parameters $(\va_{ij}, b_{ij})$ and $(\va_{io}, b_{io})$ that construct the UAVC and then the corresponding buffer terms $\beta_{ij}, \beta_{io}$ constructing the B-UAVC for probabilistic collision avoidance.

\subsection{Inter-Robot Separating Hyperplane}\label{subsec:robotPlane}
In contrast to only separating two deterministic points in Voronoi cells, we separate two uncertain robots with known positions mean and covariance. To achieve that, we rely on the concept of the best linear separator between two Gaussian distributions \citep{Anderson1962}.

Given $\vp_i \sim \cN(\hp_i,\Sigma_i)$ and $\vp_j \sim \cN(\hp_j,\Sigma_j)$, consider a linear separator $\va_{ij}^T\vp = b_{ij}$ where $\va_{ij} \in \R^d$ and $b_{ij} \in \R$. The separator classifies the points $\vp$ in the space into two clusters: $\va_{ij}^T\vp \leq b_{ij}$ to the first one while $\va_{ij}^T\vp > b_{ij}$ to the second. The separator parameters $\va_{ij}$ and $b_{ij}$ can be obtained by minimizing the maximal probability of misclassification.

The misclassification probability when $\vp$ is from the first distribution is 
\begin{equation*}
    \begin{aligned}
        \pr_i(\va_{ij}^T\vp > b_{ij}) &= \pr_i\left(\frac{\va_{ij}^T\vp - \va_{ij}^T\hp_i}{\sqrt{\va_{ij}^T\Sigma_i\va_{ij}}} > \frac{b_{ij} - \va_{ij}^T\hp_i}{\sqrt{\va_{ij}^T\Sigma_i\va_{ij}}}\right) \\ 
        &= 1 - \Phi((b_{ij} - \va_{ij}^T\hp_i)/\sqrt{\va_{ij}^T\Sigma_i\va_{ij}}),
    \end{aligned}
\end{equation*}
where $\Phi(\cdot)$ denotes the cumulative distribution function (CDF) of the standard normal distribution. 
Similarly, the misclassification probability when $\vp$ is from the second distribution is 
\begin{equation*}
    \begin{aligned}
        \pr_j(\va_{ij}^T\vp \leq b_{ij}) &= \pr_j\left(\frac{\va_{ij}^T\vp - \va_{ij}^T\hp_j}{\sqrt{\va_{ij}^T\Sigma_j\va_{ij}}} \leq \frac{b_{ij} - \va_{ij}^T\hp_j}{\sqrt{\va_{ij}^T\Sigma_j\va_{ij}}}\right) \\ 
        &= 1 - \Phi((\va_{ij}^T\hp_j - b_{ij})/\sqrt{\va_{ij}^T\Sigma_j\va_{ij}}).
    \end{aligned}
\end{equation*}

The objective is to minimize the maximal value of $\pr_i$ and $\pr_j$, i.e. 
\begin{equation}\label{eq:best_linear_separator}
    (\va_{ij}, b_{ij}) = \arg\underset{\va_{ij}\in\R^d,b_{ij}\in\R}{\min\max}(\pr_i, \pr_j),
\end{equation}
which can be solved using a fast minimax procedure. 
\rebuttal{In this paper, we employ the procedure developed by \cite{Anderson1962} to compute the best linear separator parameters $\va_{ij}$ and $b_{ij}$. A brief summary of the procedure is presented in Appendix \ref{appendix:best_linear_separator}.}

\begin{remark}\label{rmk:sepPlane}
    The best linear separator coincides with the separating hyperplane of Eq. (\ref{eq:vcPlane}) when $\Sigma_i = \Sigma_j = \sigma^2 I$. In this case, $\va_{ij} = \frac{2}{\sigma^2}(\hp_j - \hp_i)$ and $b_{ij} = \frac{1}{\sigma^2}(\hp_j-\hp_i)^T(\hp_i+\hp_j)$.
\end{remark}

\begin{remark}\label{rmk:sepMutual}
    $\forall i\neq j \in \cI, \va_{ji} = -\va_{ij}, b_{ji} = -b_{ij}$. This can be obtained according to the definition of the best linear separator.
\end{remark}

\begin{remark}\label{rmk:uavcNotFull}
    In contrast to deterministic Voronoi cells, the UAVCs constructed from the best linear separators generally do not constitute a full tessellation of the workspace, i.e. $\bigcup_1^n\cV_i^u \subseteq \cW$, as shown in Fig. \ref{subfig:uavc}.
\end{remark}

\subsection{Robot-Obstacle Separating Hyperplane}\label{subsec:obsPlane}

\begin{figure}[t]
    \centering
    \includegraphics[width=0.45\textwidth]{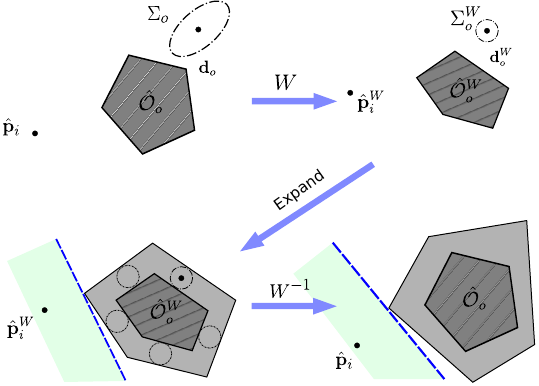}
    \caption{Depiction of uncertainty-aware separating hyperplane calculation between a point and an arbitrary polytope obstacle with uncertain location. (Top left) A point and a polytope obstacle with uncertain location. (Top right) Effects of the transformation $W$ to normalize the error covariance. (Bottom left) $\epsilon$-shadow of the transformed obstacle and the max-margin separating hyperplane in the transformation space. (Bottom right) Inverse transformation to obtain the uncertainty-aware separating hyperplane. }%
    \label{fig:poly_plane}%
\end{figure}

Our method to calculate the uncertainty-aware separating hyperplane between a robot and a convex polytope obstacle with uncertain location is illustrated in Fig. \ref{fig:poly_plane}. Given the mean position of the robot $\hp_i$ and the \rebuttal{uncertain obstacle $\cO_o = \{\vx + \vd_o~|~\vx\in\hat{\cO}_o, \vd_o \sim \Gau(0, \Sigma_o)$\}}, we first perform a linear coordinate transformation: 
\begin{equation}\label{eq:coordinate_transformation}
    W = (\sqrt{\Sigma_o})^{-1},
\end{equation}
Under the transformation, the robot mean position and obstacle information become
\begin{align}
    {\hp}_i^W &= W\hp_i, \\
    \hat{\cO}_o^W &= W\hat{\cO}_o, \\
    {\vd}_o^W &= W\vd_o, \\
    {\Sigma}_o^W &= W\Sigma_o W^T = I^{d\times d}.
\end{align}
\rebuttal{The transformed uncertain obstacle is then $\cO_o^W = \{\vx^W + \vd_o^W~|~\vx^W \in \hat{\cO}_o^W, \vd_o^W\sim\Gau(0,I)\}$.} Here we use the super-script ${\cdot}^W$ to indicate variables in the transformed space. Note that the obstacle position uncertainty covariance is normalized to an identity matrix under the transformation, as shown in Fig. \ref{fig:poly_plane} (Top right). 
\rebuttal{This coordinate transformation technique to normalize the uncertainty covariance has also been applied to other motion planning under uncertainty works \citep{Hardy2013}.}

Then given the collision probability threshold $\delta$, we compute a $\epsilon$-shadow of the transformed uncertain obstacle $\cO_o^W$ based on Eqs. (\ref{eq:shadowset})-(\ref{eq:shadowobs}):
\begin{align}
    {\cD}_o^W &= \{{\vd}^W: {{\vd}^{W}}^T{\vd}^W \leq F^{-1}(1-\epsilon) \}, \label{eq:transD} \\ 
    {\cS}_o^W &= \hat{\cO}_o^W + {\cD}_o^W,  \label{eq:transShadow}
\end{align}
where $\epsilon = 1-\sqrt{1-\delta}$, making that $\pr({\cO}_o^W \subseteq {\cS}_o^W) = \sqrt{1-\delta}$.

\rebuttal{
Note that we assume $\hat{\cO}_o$ is a convex polytope. Hence, the transformed $\hat{\cO}_o^W$ is also a polytope. In addition, it can be observed the set ${\cD}_o^W$ defined in Eq. (\ref{eq:transD}) is a circular (sphere in 3D) set with radius $\sqrt{F^{-1}(1-\epsilon)}$. Hence, we can compute the $\epsilon$-shadow in Eq. (\ref{eq:transShadow}) of the transformed uncertain obstacle by dilating its nominal shape by the diameter of the set ${\cD}_o^W$, which results in an inflated convex polytope. Note that the resulted convex polytope is slightly larger than the exact Minkowski sum $\cS_o^W$ which has smaller round corners. This introduces some conservativeness. For simplicity, we use the same notation $\cS_o^W$ for the resulted inflated convex polytope and thus there is $\pr({\cO}_o^W \subseteq {\cS}_o^W) > \sqrt{1-\delta}$.
}

Next, we separate ${\hp}_i^W$ from ${\cS}_o^W$ by finding a max-margin separating hyperplane between them. Note that ${\cS}_o^W$ is a bounded convex polytope that can be described by a list of vertices $({\psi}_1^W, \dots, {\psi}_{p_o}^W)$. Hence, finding a max-margin hyperplane between ${\hp}_i^W$ and ${\cS}_o^W$ can be formulated as a support vector machine (SVM) problem \citep{Honig2018}, which can be efficiently solved using a quadratic program:
\begin{equation}\label{eq:quadprog}
    \begin{aligned}
        \min \quad & {\va_{io}^{W}}^T{\va}_{io}^W \\
        \tn{s.t.} \quad & {{\va}_{io}^W}^T{\hp}_{io}^W - {b}_{io}^W \leq 1, \\
        & {{\va}_{io}^W}^T{\psi}_k^W - {b}_{io}^W \geq 1, ~~\forall~k \in 1,\dots, p_o.
    \end{aligned}
\end{equation}

The solution of the above quadratic program (\ref{eq:quadprog}) formulates a max-margin separating hyperplane with parameters $({\va}_{io}^W, {b}_{io}^W)$. We then shift it along its normal vector towards the obstacle shadow, resulting in a separating hyperplane exactly touching the shadow, as shown in Fig. \ref{fig:poly_plane} (Bottom left). Finally we perform an inverse coordinate transformation $W^{-1}$ and obtain the uncertainty-aware separating hyperplane between the robot and obstacle in the original workspace:
\begin{equation}\label{eq:an_obs}
    \begin{aligned}
        \va_{io} &= W^{T}{\va}_{io}^W, \\
        b_{io} &= {b}_{io}^W,
    \end{aligned}
\end{equation}
as shown in Fig. \ref{fig:poly_plane} (Bottom right), \rebuttal{in which the $\epsilon$-shadow in the transformed space $\cS_o^W$ becomes $\cS_o$ in the original space.}

\begin{remark}\label{rmk:shadow}  
    The linear coordinate transformation $W$ and its inverse $W^{-1}$ preserves relative geometries of $\cO_o$. That is, $\pr(\cO_o \subseteq \cS_o) = \pr({\cO}_o^W \subseteq {\cS}_o^W) > \sqrt{1-\delta}$.
\end{remark}

\subsection{Collision Avoidance Buffer and B-UAVC}\label{subsec:buffer}
In Section \ref{subsec:robotPlane} and \ref{subsec:obsPlane} we have described the method to compute the hyperplanes that construct the UAVC. Now we introduce two buffer terms to the UAVC, to account for the robot physical safety radius and the collision probability threshold.

Recall Eq. (\ref{eq:uavc}) that the UAVC of robot $i$ can be written as the intersection of a set of separating hyperplanes
\begin{equation*}
    \begin{aligned}
        \cV_i^{u} = \{ \vp\in\R^d: \va_{ij}^T\vp &\leq b_{ij}, \forall j\neq i, j\in\cI, \\ \tn{and~~} \va_{io}^T\vp &\leq b_{io}, \forall o\in\cI_o \},
    \end{aligned}
\end{equation*}
Let \rebuttal{$l\in\cI_l = \{1,\cdots,n,n+1,\cdots,n+m\}, l \neq i$} denote any other robot or obstacle, we can write the UAVC in the following form 
\begin{equation}
    \cV_i^u = \{ \vp\in\R^d: \va_{il}^T\vp \leq b_{il}, \forall l\in\cI_l, l\neq i \}.
\end{equation}
which combines the notations for inter-robot and robot-obstacle separating hyperplanes. Next, we will describe the computation method of probabilistic collision avoidance buffer to extend the UAVC to B-UAVC. 

\subsubsection{Robot safety radius buffer}
We compute the robot safety radius buffer by shifting the boundary of the UAVC towards the robot by a distance equal to the robot's radius. Hence the corresponding buffer for the hyperplane $(\va_{il}, b_{il})$ is 
\begin{equation}\label{eq:radius_buffer}
    \beta_i^r = \rebuttal{r_s}\norm{\va_{il}}.
\end{equation}
Figure \ref{subfig:buavc1} shows the buffered UAVC of each robot after taking into account their safety radius.

\subsubsection{Collision probability buffer}
\rebuttal{To achieve probabilistic collision avoidance,}
we further compute a buffer term $\beta_i^\delta$, which is defined as
\begin{equation}\label{eq:delta_buffer}
    \beta_i^\delta = \sqrt{2\va_{il}^T\Sigma_i\va_{il}}\cdot\tn{erf}^{-1}(2\sqrt{1-\delta}-1),
\end{equation}
where $\tn{erf}(\cdot)$ 
is the Gauss error function \citep{Andrews1997} defined as $\tn{erf}(x) = \frac{2}{\sqrt{\pi}}\int_0^xe^{-t^2}dt$ and $\tn{erf}^{-1}(\cdot)$ is its inverse. In this paper, we assume the threshold satisfies $0 < \delta < 0.75 $, which is reasonable in practice. Hence, $\tn{erf}^{-1}(2\sqrt{1-\delta}-1) > 0, \beta_i^\delta>0$. 
This buffer can be obtained by following the proof of forthcoming Theorem \ref{thm:buavc_robot} and Theorem \ref{thm:buavc_obs}.

Finally, the buffered uncertainty-aware Voronoi cell (B-UAVC) is obtained by combining the two buffers  
\begin{equation}\label{eq:BUAVC}
    \begin{aligned}
        \cV_{i}^{u, b} 
        = \{ \vp \in \R^d: 
        \va_{il}^{T}\vp \leq b_{il} - \beta_i^r - \beta_i^\delta, 
        \forall l\in\cI_l, l\neq i \}.
    \end{aligned}
\end{equation}
Figure \ref{subfig:buavc2} shows the final B-UAVC of each robot in the team.

\subsection{Properties of B-UAVC}\label{subsec:prop_buavc}
In this subsection, we justify the design of $\epsilon$ in Eq. (\ref{eq:transD}) when computing the shadow of uncertain obstacles, and computation of the collision probability buffer $\beta_i^\delta$ in Eq. (\ref{eq:delta_buffer}) by presenting the following two theorems. 

\begin{theorem}[Inter-Robot Probabilistic Collision Free]\label{thm:buavc_robot}
    $\forall \vp_i \sim \cN(\hp_i, \Sigma_i)$ and $\vp_j \sim \cN(\hp_j, \Sigma_j)$, where $\hp_i \in \cV_{i}^{u,b}$ and $\hp_j \in \cV_{j}^{u,b}, i \neq j \in \cI$, we have 
    \begin{equation*}
        \pr(\tn{dis}(\vp_i,\vp_j) \geq 2r_s) \geq 1-\delta,
    \end{equation*}
    i.e. the probability of collision between robots $i$ and $j$ is below the threshold $\delta$.
\end{theorem}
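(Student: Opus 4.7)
The plan is to reduce the $d$-dimensional no-collision event $\{\norm{\vp_i-\vp_j}\geq 2r_s\}$ to a one-dimensional separation event along the inter-robot hyperplane normal $\va_{ij}$, and then factor the resulting probability into two independent marginals, each of which the probability buffer $\beta_i^\delta$ has been calibrated to bound below by $\sqrt{1-\delta}$. Multiplying these two marginals (using $\sqrt{1-\delta}\cdot\sqrt{1-\delta}=1-\delta$) will give the desired tail probability.

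Concretely, I would introduce two ``one-sided'' half-spaces built from the \emph{same} (unbuffered) hyperplane $\va_{ij}^T\vp=b_{ij}$, shifted inward by only the radius buffer:
\begin{align*}
\cH_{ij}^i&=\{\vp : \va_{ij}^T\vp \leq b_{ij}-r_s\norm{\va_{ij}}\},\\
\cH_{ij}^j&=\{\vp : \va_{ij}^T\vp \geq b_{ij}+r_s\norm{\va_{ij}}\}.
\end{align*}
If $\vp_i\in\cH_{ij}^i$ and $\vp_j\in\cH_{ij}^j$ simultaneously, then $\va_{ij}^T(\vp_j-\vp_i)\geq 2r_s\norm{\va_{ij}}$; dividing by $\norm{\va_{ij}}$ and using $\norm{\vx}\geq|\vn^T\vx|$ for any unit vector $\vn$ yields $\norm{\vp_j-\vp_i}\geq 2r_s$. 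By independence of the two robot state estimates,
\begin{equation*}
\pr(\tn{dis}(\vp_i,\vp_j)\geq 2r_s)\geq \pr(\vp_i\in\cH_{ij}^i)\,\pr(\vp_j\in\cH_{ij}^j).
\end{equation*}

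Each marginal reduces to a one-dimensional Gaussian tail calculation. From $\hp_i\in\cV_i^{u,b}$, the $(i,j)$ face gives $\va_{ij}^T\hp_i\leq b_{ij}-r_s\norm{\va_{ij}}-\sqrt{2\va_{ij}^T\Sigma_i\va_{ij}}\,\tn{erf}^{-1}(2\sqrt{1-\delta}-1)$. Standardizing $\va_{ij}^T\vp_i\sim\Gau(\va_{ij}^T\hp_i,\va_{ij}^T\Sigma_i\va_{ij})$ and applying the identity $\Phi(\sqrt{2}x)=\tfrac{1}{2}(1+\tn{erf}(x))$ then yields $\pr(\vp_i\in\cH_{ij}^i)\geq\sqrt{1-\delta}$. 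For the second marginal I would first apply Remark~\ref{rmk:sepMutual} to rewrite the constraint $\hp_j\in\cV_j^{u,b}$ from $(\va_{ji},b_{ji})$ to $(-\va_{ij},-b_{ij})$, whereupon the same standardization produces $\pr(\vp_j\in\cH_{ij}^j)\geq\sqrt{1-\delta}$. Multiplying the two bounds gives $\pr(\tn{dis}(\vp_i,\vp_j)\geq 2r_s)\geq 1-\delta$, which also retroactively justifies the exact form of $\beta_i^\delta$ in Eq.~(\ref{eq:delta_buffer}).

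The hard part is the $\Phi$/$\tn{erf}$ bookkeeping: one must verify that $2\sqrt{1-\delta}-1\in(0,1)$ so that $\tn{erf}^{-1}$ is well-defined and positive, which is precisely why the paper restricts $\delta<3/4$. A subtler point is that the half-space decomposition must be taken against the \emph{common} hyperplane $\va_{ij}^T\vp=b_{ij}$, not two separately buffered ones; Remark~\ref{rmk:sepMutual} is what allows robot $j$'s B-UAVC constraint, naturally written in terms of $\va_{ji}$, to be re-expressed against this shared hyperplane so that the two one-sided events stitch together to imply $\norm{\vp_j-\vp_i}\geq 2r_s$.
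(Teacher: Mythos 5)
Your proposal is correct and follows essentially the same route as the paper: a per-robot linear chance-constraint bound of $\sqrt{1-\delta}$ on each side of the shared separating hyperplane (the paper packages the standardization step as Lemma \ref{lem:linChance}, which is equivalent to your $\Phi(\sqrt{2}x)=\tfrac{1}{2}(1+\tn{erf}(x))$ computation), the use of Remark \ref{rmk:sepMutual} to stitch the two half-space events into $\norm{\vp_i-\vp_j}\geq 2r_s$, and multiplication of the two independent marginals. Your version is, if anything, slightly more explicit than the paper's about the Cauchy--Schwarz step that the paper dismisses as trivial and about why $\delta<3/4$ is needed.
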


\begin{proof}
    We first introduce the following lemma:
    \begin{lemma}[Linear Chance Constraint \citep{Blackmore2011}]\label{lem:linChance}
        A multivariate random variable $\vx \sim \cN(\hx, \Sigma)$ satisfies
        \begin{equation}\label{eq:linChance}
            \pr(\va^T\vx \leq b) = \half + \half\tn{erf}\left( \frac{b-\va^T\hx}{\sqrt{2\va^T\Sigma\va}} \right).
        \end{equation}
    \end{lemma}

    According to Eq. (\ref{eq:BUAVC}), if $\hp_i\in\cV_i^{u,b}$, there is
    \begin{equation}\label{eq:iBUAVC}
        \va_{ij}^T\hp_i \leq b_{ij} - r_s\norm{\va_{ij}} - \sqrt{2\va_{ij}^T\Sigma_i\va_{ij}}\cdot\tn{erf}^{-1}(2\sqrt{1-\delta}-1).
    \end{equation}
    Applying Lemma \ref{lem:linChance} and substituting the above equation, we have
    \begin{equation}\label{eq:iBUAVC_r}
        \begin{aligned}
            \pr(\va_{ij}\vp_i &\leq b_{ij} - r_s\norm{\va_{ij}}) \\
            &= \half + \half\tn{erf}\left( \frac{b_{ij} - r_s\norm{\va_{ij}})-\va_{ij}^{T}\hp_{i}}{\sqrt{2\va_{ij}^{T}\va_{ij}}} \right) \\ 
            &\geq \half + \half\tn{erf}\left( \tn{erf}^{-1}(2\sqrt{1-\delta}-1) \right) \\ 
            &= \half + \half(2\sqrt{1-\delta}-1) \\ 
            &= \sqrt{1-\delta}.
        \end{aligned}
    \end{equation}
    Similarly for robot $j$, there is
    \begin{equation}
        \pr(\va_{ji}\vp_j \leq b_{ji} - r_s\norm{\va_{ji}}) \geq \sqrt{1-\delta}.
    \end{equation}
    Note that $\va_{ij} = -\va_{ji}$, $b_{ij} = -b_{ji}$ (Remark \ref{rmk:sepMutual}). It is trivial to prove that
    \begin{equation}
        \left.\begin{aligned}
                \va_{ij}\vp_i \leq b_{ij} - r_s\norm{\va_{ij}}\\
                \va_{ji}\vp_j \leq b_{ji} - r_s\norm{\va_{ji}}
              \end{aligned}
        \right\}
        \implies \norm{\vp_i-\vp_j}\geq 2r_s.
    \end{equation}
    Hence, we have 
    \begin{equation}
        \begin{aligned}
            &\pr(\tn{dis}(\vp_i,\vp_j) \geq 2r_s) = \pr(\norm{\vp_i-\vp_j}\geq 2r_s) \\ 
            &\geq \pr(\va_{ij}\vp_i \leq b_{ij} - r_s\norm{\va_{ij}})\cdot\pr(\va_{ji}\vp_j \leq b_{ji} - r_s\norm{\va_{ji}}) \\ 
            &\geq \sqrt{1-\delta}\cdot\sqrt{1-\delta} \\ 
            &= 1-\delta.
        \end{aligned}
    \end{equation}
    This completes the proof.
    \qed
\end{proof}

\begin{theorem}[Robot-Obstacle Probabilistic Collision Free]\label{thm:buavc_obs}
    $\forall \vp_i \sim \cN(\hp_i,\Sigma_i)$, where $\hp_i \in \cV_{i}^{u,b}$, we have $\pr(\tn{dis}(\vp_i, \cO_o)\geq r_s) \geq 1-\delta$, i.e. the probability of collision between robot $i$ and obstacle $o$ is below the threshold $\delta$.
\end{theorem}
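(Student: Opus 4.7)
The plan is to decompose the collision event into two independent pieces, each controlled at level $\sqrt{1-\delta}$, mirroring the structure of the proof of Theorem~\ref{thm:buavc_robot} but with the second piece using the obstacle $\epsilon$-shadow from Remark~\ref{rmk:shadow} in place of a mirror-image hyperplane constraint on a second robot. Concretely, the two events I will intersect are: (E1) the robot sample $\vp_i$ lies on the ``robot side'' of the hyperplane $\va_{io}^T\vp = b_{io}$ by at least the safety radius $r_s$; and (E2) the uncertain obstacle $\cO_o$ is contained in its $\epsilon$-shadow $\cS_o$. Event (E2) is handled directly by Remark~\ref{rmk:shadow}, which gives $\pr(\cO_o \subseteq \cS_o) \geq \sqrt{1-\delta}$.

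For (E1), I would start from $\hp_i \in \cV_i^{u,b}$ which by Eq.~(\ref{eq:BUAVC}) yields
\begin{equation*}
    \va_{io}^T\hp_i \leq b_{io} - r_s\norm{\va_{io}} - \sqrt{2\va_{io}^T\Sigma_i\va_{io}}\cdot\tn{erf}^{-1}(2\sqrt{1-\delta}-1),
\end{equation*}
and then apply Lemma~\ref{lem:linChance} to the linear functional $\va_{io}^T\vp_i$ (which is univariate Gaussian with mean $\va_{io}^T\hp_i$ and variance $\va_{io}^T\Sigma_i\va_{io}$). This is exactly the computation carried out in Eq.~(\ref{eq:iBUAVC_r}), so it yields
\begin{equation*}
    \pr\bigl(\va_{io}^T\vp_i \leq b_{io} - r_s\norm{\va_{io}}\bigr) \geq \sqrt{1-\delta}.
\end{equation*}

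The next step is the geometric bridge from (E1) and (E2) to the collision-free event. By construction of the separating hyperplane in Section~\ref{subsec:obsPlane} (Eq.~(\ref{eq:quadprog}) together with the post-shift that places the hyperplane tangent to the shadow), every $\vp \in \cS_o$ satisfies $\va_{io}^T\vp \geq b_{io}$ in the original workspace. Under (E1), the signed distance from $\vp_i$ to the hyperplane $\{\va_{io}^T\vp = b_{io}\}$ is at least $r_s$, so $\tn{dis}(\vp_i,\cS_o) \geq r_s$. Under (E2), $\cO_o \subseteq \cS_o$, hence $\tn{dis}(\vp_i,\cO_o) \geq \tn{dis}(\vp_i,\cS_o) \geq r_s$. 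Finally, since the robot localization noise on $\vp_i$ and the obstacle translation $\vd_o$ are independent Gaussian variables, (E1) and (E2) are independent, so
\begin{equation*}
    \pr(\tn{dis}(\vp_i,\cO_o) \geq r_s) \geq \sqrt{1-\delta}\cdot\sqrt{1-\delta} = 1-\delta.
\end{equation*}

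The main obstacle I expect is the geometric bridge in the third paragraph: I have to argue cleanly that the hyperplane computed in the whitened coordinate system (after the transformation $W$), then pulled back by $W^{-1}$, genuinely separates $\hp_i$ from $\cS_o$ in the original workspace with the correct orientation, so that the sign convention used in $\va_{io}^T\vp \leq b_{io}$ in the B-UAVC matches $\va_{io}^T\vp \geq b_{io}$ on $\cS_o$. Once that sign/orientation bookkeeping is in place, the distance-to-hyperplane estimate $|\va_{io}^T\vp_i - b_{io}|/\norm{\va_{io}} \geq r_s$ converts the algebraic inequality from (E1) into a genuine Euclidean separation between $\vp_i$ and the half-space containing $\cS_o$, and the rest of the argument is routine.
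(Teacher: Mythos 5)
Your proposal is correct and follows essentially the same route as the paper's proof: bound $\pr\bigl(\va_{io}^T\vp_i \leq b_{io} - r_s\norm{\va_{io}}\bigr) \geq \sqrt{1-\delta}$ via Lemma~\ref{lem:linChance} exactly as in Eq.~(\ref{eq:iBUAVC_r}), convert this to $\tn{dis}(\vp_i,\cS_o)\geq r_s$ using the construction of the separating hyperplane, and multiply with $\pr(\cO_o\subseteq\cS_o) \geq \sqrt{1-\delta}$ from Remark~\ref{rmk:shadow}. Your explicit appeal to independence of the robot's localization noise and the obstacle translation is a point the paper leaves implicit when it writes the product of the two probabilities, but it is the same argument.
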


\begin{proof}
    Similar to Eq. (\ref{eq:iBUAVC_r}), we have
    \begin{equation}
        \pr(\va_{io}\vp_i \leq b_{io} - r_s\norm{\va_{io}}) \geq \sqrt{1-\delta}.
    \end{equation}
    Based on the computation of $\va_{io}$ and $b_{io}$ in Eq. (\ref{eq:quadprog})-(\ref{eq:an_obs}), it is straightforward to prove that
    \begin{equation}
        \va_{io}\vp_i \leq b_{io} - r_s\norm{\va_{io}} \implies \tn{dis}(\vp_i, \cS_o) \geq r_s.
    \end{equation}
    Thus, 
    \begin{equation}
        \pr(\tn{dis}(\vp_i, \cS_o) \geq r_s) \geq \sqrt{1-\delta}.
    \end{equation}
    If $\cO_o \subseteq \cS_o$ and $\tn{dis}(\vp_i, \cS_o) \geq r_s$, there is $\tn{dis}(\vp_i, \cO_o) \geq r_s$. Hence, by combining with Remark \ref{rmk:shadow}, we have
    \begin{equation}
        \begin{aligned}
            \pr(\tn{dis}(\vp_i, \cO_o) \geq r_s) &\geq \pr(\cO_o \subseteq \cS_o) \cdot \pr(\tn{dis}(\vp_i, \cS_o) \geq r_s) \\ 
            &> \sqrt{1-\delta}\cdot\sqrt{1-\delta} \\ 
            &= 1-\delta,
        \end{aligned}
    \end{equation}
    which completes the proof.
    \qed
\end{proof}

\section{Collision Avoidance Using B-UAVC} \label{sec:method_2}
In this section, we present our decentralized collision avoidance method using the B-UAVC. We start by describing a reactive feedback controller for single-integrator robots, followed by its extensions to double-integrator and non-holonomic differential-drive robots. A receding horizon planning formulation is further presented for general high-order dynamical systems. We also provide a discussion on our proposed method. 

\subsection{Reactive Feedback Control}

\subsubsection{Single integrator dynamics}
Consider robots with single-integrator dynamics $\dot{\vp}_i = \vu_i$, where $\vu_i = \vv_i$ is the control input. Similar to \citet{Zhou2017}, a fast reactive feedback one-step controller can be designed to make each robot move towards its goal location $\vg_i$, as follows:
\begin{equation}\label{eq:reactive_controller}
     \vu_i = v_{i,\max}\cdot \frac{\vg_i^* - \hp_i}{\norm{\vg_i^* - \hp_i}},
\end{equation}
where $v_{i,\max}$ is the robot maximal speed and
\begin{equation}\label{eq:goal_projection}
    \vg_i^* := \argmin_{\vp\in\cV_i^{u,b}}\norm{\vp - \vg_i},
\end{equation}
is the closest point in the robot's B-UAVC to its goal location. 

The strategy used in the controller, Eq. (\ref{eq:reactive_controller}), is also called the ``move-to-projected-goal'' strategy \citep{Arslan2019}. At each time step, each robot in the system first constructs its B-UAVC $\cV_i^{u,b}$, then computes the closest point in $\cV_i^{u,b}$ to its goal, i.e. the ``projected goal'', and generates a control input according to Eq. (\ref{eq:reactive_controller}). Note that the constructed B-UAVC is a convex polytope represented by the intersection of a set of half-spaces hyperplanes. Hence, finding the closest point, Eq. (\ref{eq:goal_projection}), can be recast as a linearly constrained least-square problem, which can be solved efficiently using quadratic programming in polynomial time \citep{kozlov1980polynomial}. 

\subsubsection{Double integrator dynamics}
For single-integrator robots, the reactive controller Eq. (\ref{eq:reactive_controller}) guarantees the robot to be always within its corresponding B-UAVC and thus probabilistic collision free with other robots and obstacles. However, the controller may drive the robot towards to the boundary of its B-UAVC. Consider the double-integrator robot which has a limited acceleration, $\ddot{\vp}_i = \vu_i$, where $\vu_i = \va\vc\vc_i$ is the control input. It might not be able to continue to stay within its B-UAVC when moving close to the boundary of the B-UAVC. Hence, to \rebuttal{enhance} safety, as illustrated in Fig. \ref{fig:stopping} we introduce an additional safety stopping buffer, which is defined as
\begin{equation}\label{eq:double_int_extra_buffer}
    \beta_i^s = 
    \begin{cases}
        \frac{\norm{\va_{il}^T\vv_i}^2}{2acc_{i,\max}}, &\tn{if~~} \va_{il}^T\vv_i > 0; \\
        0, &\tn{otherwise},
    \end{cases}
\end{equation}
where $acc_{i,\max}$ is the maximal acceleration of the robot. 
\rebuttal{This additional stopping buffer heuristically leaves more space for the robot to decelerate in advance before touching the boundaries of the original B-UAVC.}
Hence, the updated B-UAVC in Eq. (\ref{eq:BUAVC}) with an additional safety stopping buffer now becomes
\begin{equation}\label{eq:BUAVC_S}
    \begin{aligned}
        \cV_{i}^{u, b} 
        = \{ \vp \in \R^d: 
        \va_{il}^{T}\vp \leq b_{il} - \beta_i^r - \beta_i^\delta - \beta_i^s, \\ 
        \forall l\in\cI_l, l\neq i \}.
    \end{aligned}
\end{equation}

Accordingly, the reactive feedback one-step controller for double-integrator robots is as follows,
\begin{equation}\label{eq:double_int_controller}
    \vu_i = acc_{i,\max}\cdot \frac{\vg_i^* - \hp_i}{\norm{\vg_i^* - \hp_i}}.
\end{equation}

\begin{figure}[t]
	\centering	
	\includegraphics[width=0.7\columnwidth]{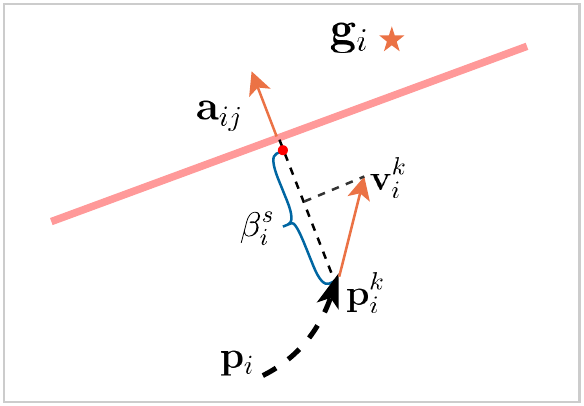}
  \caption{Additional buffer is added to allow robots with double-integrator dynamics to have enough space to decelerate.}
  	\label{fig:stopping}
\end{figure}

\subsubsection{Differential-drive robots}\label{subsubsec:control_diff}
Consider differential-drive robots moving on a two dimensional space $\cW \subseteq \R^2$, whose motions are described by
\begin{equation}
    \begin{aligned}
        &\dot{\hp}_i = v_i \mat \cos\theta_i \\ \sin\theta_i \mate, \\
        &\dot{\theta}_i = \omega_i,
    \end{aligned}
\end{equation}
where $\theta_i \in [-\pi, \pi)$ is the orientation of the robot, and $\vu_i = (v_i, \omega_i)^T \in \R^2$ is the vector of robot control inputs in which $v_i$ and $\omega_i$ are the linear and angular velocity, respectively. 
We adopt the control strategy developed by \citet{Arslan2019} and \citet{Astolfi1999} and briefly describe it in the following. 

As shown in Fig. \ref{fig:diff_control}, firstly, two line segments 
\begin{align}\label{eq:differ_drive_LUW}
    L_v &= \cV_i^{u,b} \cap H_N, \\ 
    L_\omega &= \cV_i^{u,b} \cap H_G,
\end{align}
are determined, in which $H_N$ is the straight line from the robot position towards its current orientation and $H_G$ is the straight line towards its goal location, respectively. Then the closest point in the robot's B-UAVC, $\vg_i^*$, and in the two lines segments $\vg_{i,v}^*$, $\vg_{i,\omega}^*$ is computed. Finally the control inputs of the robot are given by
\begin{equation}\label{eq:diff_control}
    \begin{aligned}
        v_i &= -k\cdot[\cos(\theta)~\sin\theta](\hp_i - \vg_{i,v}^*), \\ 
        \omega_i &= k\cdot\tn{atan}\left(\frac{[-\sin(\theta)~\cos\theta](\hp_i - (\vg_i^*+\vg_{i,\omega}^*)/2)}{[\cos(\theta)~\sin\theta](\hp_i - (\vg_i^*+\vg_{i,\omega}^*)/2)}\right),
    \end{aligned}
\end{equation} 
where $k > 0$ is the fixed control gain. It is proved by \citet{Arslan2019} that if the local safe region is convex, then the robot will stay within the convex safe region under the control law of Eq. (\ref{eq:diff_control}).

\begin{figure}[t]
	\centering	
	\includegraphics[width=0.7\columnwidth]{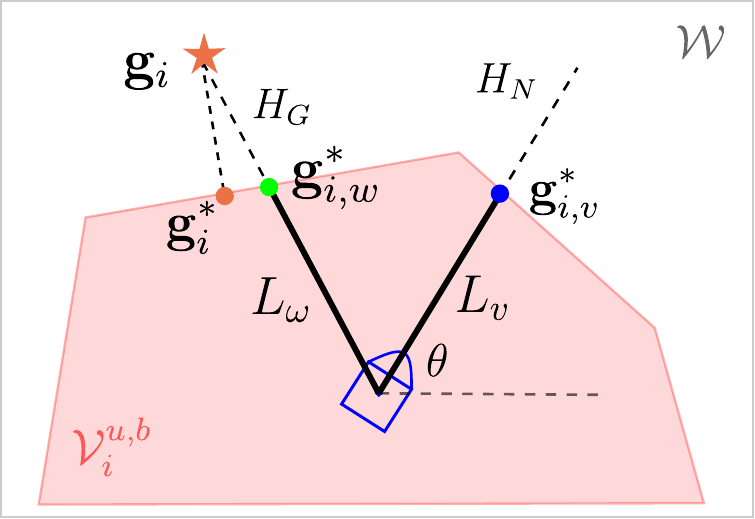}
    \caption{Reactive feedback control for differential-drive robots. }
  	\label{fig:diff_control}
\end{figure}

\subsection{Receding Horizon Planning}\label{subsec:control_mpc}
Consider general high-order dynamical systems with, potentially nonlinear, dynamics $\vx_i^{k} = \vf_i(\vx_i^{k-1}, \vu_i^{k-1})$, where $\vx_i^k \in \R^{n_x}$ denotes the robot state at time step $k$ which typically includes the robot position $\vp_i^k$ and velocity $\vv_i^k$, and $\vu_i^k \in \R^{n_u}$ the robot control input. To plan a local trajectory that respects the robot kinodynamic constraints, we formulate a constrained optimization problem with $N$ time steps and a planning horizon $\tau = N\Delta t$, where $\Delta t$ is the time step, as follows, 
\begin{problem}[Receding Horizon Trajectory Planning]\label{prob:rhc}
    \begin{subequations}
        \begin{align}
        \min\limits_{\hx_i^{1:N}, \vu_i^{0:N-1}}  ~~        
                                & \sum_{k=0}^{N-1}\vu_i^kR\vu_i^k + (\hp_i^N - \vg_{i}^N)^TQ_N(\hp_i^N - \vg_{i}^N) \nonumber \\
        \text{s.t.}	~~	        & \vx_i^0 = \hx_i, \\
                                & \hx_i^{k} = \vf_i(\hx_i^{k-1}, \vu_i^{k-1}), \\
                                & \hp_i^k \in \cV_i^{u,b}, \label{eq:buavcCons} \\ 
                                & \vu_i^{k-1} \in \cU_i,\\
                                &\forall i\in\cI, \,\forall k\in \{1,\dots,N\}.
        \end{align}
    \end{subequations}
\end{problem}
In Problem \ref{prob:rhc}, $\cU_i \in \R^{n_u}$ is the admissible control space; $R \in \R^{n_u \times n_u}$, $Q_N \in \R^{d\times d}$ are positive semi-definite symmetric matrices. The constraint (\ref{eq:buavcCons}) restrains the planned trajectory to be within the robot's B-UAVC $\cV_i^{u,b}$. According to the definition of $\cV_i^{u,b}$ in Eq. (\ref{eq:BUAVC_S}), the constraint can be formulated as a set of linear inequality constraints:
\begin{equation}\label{eq:buavcLinCons}
    \va_{il}^{T}\hp_i^k \leq b_{il} - \beta_i^r - \beta_i^\delta -\beta_i^s, ~\forall l\in\cI_l, l\neq i.
\end{equation}

At each time step, the robot first constructs its corresponding B-UAVC $\cV_i^{u,b}$ represented by a set of linear inequalities and then solves the above receding horizon planning problem. 
\rebuttal{The problem is in general a nonlinear and non-convex optimization problem due to the robot's nonlinear dynamics formulated as equality constraints $\hx_i^{k} = \vf_i(\hx_i^{k-1}, \vu_i^{k-1})$. 
}
While a solution of the problem including the planned trajectory and control inputs is obtained, the robot only executes the first control input $\vu_i^0$. Then with time going on and at the next time step, the robot updates its B-UAVC and solves the optimization problem again. The process is performed until the robot reaches its goal location. 

\rebuttal{

}

\rebuttal{
\begin{remark}[Probability of collision for the planned trajectory]
    From Theorem {\ref{thm:buavc_robot}} and {\ref{thm:buavc_obs}}, constraint ({\ref{eq:buavcCons}}) guarantees that at each stage within the planning horizon, the collision probability of robot $i$ with any other robot or obstacle is below the specified threshold $\delta$. Hence, the probability of collision for the entire planning trajectory of robot $i$ with respect to each other robot and obstacle can be bounded by $\pr(\cup_{k=1}^{N}\hp_i^k \notin \cV_i^{u,b}) \leq \sum_{k=1}^{N}\pr(\hp_i^k \notin \cV_i^{u,b}) = N\delta$. Nevertheless, this bound is over conservative in practice. The real collision probability of the planned trajectory is much smaller than $N\delta$ \mbox{\citep{Schmerling2017}}. Hence, we impose the collision probability threshold $\delta$ for each individual stage in the context of receding horizon planning, thanks to the fast re-planning and relatively small displacement between stages \citep{luo2020multi}. 
\end{remark}
}

\rebuttal{
    Algorithm \ref{alg:ca_buavc} summarizes our proposed method for decentralized probabilistic multi-robot collision avoidance, in which each robot in the system first constructs its B-UAVC, and then compute control input accordingly to restrain its motion to be within the B-UAVC. 

\begin{algorithm}[t]
    \caption{Collision Avoidance Using B-UAVC for Each Robot $i\in\cI$ in a Multi-robot Team}
    \label{alg:ca_buavc}
    \begin{algorithmic}[1]
            \Statex ------------------ Construction of B-UAVC ------------------
            \State Obtain $\vp_i \sim \Gau(\hp_i, \Sigma_i)$ via state estimation
            \For {Each other robot $j \in \cI, j \neq i$}
                \State Estimate $\vp_j \sim \Gau(\hp_j, \Sigma_j)$
                \State Compute the best linear separator parameters $(\va_{ij}, b_{ij})$ via Eq. (\ref{eq:best_linear_separator})
            \EndFor
            \For {Each static obstacle $o\in\cI_o$}
                \State Estimate $\vd_o \sim \Gau(0, \Sigma_o)$ with known $\hat{\cO}_o$
                \State Compute the separating hyperplane parameters $(\va_{io}, b_{io})$ via Eqs. (\ref{eq:coordinate_transformation})-(\ref{eq:an_obs})
            \EndFor
            \For {Each separating hyperplane $l \in \cI_l, l\neq i$}
                \State Compute the safety radius buffer via Eq. (\ref{eq:radius_buffer}): $\beta_i^r = {r_s}\norm{\va_{il}}$
                \State Compute the collision probability buffer via Eq. (\ref{eq:delta_buffer}): $\beta_i^\delta = \sqrt{2\va_{il}^T\Sigma_i\va_{il}}\cdot\tn{erf}^{-1}(2\sqrt{1-\delta}-1)$
                \State Construct the B-UAVC via Eq. (\ref{eq:BUAVC})
            \EndFor
            \Statex ------------------ Collision Avoidance Action ------------------
            \If {$i$ is single-integrator}
                \State Compute control input via Eqs. (\ref{eq:reactive_controller})-(\ref{eq:goal_projection})
            \ElsIf {$i$ is double-integrator}
                \State Compute control input via Eqs. (\ref{eq:double_int_extra_buffer})-(\ref{eq:double_int_controller})
            \ElsIf {$i$ is differential-drive}
                \State Compute control input via Eqs. (\ref{eq:differ_drive_LUW})-(\ref{eq:diff_control})
            \Else 
                \State Compute control input by solving Problem 1
            \EndIf
    \end{algorithmic}
\end{algorithm}

}

\rebuttal{
\subsection{Discussion}

\subsubsection{Uncertainty estimation}\label{subsubsec:estimation_discussion}
For each robot $i$ in the system, to construct its B-UAVC, the robot needs a) its own position estimation mean $\hp_i$ and uncertainty covariance $\Sigma_i$ from onboard measurements via a filter, e.g. a Kalman filter, and b) to know each other robot $j$'s position mean $\hp_i$ and uncertainty covariance $\Sigma_j$. In case communication is available, such position estimation information can be communicated among robots. However, in a fully decentralized system where there is no communication, each robot $i$ will need to estimate other robot $j$'s position mean and covariance, denoted by $\tilde{\vp}_j$ and $\tilde{\Sigma}_j$, via its own onboard sensor measurements. In this case, we assume that robot $i$'s estimation of robot $j$'s position mean is the same as robot $j$'s own estimation, i.e. $\tilde{\vp}_j = \hp_j$; while robot $i$'s estimation of the uncertainty covariance of robot $j$ is larger than its own localization uncertainty covariance, i.e. $|\tilde{\Sigma}_j| \geq |\Sigma_i|$. This assumption is reasonable in practice since the robot generally has more accurate measurements of its own position than other robots in the environment. Then robot $i$ computes its B-UAVC using $\hp_i, \Sigma_i, \tilde{\vp}_j$, and $\tilde{\Sigma}_j$. According to the properties of the best linear separator, this assumption leads that each robot $i$ always partitions a smaller space when computing the separating hyperplane with another robot $j$, which results in a more conservative B-UAVC to ensure safety for robot $i$ itself.  

\subsubsection{Empty B-UAVCs}
Taking into account uncertainty, the robots being probabilistic collision-free (Definition 1), i.e., $\pr(\norm{\vp_i-\vp_j}\geq 2r_s) \geq$ $1 - \delta, \forall i,j \in \{1,\dots,n\}, i\neq j$, does not guarantee that the defined B-UAVC $\cV_i^{u,b}$ is non-empty. Nevertheless, the case $\cV_i^{u,b}$ being empty is rarely observed in our simulations and experiments. We handle this situation by decelerating the robot if its B-UAVC is empty.

}

\section{Simulation Results}\label{sec:sim_result}
We now present simulation results comparing our proposed B-UAVC method with state-of-the-art baselines as well as a performance analysis of the proposed method in a variety of scenarios.

\subsection{Comparison to the BVC Method}\label{subsec:com_bvc}
We first compare our proposed B-UAVC method with the BVC approach \citep{Zhou2017} that we extend in two-dimensional obstacle-free environments with single-integrator robots. Both the B-UAVC and BVC methods only need robot position information to achieve collision avoidance, in contrast to the well-known reciprocal velocity obstacle (RVO) method \citep{VanDenBerg2011} which also requires robot velocity information to be communicated or sensed. Comparison between BVC and RVO has been demonstrated by \citet{Zhou2017} in 2D scenarios, hence in this paper we focus on comparing the proposed B-UAVC with BVC. 

We deploy the B-UAVC and BVC in a $10 \times 10$m environment with 2, 4, 8, 16 and 32 robots forming an \emph{antipodal circle swapping} scenario (\citet{VanDenBerg2011}). In this scenario, the robots are initially placed on a circle (equally spaced) and their goals are located at the antipodal points of the circle. We use a circle with a radius of 4.0 m in simulation. 
Each robot has a radius of 0.2 m, a local sensing range of 2.0 m and a maximum allowed speed of 0.4 m/s. 
The goal is assumed to be reached for each robot when the distance between its center and goal location is smaller than 0.1 m. 
To simulate collision avoidance under uncertainty, two different levels of noise, $\Sigma_1 = \tn{diag}(0.04~\tn{m}, 0.04~\tn{m})^2$ and $\Sigma_2 = \tn{diag}(0.06~\tn{m}, 0.06~\tn{m})^2$, are added to the robot position measurements. 
\rebuttal{Particularly, each robot's localization uncertainty covariance is $\Sigma_1$ and its estimation of other robots' position uncertainty covariance is $\Sigma_2$}. 
The time step used in simulation is $\Delta t = 0.1$ s.

In the basic BVC implementation, an extra $10\%$ or $100\%$ radius buffer is added to the robot's real physical radius to account for measurement uncertainty for comparison \citep{Wang2019}. In the B-UAVC implementation, the collision probability threshold is set as $\delta = 0.05$. Any robot will stop moving when it arrives at its goal or is involved in a collision. Both the B-UAVC and BVC methods use the same deadlock resolution techniques proposed in this paper (\rebuttal{Appendix \ref{appendix:deadlock}}). We set a maximum simulation step $K = 800$ and the collision-free robots that do not reach their goals within $K$ steps are regarded to be in deadlocks/livelocks.

For each case (number of robots $n$) and each method, we run the simulation 10 times. In each single run, we evaluate the following performance metrics: 
(a) collision rate, 
(b) minimum distance among robots,
(c) average travelled distance of robots, 
and (d) time to complete a single run.
The collision rate is defined to be the ratio of robots colliding over the total number of robots.  
Time to complete a single run is defined to be the time when the last robot reaches its goal. Note that the metrics (2)(3)(4) are calculated for robots that successfully reach their goal locations. 
Finally, statistics of 10 instances under each case are presented.

The simulation results are presented in Fig. \ref{fig:compare_bvc_si}. In all runs, no deadlocks are observed. In terms of collision avoidance, both the B-UAVC approach and BVC with additional 100\% robot radius achieve zero collision in all runs. The BVC with only 10\% robot radius leads to collisions when the total number of robots gets larger. In particular, when there are 32 robots an average of 28\% robots collide, as shown in Fig. \ref{subfig:com_bvc_si_col_rate}. While the BVC with 100\% additional robot radius can also achieve zero collision rate as our proposed B-UAVC, it is more conservative and less efficient. In average, the B-UAVC saves 10.1\% robot travelled distance (Fig. \ref{subfig:com_bvc_si_tra_dis}) and 14.4\% time for completing a single run (Fig. \ref{sub@subfig:com_bvc_si_tra_time}) comparing to the BVC with additional 100\% robot radius. 

\rebuttal{
\begin{remark}
   The ``BVC + $X\%$'' is a heuristic way to handle uncertainty. The above simulation results show that if $X$ is too small, then it cannot ensure safety; while if $X$ is too large, the results will be very conservative and less efficient. So generally reasoning about individual uncertainties using the proposed B-UAVC method will perform better than determining an extra $X\%$ buffer. 
\end{remark}
}

\rebuttal{
\begin{remark}
   In some cases we can design such an $X$ that it will have the same results as the B-UAVC method. 
   Consider the case where $\Sigma_i = \Sigma_j = \sigma^2I$. According to Remark \ref{rmk:sepPlane}, the best linear separator coincides with the separating hyperplane computed by the BVC method, whose parameters are denoted by $\va_{ij}$ and $b_{ij}$. The hyperplane parameters can be further normalized to make $\norm{\va_{ij}}=1$. In this case, our B-UAVC and the BVC have the same safety radius buffer $\beta_i^r = r_s$. Given a collision probability threshold $\delta$, our B-UAVC further introduces another buffer to handle uncertainty
   \begin{equation*}
      \begin{aligned}
         \beta_i^\delta &= \sqrt{2\va_{il}^T\Sigma_i\va_{il}}\cdot\tn{erf}^{-1}(2\sqrt{1-\delta}-1) \\ 
         &= \sigma\sqrt{2}\cdot\tn{erf}^{-1}(2\sqrt{1-\delta}-1).
      \end{aligned}
   \end{equation*}
   If we choose an extra safety buffer $X\%$ such that 
   \begin{equation*}
     X\%\cdot r_s = \sigma\sqrt{2}\cdot\tn{erf}^{-1}(2\sqrt{1-\delta}-1),
   \end{equation*}
   then the results of the ``BVC + $X\%$'' method are the same as our B-UAVC method.  
   However, our B-UAVC method can handle general cases where it is hard to design an $X\%$ to always achieve the same level of performance.
\end{remark}
}

\begin{figure}[t]
   \centering
   \subfloat[]{\label{subfig:com_bvc_si_col_rate}
      \includegraphics[width=.23\textwidth]{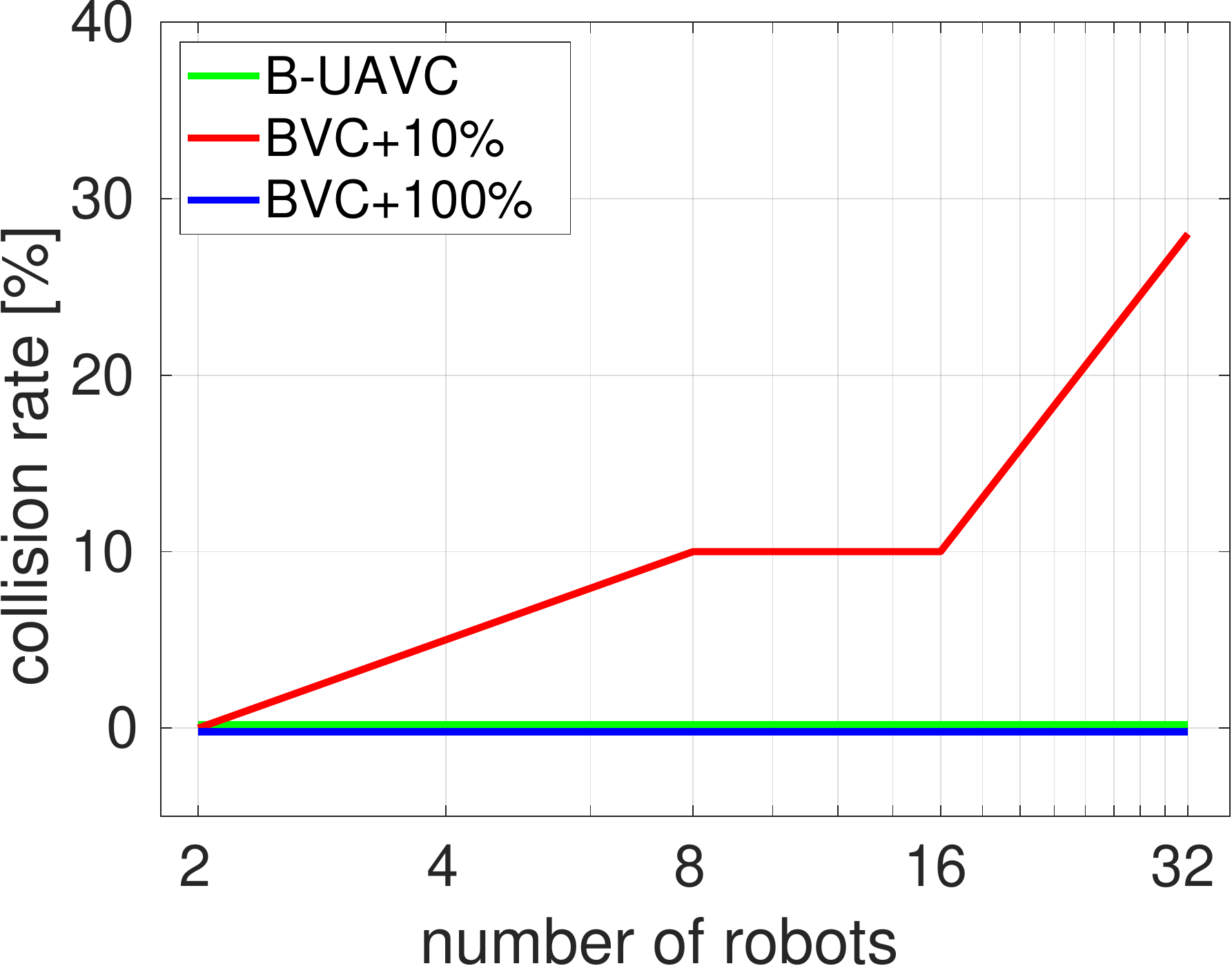}}
   \subfloat[]{\label{subfig:com_bvc_si_min_dis}
      \includegraphics[width=.23\textwidth]{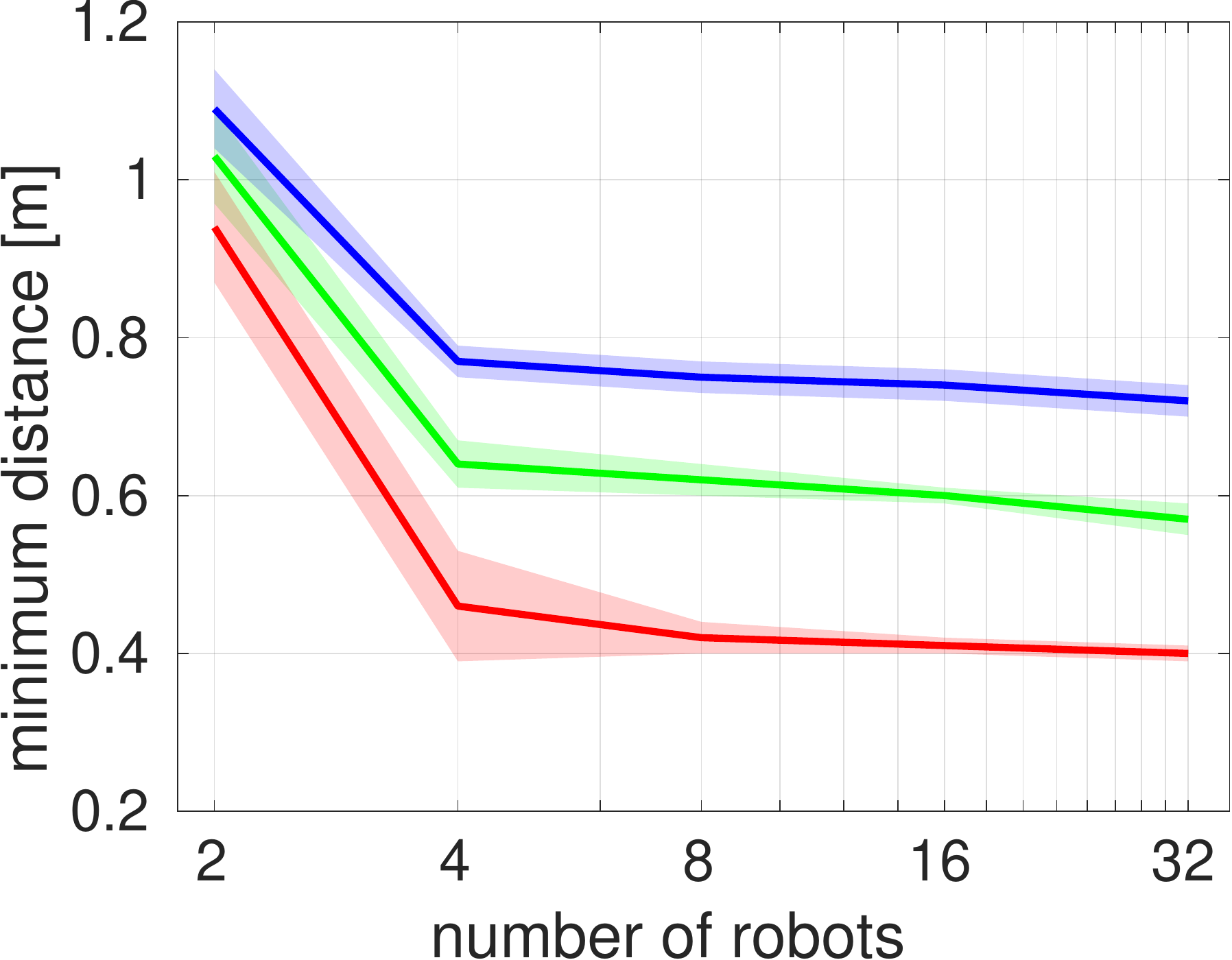}}
   \\ 
   \subfloat[]{\label{subfig:com_bvc_si_tra_dis}
      \includegraphics[width=.23\textwidth]{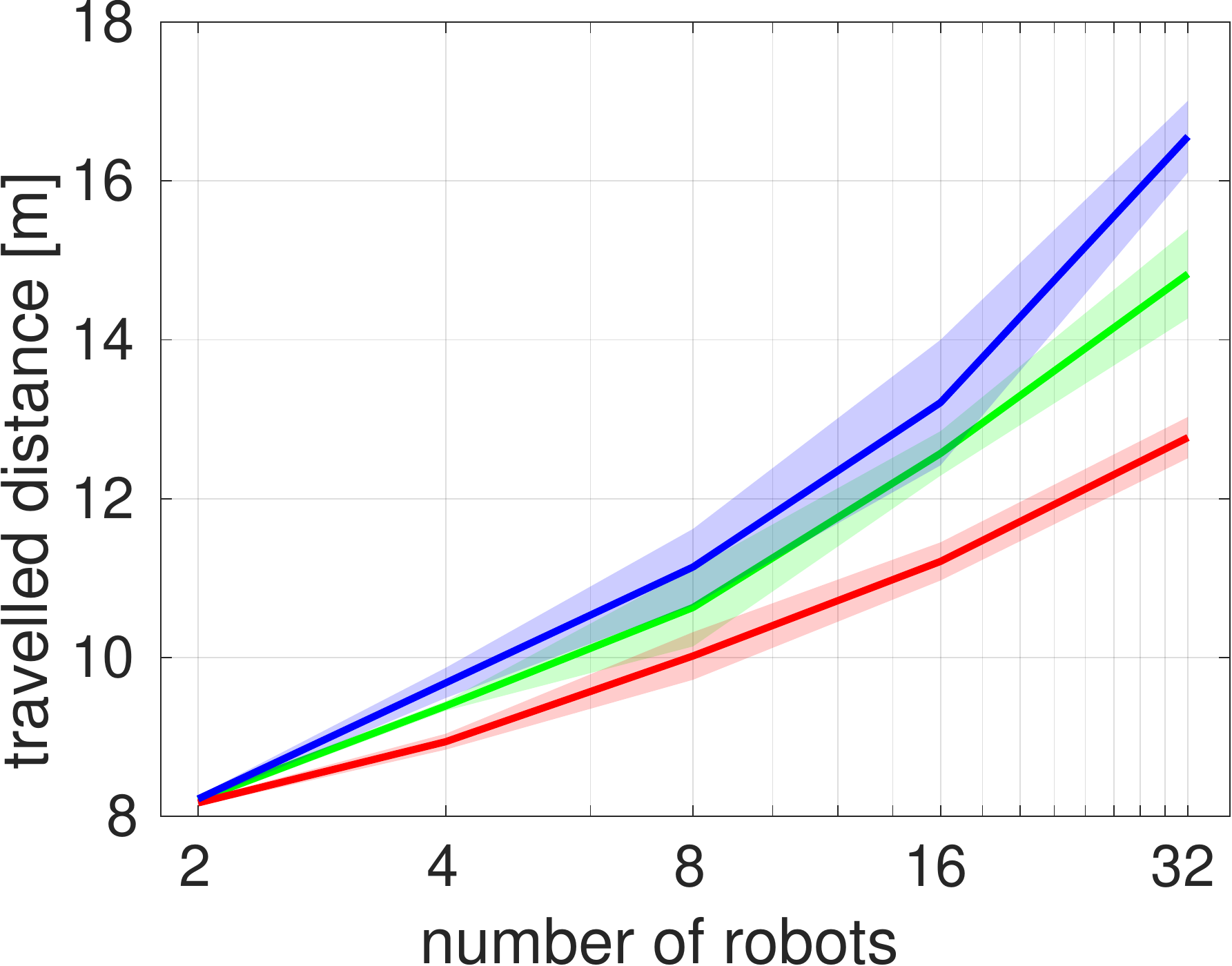}}
   \subfloat[]{\label{subfig:com_bvc_si_tra_time}
      \includegraphics[width=.23\textwidth]{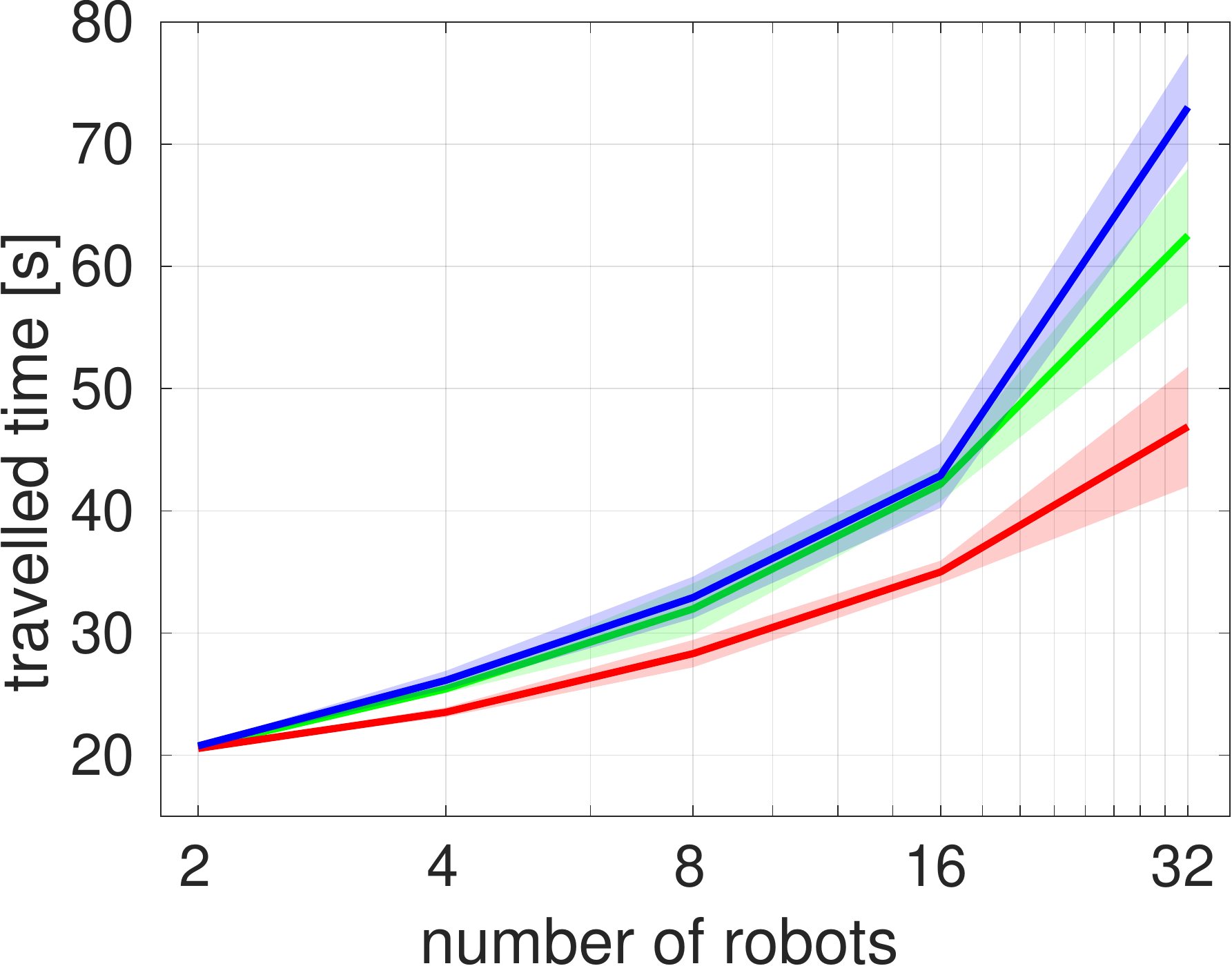}}

   \caption{Evaluation of the antipodal circle scenario with varying numbers of single-integrator robots. The (a) collision rate, (b) minimum distance, (c) travelled distance and (d) complete time are shown. Lines denote mean values and shaded areas around the lines denote standard deviations over 10 repetitions for each scenario.}
   \label{fig:compare_bvc_si}
\end{figure}

\begin{figure*}[h]
   \centering
   \subfloat[$t = 1$ s.]{\label{}
      \includegraphics[width=.185\textwidth]{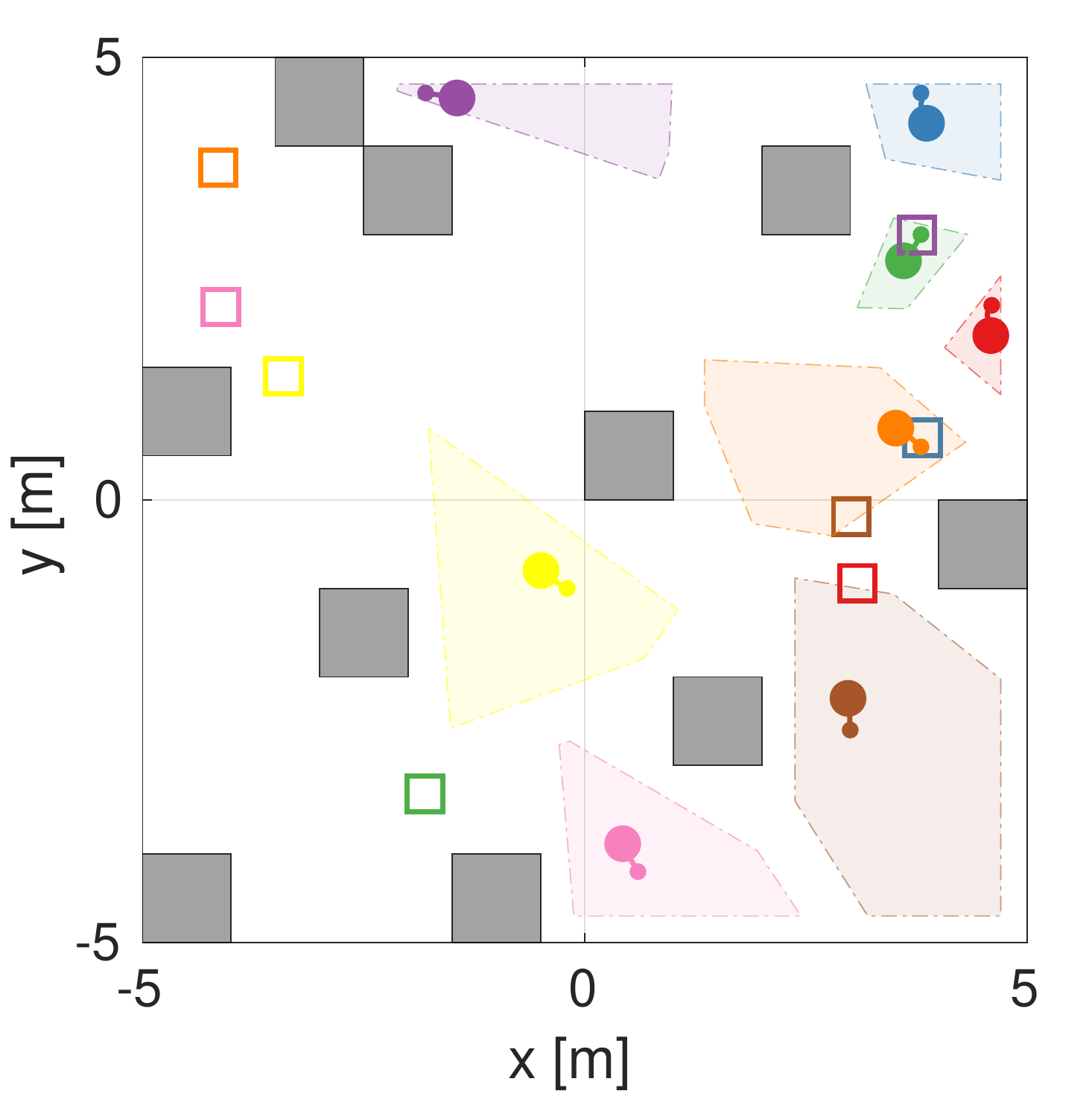}}
   \subfloat[$t = 6$ s.]{\label{}
      \includegraphics[width=.185\textwidth]{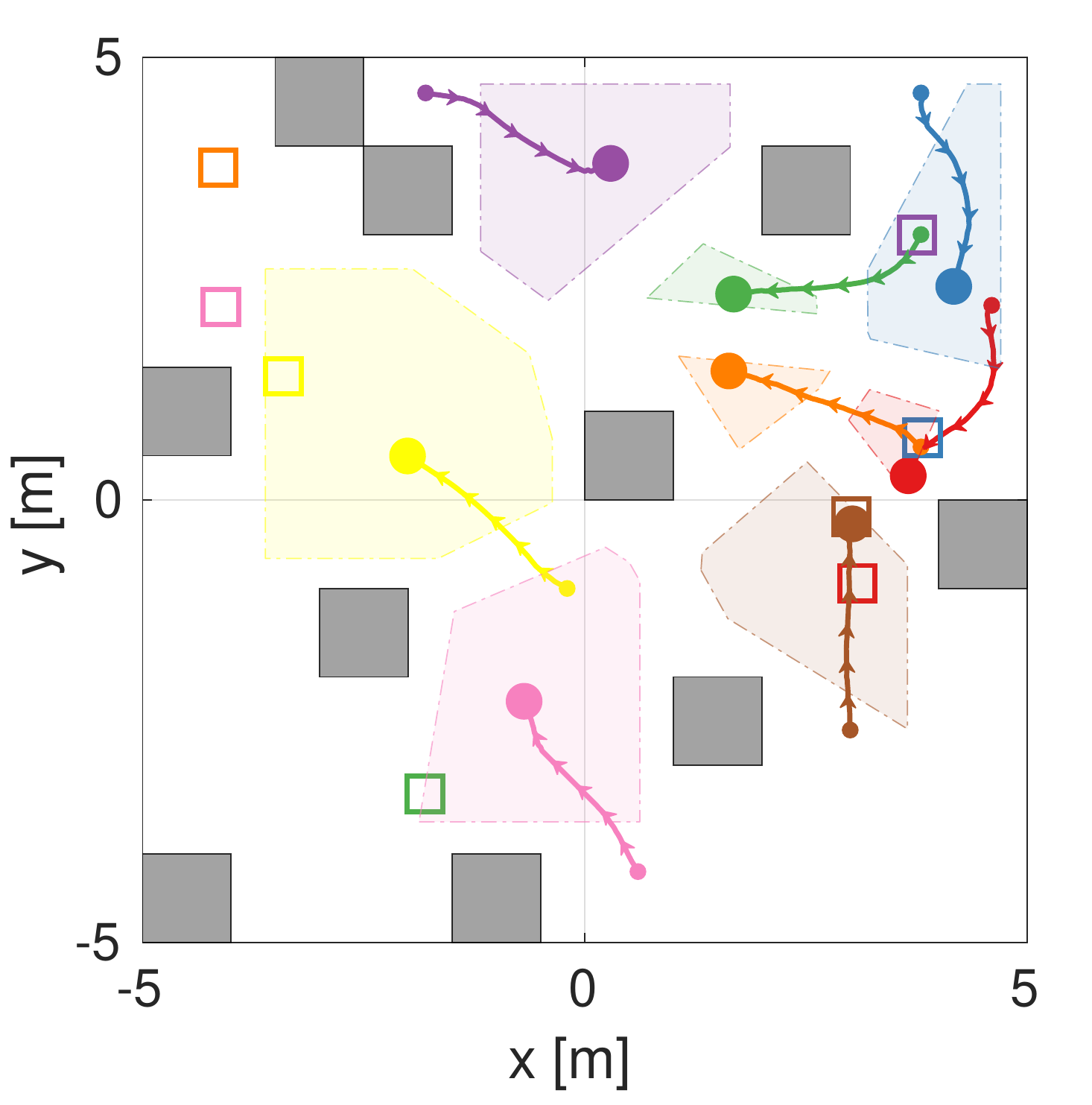}}
   \subfloat[$t = 12$ s.]{\label{}
      \includegraphics[width=.185\textwidth]{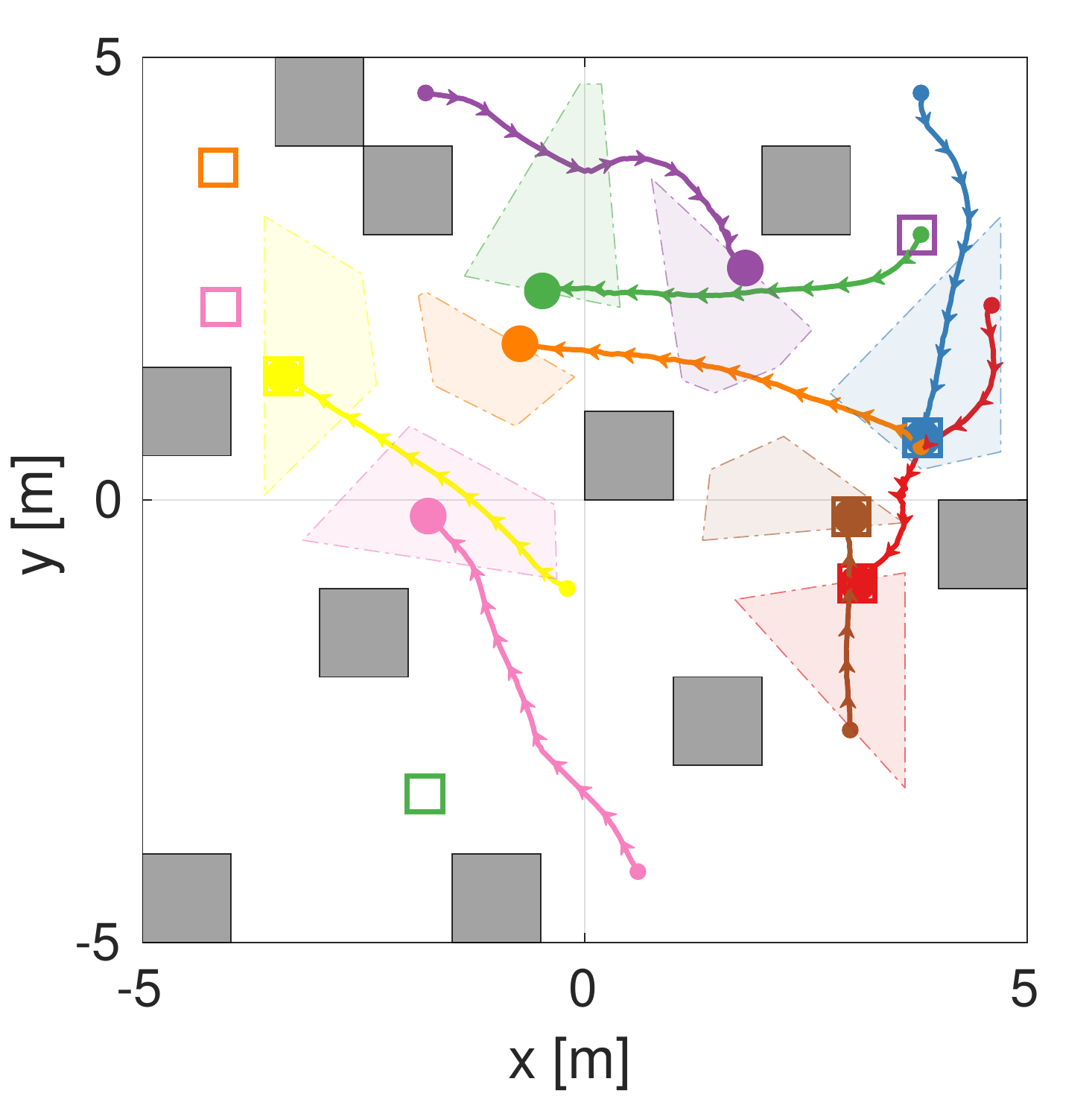}}
   \subfloat[$t = 18$ s.]{\label{}
      \includegraphics[width=.185\textwidth]{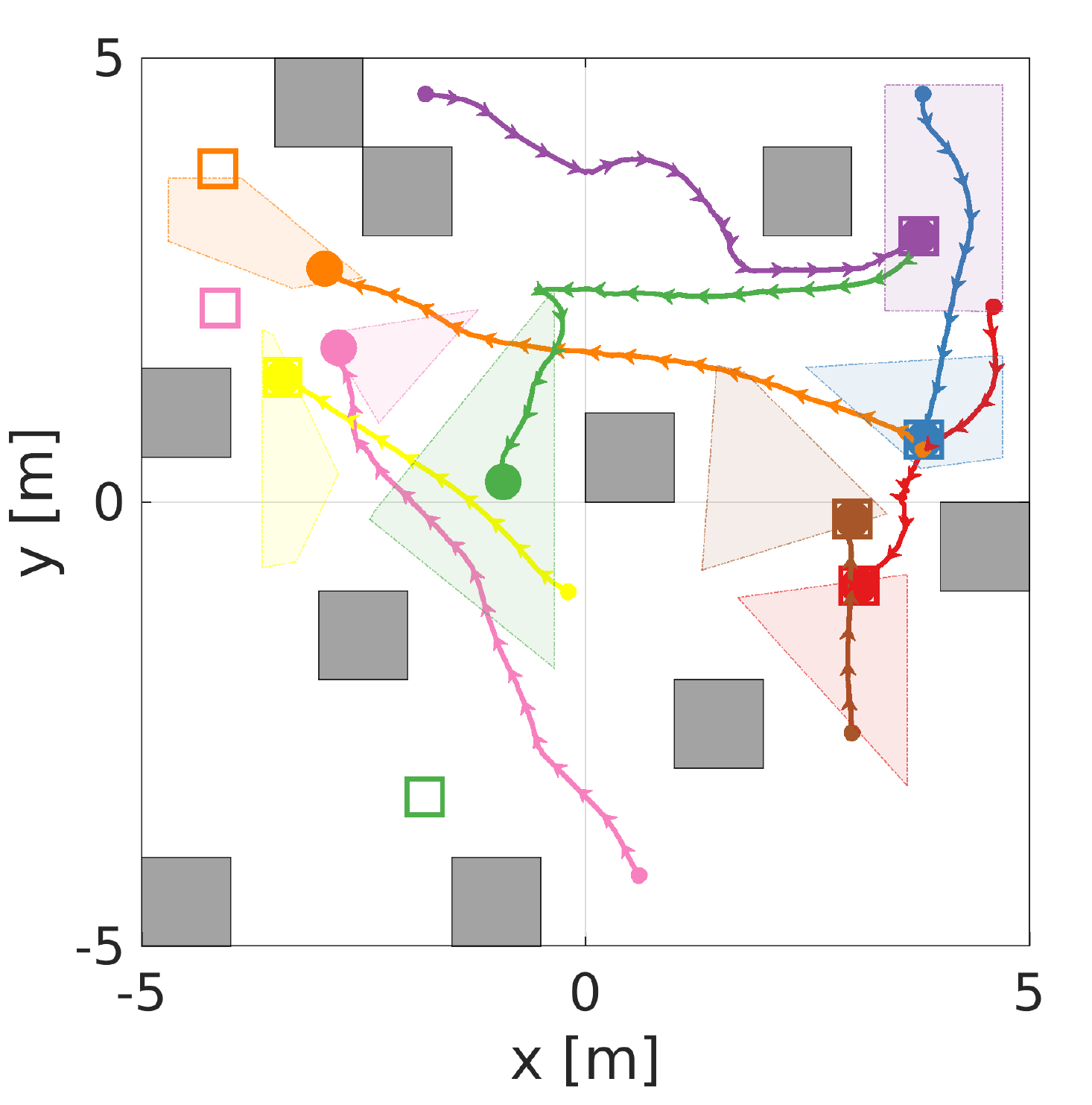}}
   \subfloat[$t = 27$ s.]{\label{}
      \includegraphics[width=.185\textwidth]{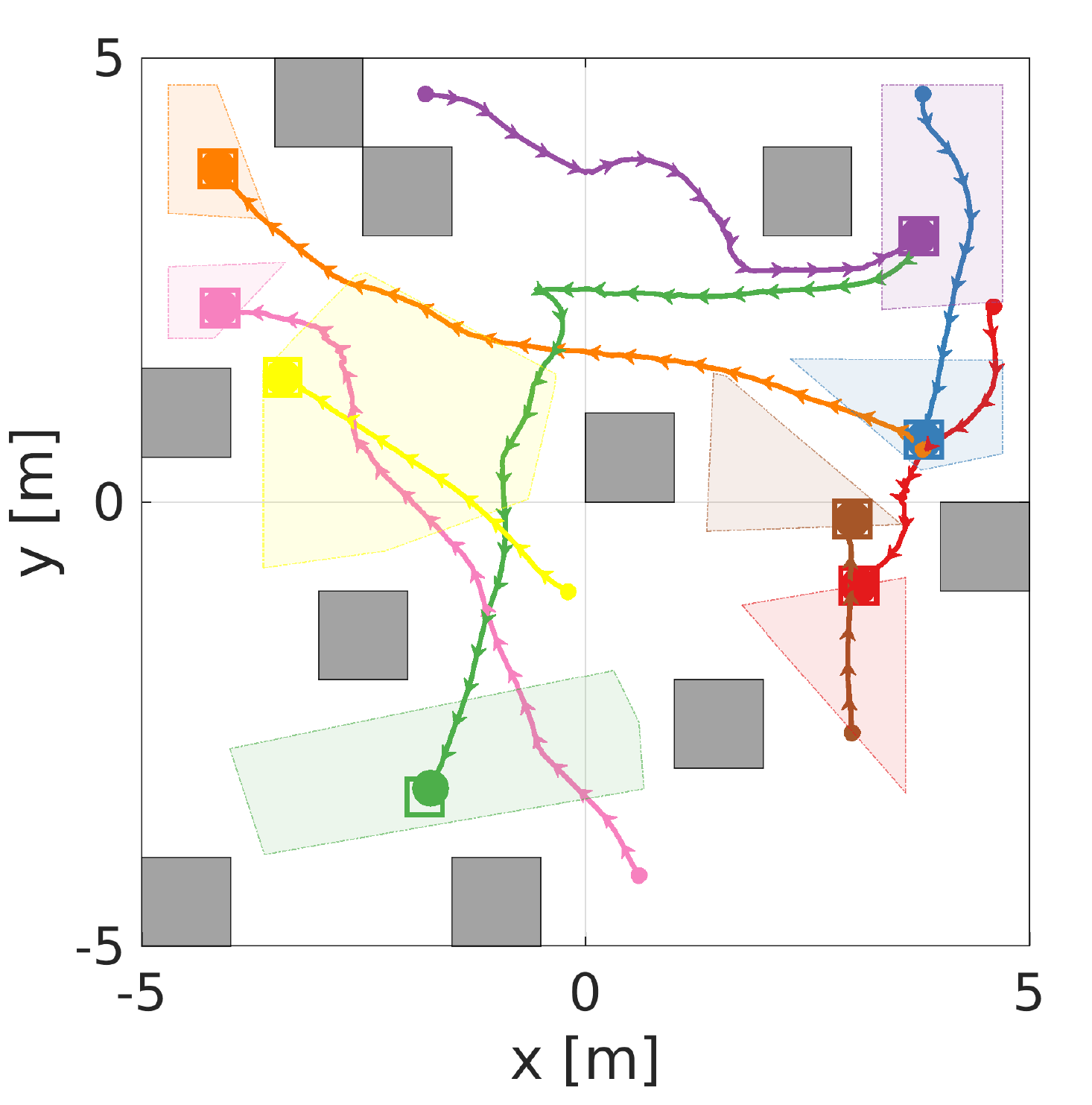}}

   \caption{A sample simulation run of the random moving scenario with 8 robots and 10\% obstacle density. The robot initial and goal locations are marked in circle disks and solid squares. Grey boxes are static obstacles. The B-UAVCs are shown in shaded patches with dashed boundaries. }
   \label{fig:r8o10}
\end{figure*}

\subsection{Performance Analysis}
We then study the effect of collision probability threshold on the performance of the proposed B-UAVC method. Similarly, we deploy the B-UAVC in a $10 \times 10$m environment with 2, 4, 8, 16 and 32 robots in obstacle-free and cluttered environments with 10\% obstacle density. In the obstacle-free case for each number of robots $n$, 10 scenarios are randomly generated to form a challenging \emph{asymmetric swapping} scenario \citep{Serra2020}, indicating that the environment is split into $n$ sections around the center and each robot is initially randomly placed in one of them while required to navigate to its opposite section around the center. In the obstacle-cluttered case, 10 \emph{random moving} scenarios are simulated for each different number of robots in which robot initial positions and goal locations are randomly generated. 
Fig. \ref{fig:r8o10} shows a sample run of the scenario with 8 robots and 10 obstacles. 
We then run each generated scenario 5 times given a parameter setting (collision probability threshold). The robots have the same radius and maximal speed as in Section \ref{subsec:com_bvc}. Localization noise with zero mean and covariance $\Sigma = \tn{diag}(0.06~\tn{m}, 0.06~\tn{m})^2$ is added. For evaluation of performance, we focus on the robot collision rate, the robot deadlock rate, 
and the minimum distance among successful robots. 

We evaluate the performance of B-UAVC with different levels of collision probability threshold: $\delta =$ 0.05, 0.10, 0.20 and 0.30. The simulation results are presented in Fig. \ref{fig:compare_thresh}. 
In the top row of the figure, we consider the collision rate among robots. The result shows that with a roughly small collision probability threshold $\delta = 0.05, 0.10 ,0.20$, no collisions are observed in both obstacle-free asymmetric swapping and obstacle-cluttered random moving scenarios, indicating that the B-UAVC method maintains a high level of safety. However, when $\delta$ is set to 0.3, the collision rate among robots increase dramatically, in particular when the number of robots is large. For example, in the asymmetric swapping scenario with 32 robots, there are 68.75\% robots involve in collisions in average. 
In the bottom row of the figure, the minimum distance among robots are compared. The result shows that with smaller threshold, the minimum distance will be a little bit larger. The reason is that robots with a smaller threshold will have more conservative behavior and have smaller B-UAVCs during navigation. 

\begin{figure}[t]
    \centering
    \subfloat[]{\label{subfig:thresh_col_rate_asy}
       \includegraphics[width=.23\textwidth]{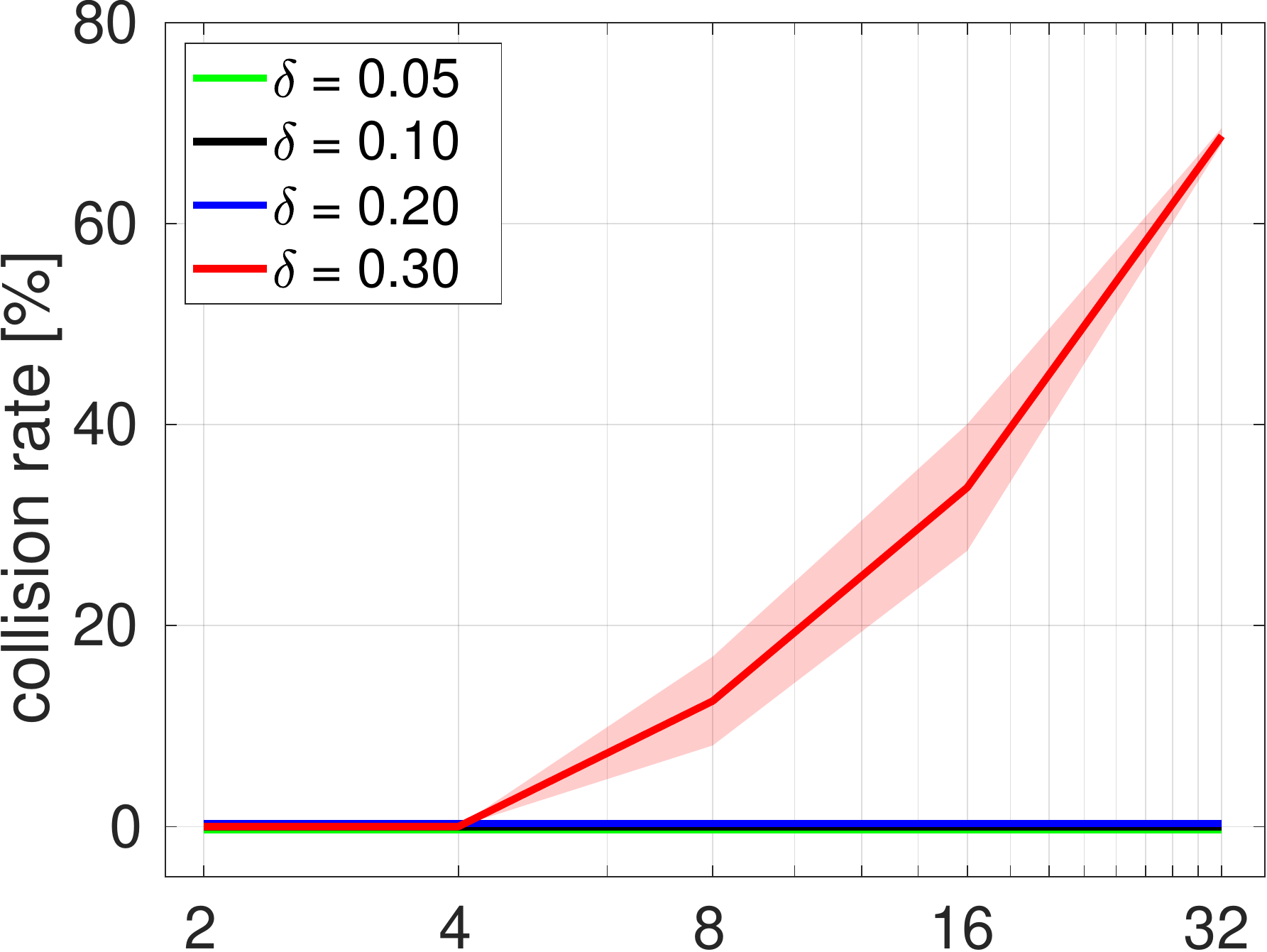}}
    \subfloat[]{\label{subfig:thresh_col_rate_rand}
       \includegraphics[width=.23\textwidth]{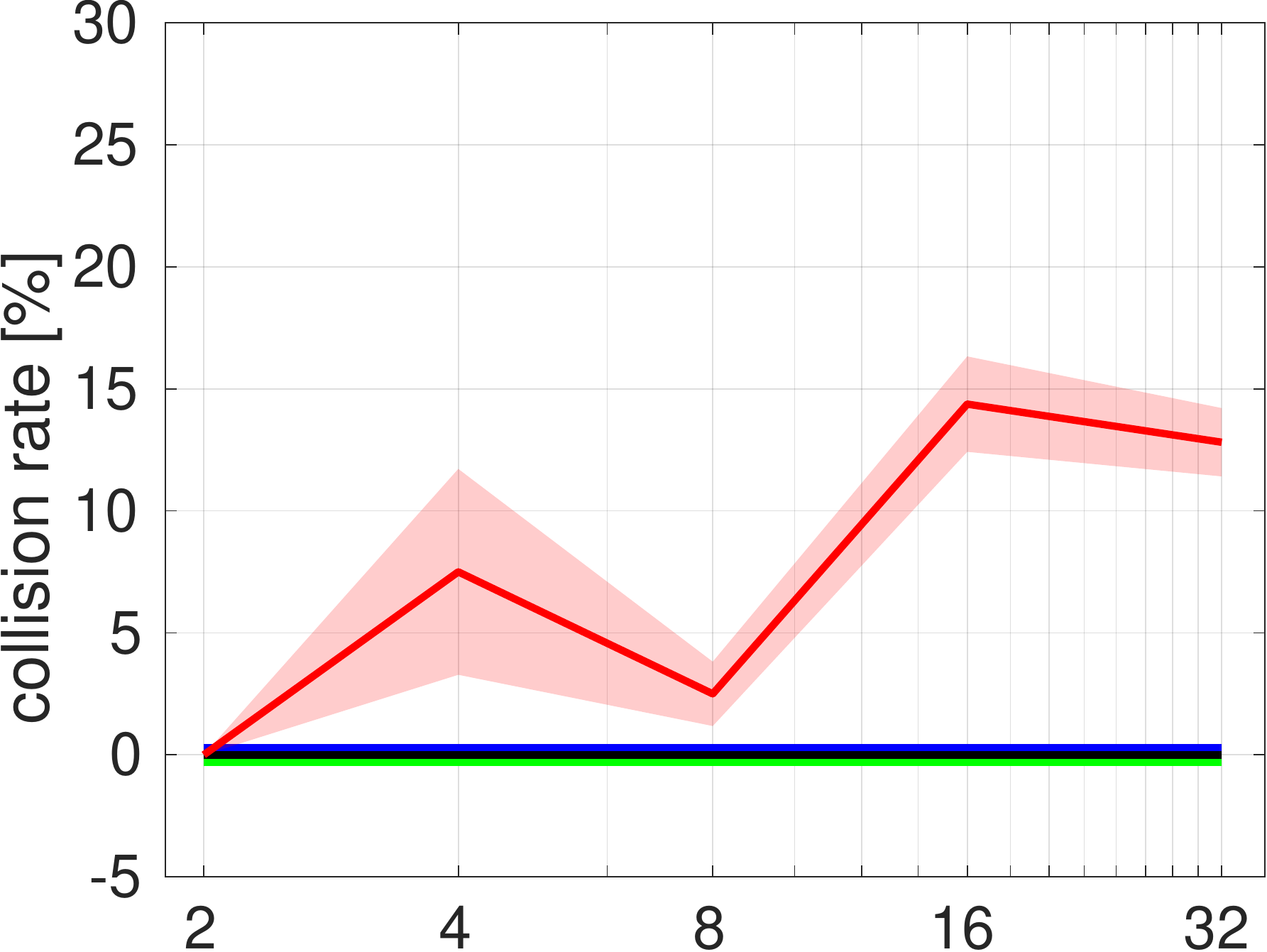}}
    \\
    \subfloat[]{\label{subfig:thresh_min_dis_asy}
       \includegraphics[width=.23\textwidth]{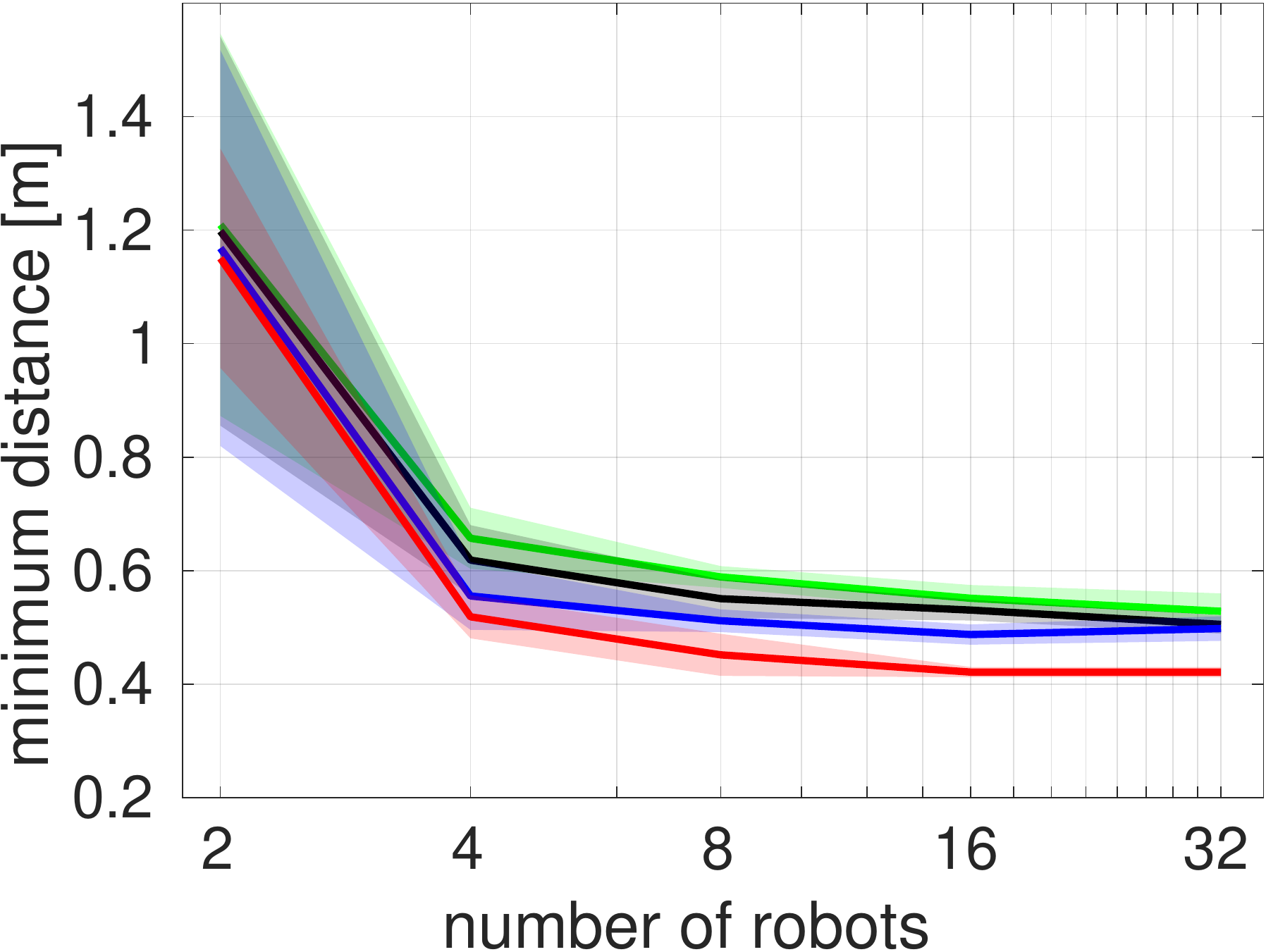}}
    \subfloat[]{\label{subfig:thresh_min_dis_rand}
       \includegraphics[width=.23\textwidth]{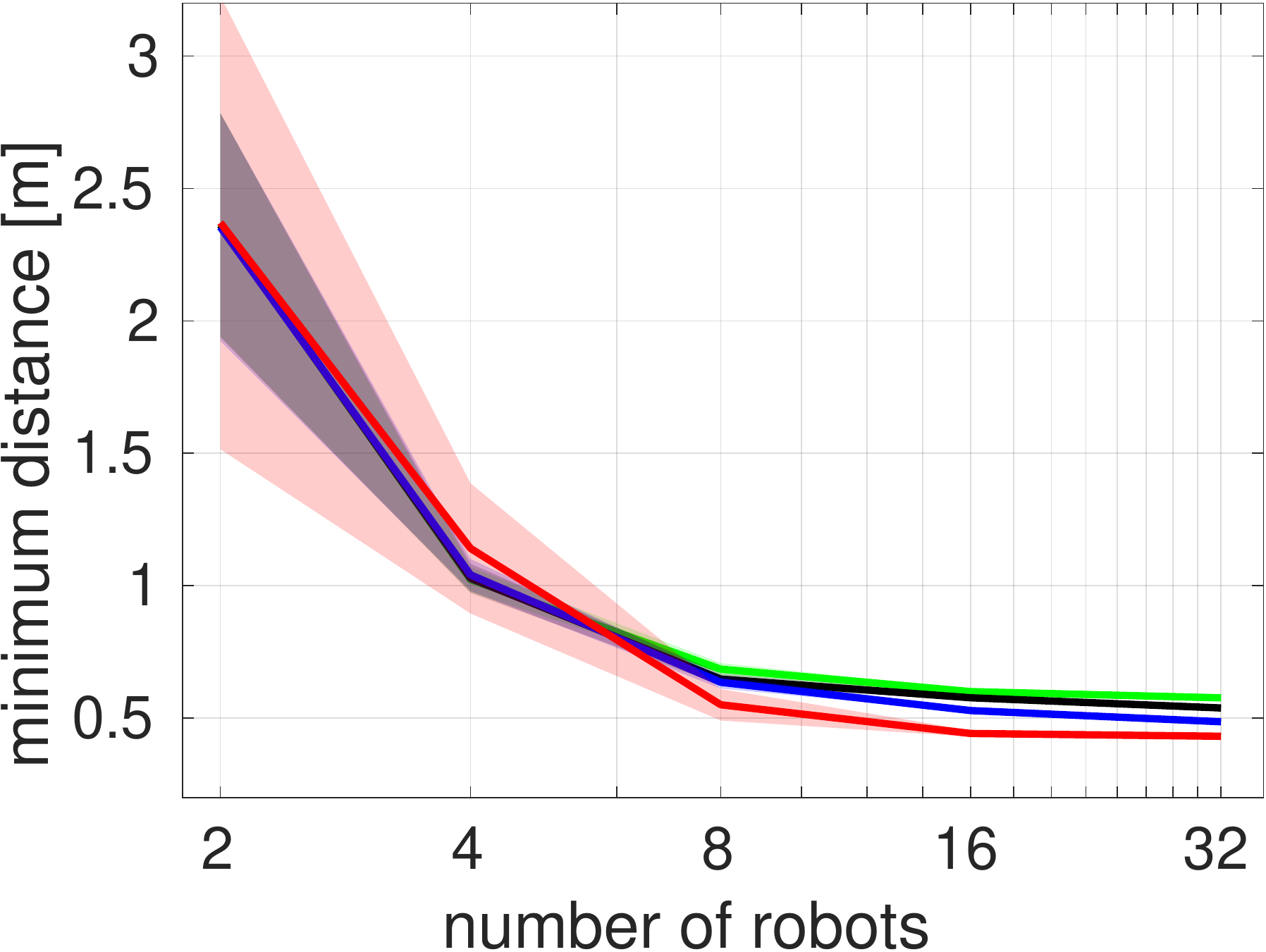}}
    \caption{Effect of the collision probability threshold on the method performance. The (a)-(b) collision rate, and (c)-(d) minimum distance among robots are shown. The evaluation has 2, 4, 8, 16, and 32 robot cases with 10 instances each. The left column shows results of the asymmetric swapping scenario and the right column shows results of the random moving scenario with 10\% obstacle density. Lines denote mean values and shaded areas around the lines denote standard deviations over 50 runs. }
    \label{fig:compare_thresh}
\end{figure}

\subsection{Simulations with Quadrotors in 3D Space}

\begin{figure}[t]
   \centering
   \subfloat{\label{}
      \includegraphics[width=.20\textwidth]{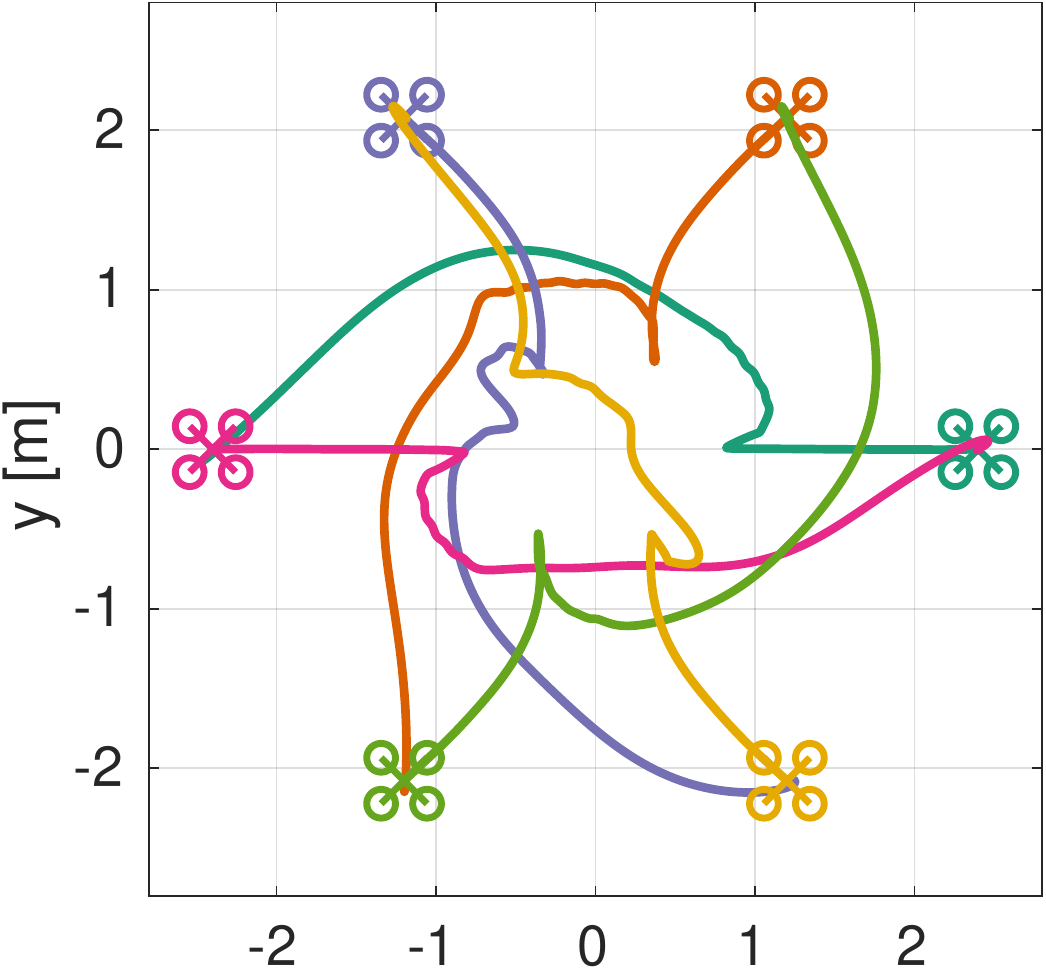}}
   \subfloat{\label{}
      \includegraphics[width=.20\textwidth]{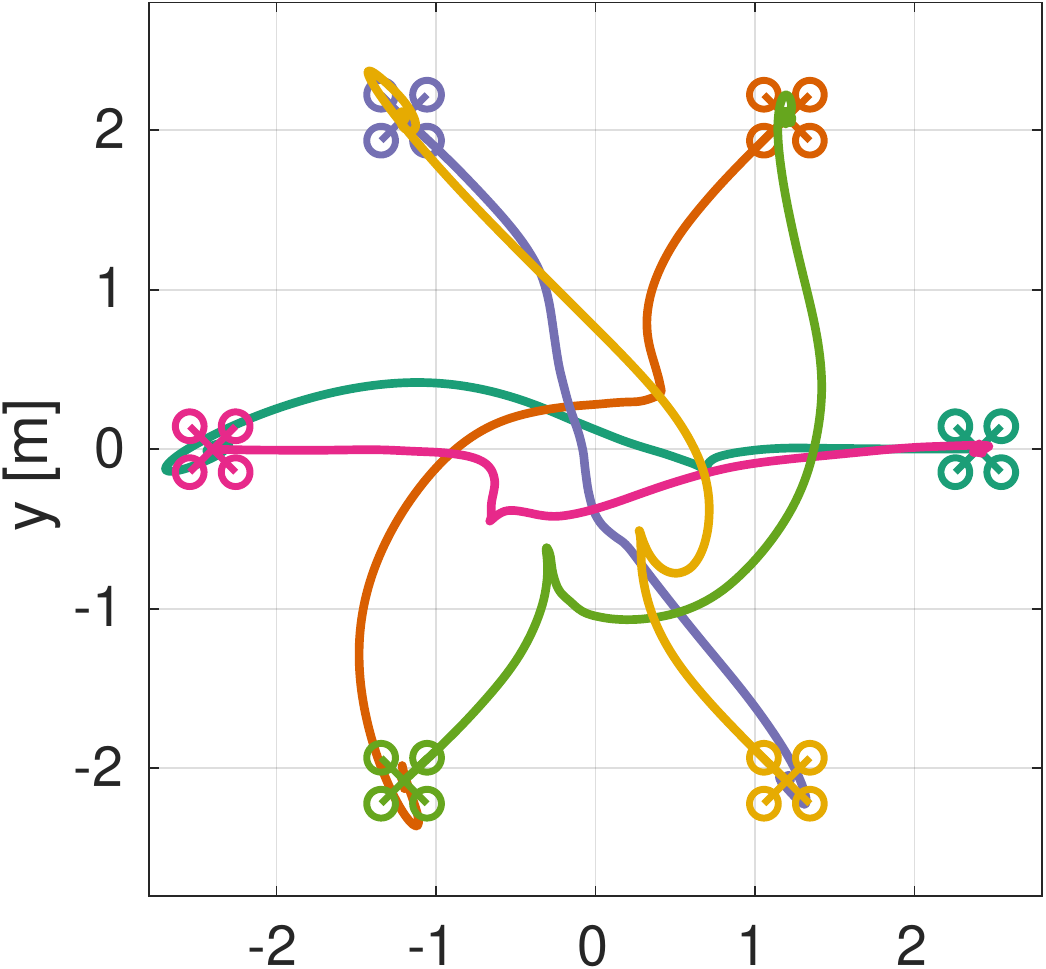}}
   \\
   ~
   \setcounter{subfigure}{0}
   \subfloat[]{\label{subfig:buavc_4}
      \includegraphics[width=.20\textwidth]{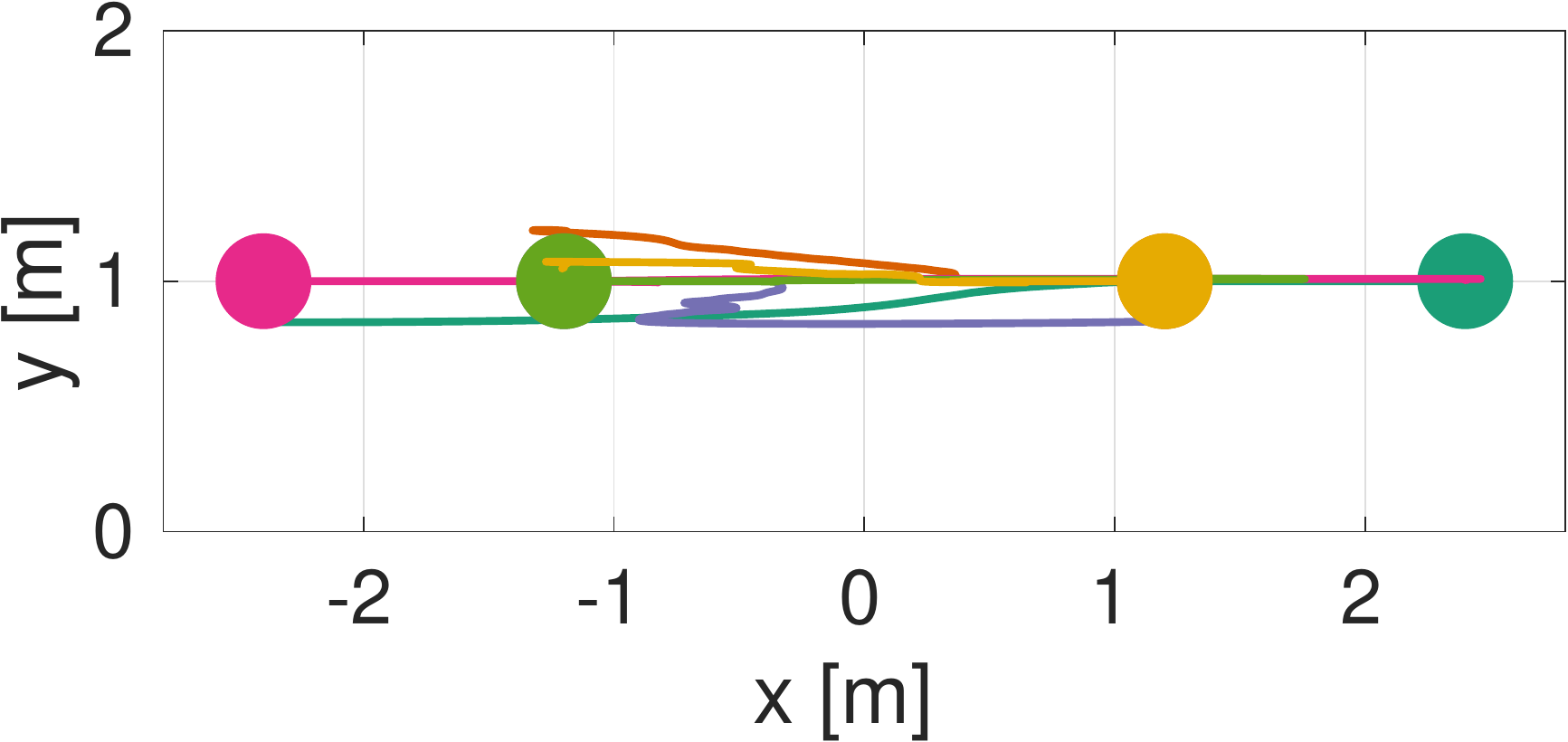}}
   \subfloat[]{\label{subfig:ccmpc_4}
      \includegraphics[width=.20\textwidth]{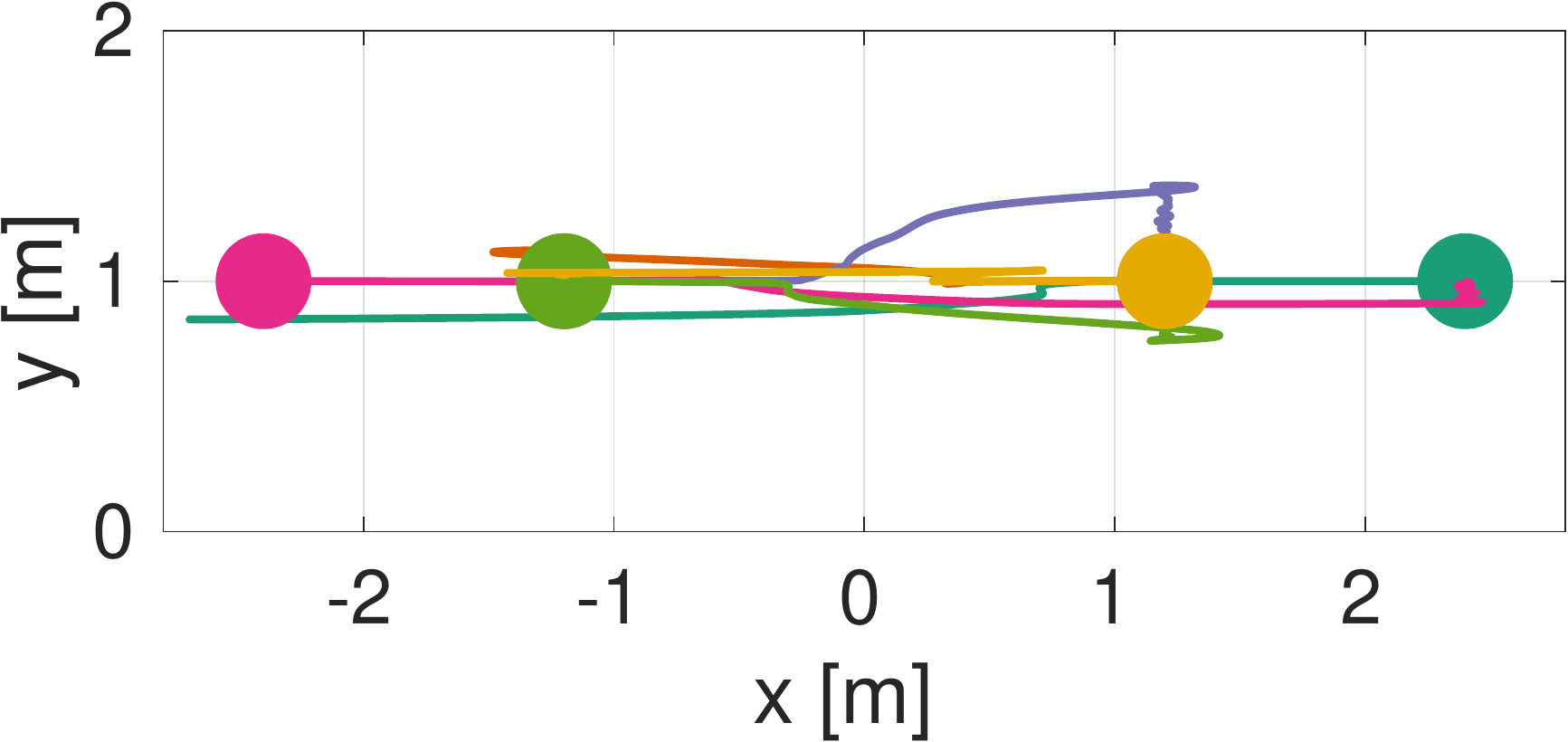}} 
   ~ 
   \caption{Simulation with six quadrotors exchanging positions in 3D space. Solid lines represent executed trajectories of the robots. (a) Results of our B-UAVC method. (b) Results of the CCNMPC method \citep{Zhu2019RAL}.}%
   \label{fig:simMPCResults}%
\end{figure}

We evaluate our receding horizon planning algorithm with quadrotors in 3D space and compare our method with one of the state-of-the-art quadrotor collision avoidance methods: the chance constrained nonlinear MPC (CCNMPC) with sequential planning \citep{Zhu2019RAL}, which requires communication of future planned trajectories among robots. For both methods, we adopt the same quadrotor dynamics model for planning. The quadrotor radius is set as $r = 0.3$ m and the collision probability threshold is set to $\delta = 0.03$. The time step is $\Delta t = 0.05$ s and the total number of steps is $N = 20$ resulting in a planing horizon of one second. 

As shown in Fig. \ref{fig:simMPCResults}, we simulate with six quadrotors exchanging their initial positions in an obstacle-free 3D space. Each quadrotor is under localization uncertainty $\Sigma = \tn{diag}(0.04~\tn{m},~0.04~\tn{m},~0.04~\tn{m})^2$. For each method, we run the simulation 10 times and calculate the minimum distance among robots. Both our B-UAVC method and the CCNMPC method successfully navigates all robots without collision. An average minimum distance of 0.72 m is observed in our B-UAVC method, while the one of CCNMPC is 0.62 m, which indicates our method is more conservative than the CCNMPC. However, the CCNMPC is centralized and requires robots to communicate their future planned trajectories with each other, while the B-UAVC method only needs robot positions to be shared or sensed.

\section{Experimental Validation}\label{sec:exp_result}
In this section we describe the experimental results with a team of real robots. A video demonstrating the results accompanies this paper. 

\begin{figure*}[t]
    \centering
    \subfloat{\label{}
       \includegraphics[width=.24\textwidth]{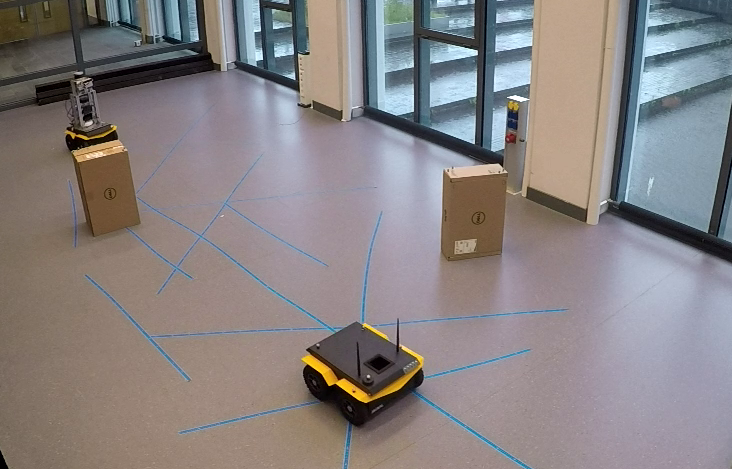}}
    \subfloat{\label{}
       \includegraphics[width=.24\textwidth]{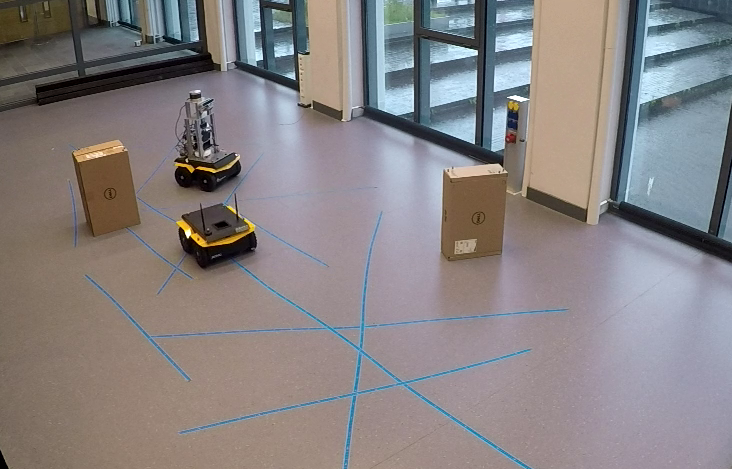}}
    \subfloat{\label{}
       \includegraphics[width=.24\textwidth]{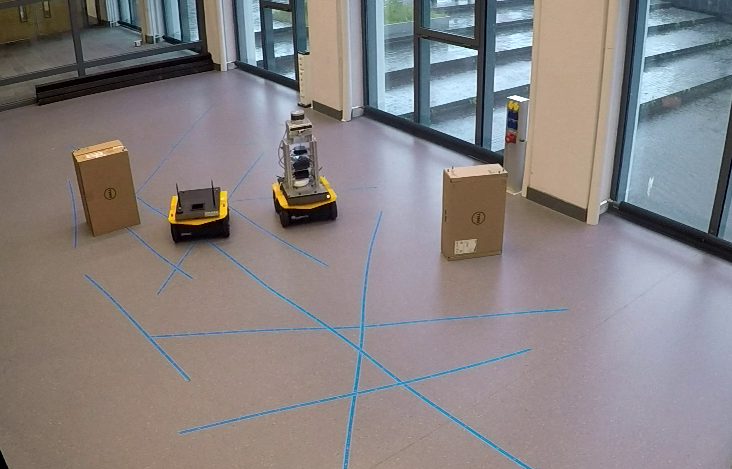}}
    \subfloat{\label{}
       \includegraphics[width=.24\textwidth]{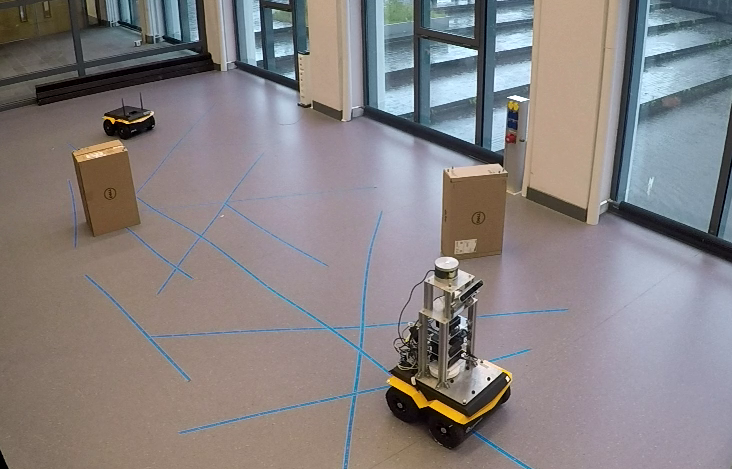}}
    \\
    \setcounter{subfigure}{0}
    \subfloat[$t$ = 2 s.]{\label{}
       \includegraphics[width=.24\textwidth]{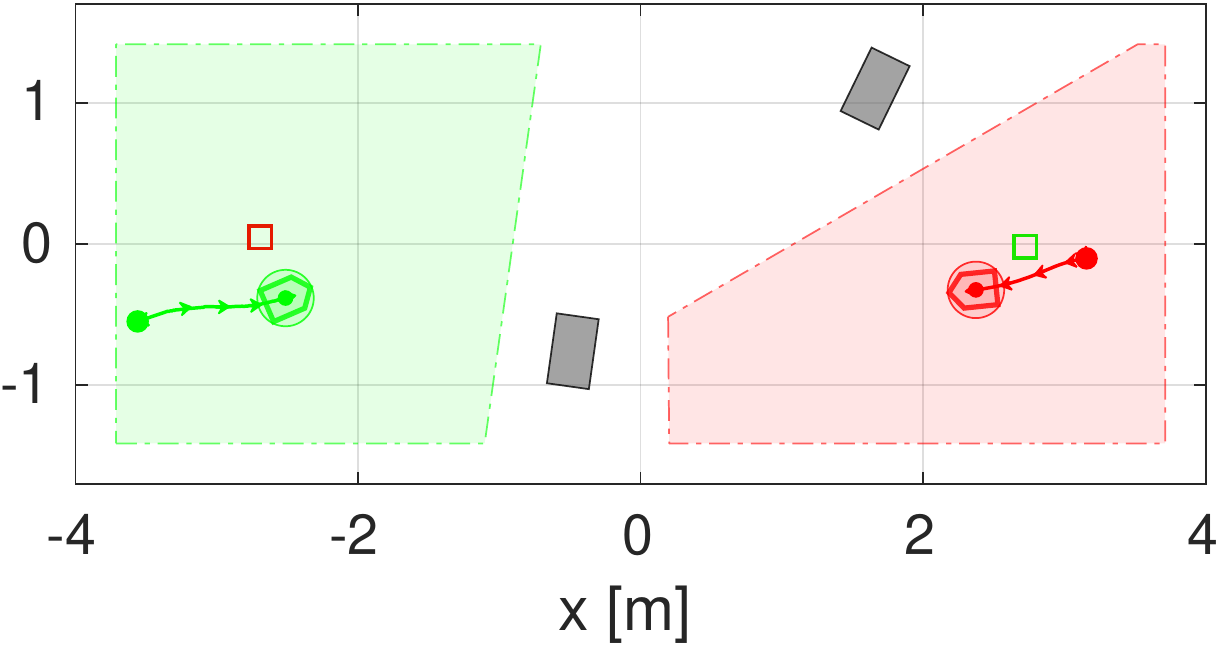}}
    \subfloat[$t$ = 6 s.]{\label{}
       \includegraphics[width=.24\textwidth]{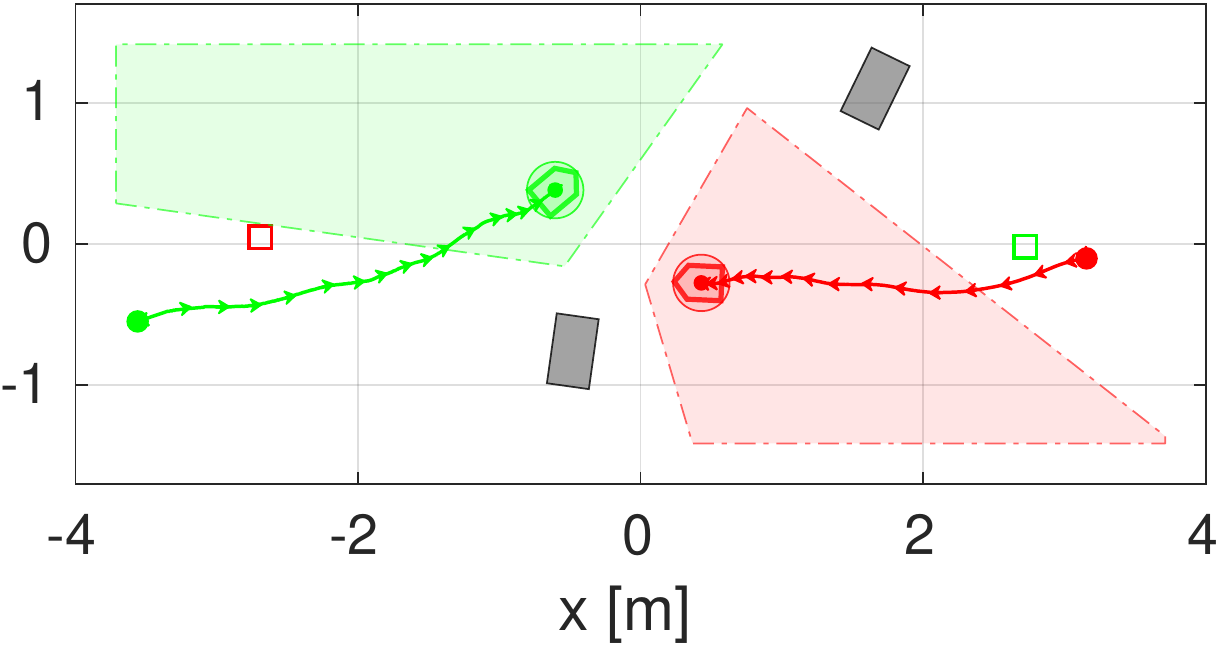}}
    \subfloat[$t$ = 8 s.]{\label{}
       \includegraphics[width=.24\textwidth]{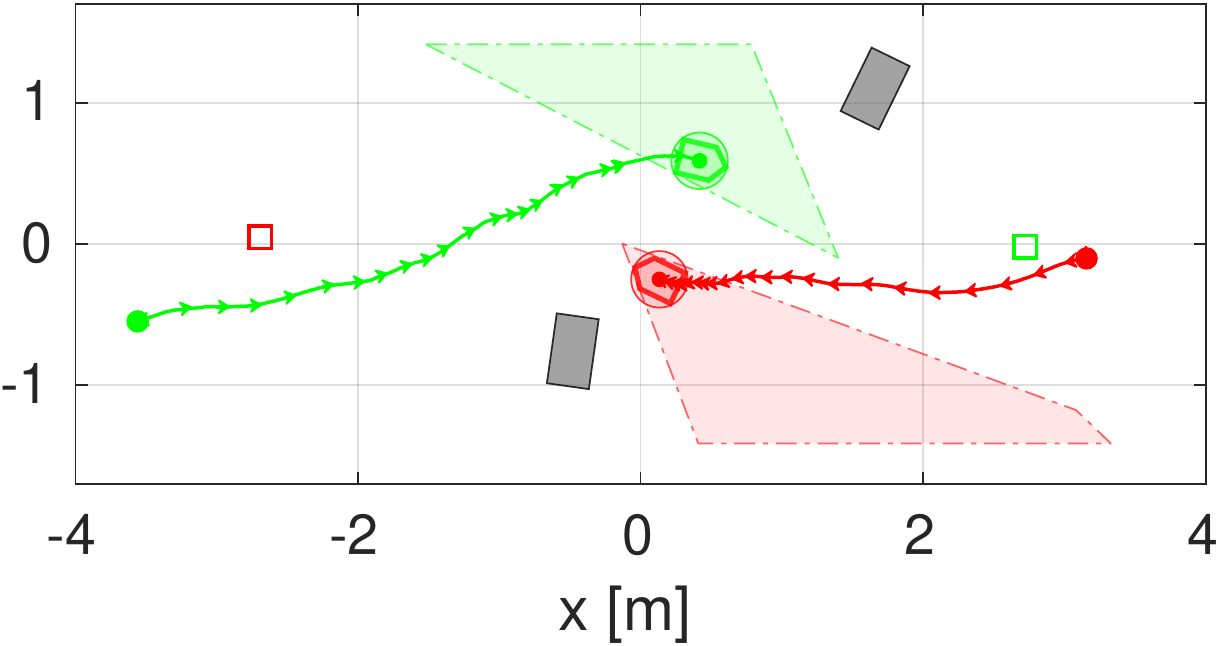}}
    \subfloat[$t$ = 15 s.]{\label{}
       \includegraphics[width=.24\textwidth]{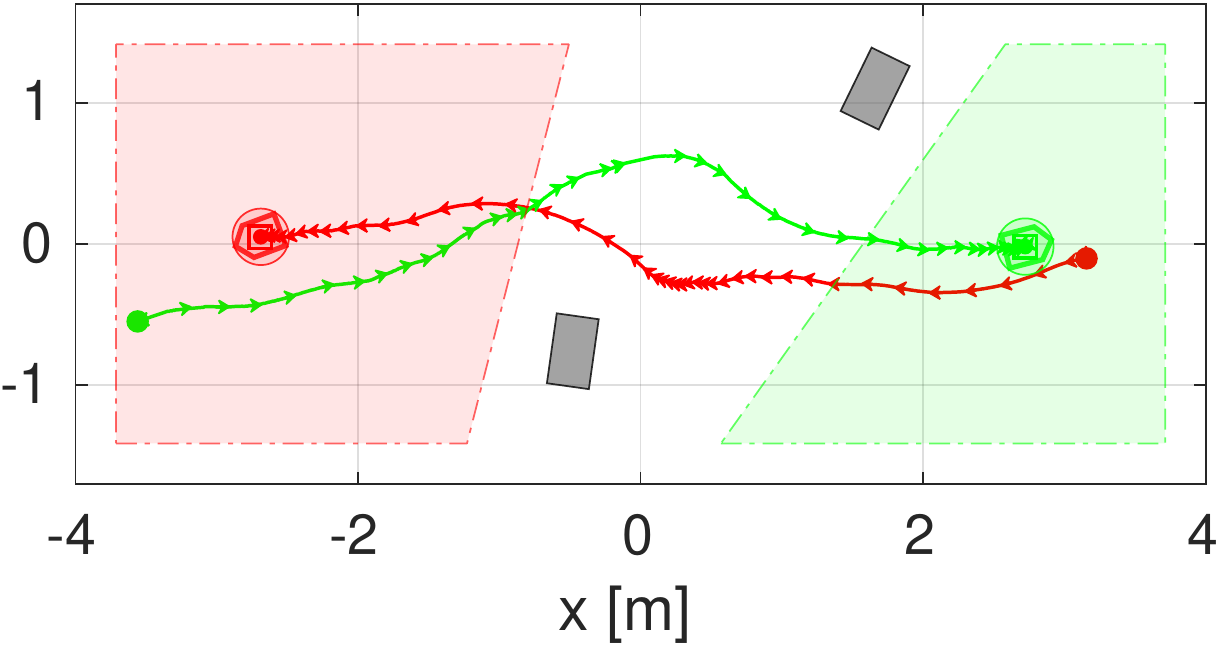}}
 
    \caption{Collision avoidance with two differential-drive robots and two static obstacles. The two robots are required to swap their positions. Top row: Snapshots of the experiment. Bottom row: Trajectories of the robots. Robot initial and goal positions are marked in circle disks and solid squares, respectively. Grey boxes are static obstacles. The B-UAVCs are shown in shaded patches with dashed boundaries. }%
    \label{fig:exp_jackal}
 \end{figure*}
 
 \begin{figure}[h]
    \centering
    \subfloat[]{\label{subfig:jackal_dis}
       \includegraphics[width=.23\textwidth]{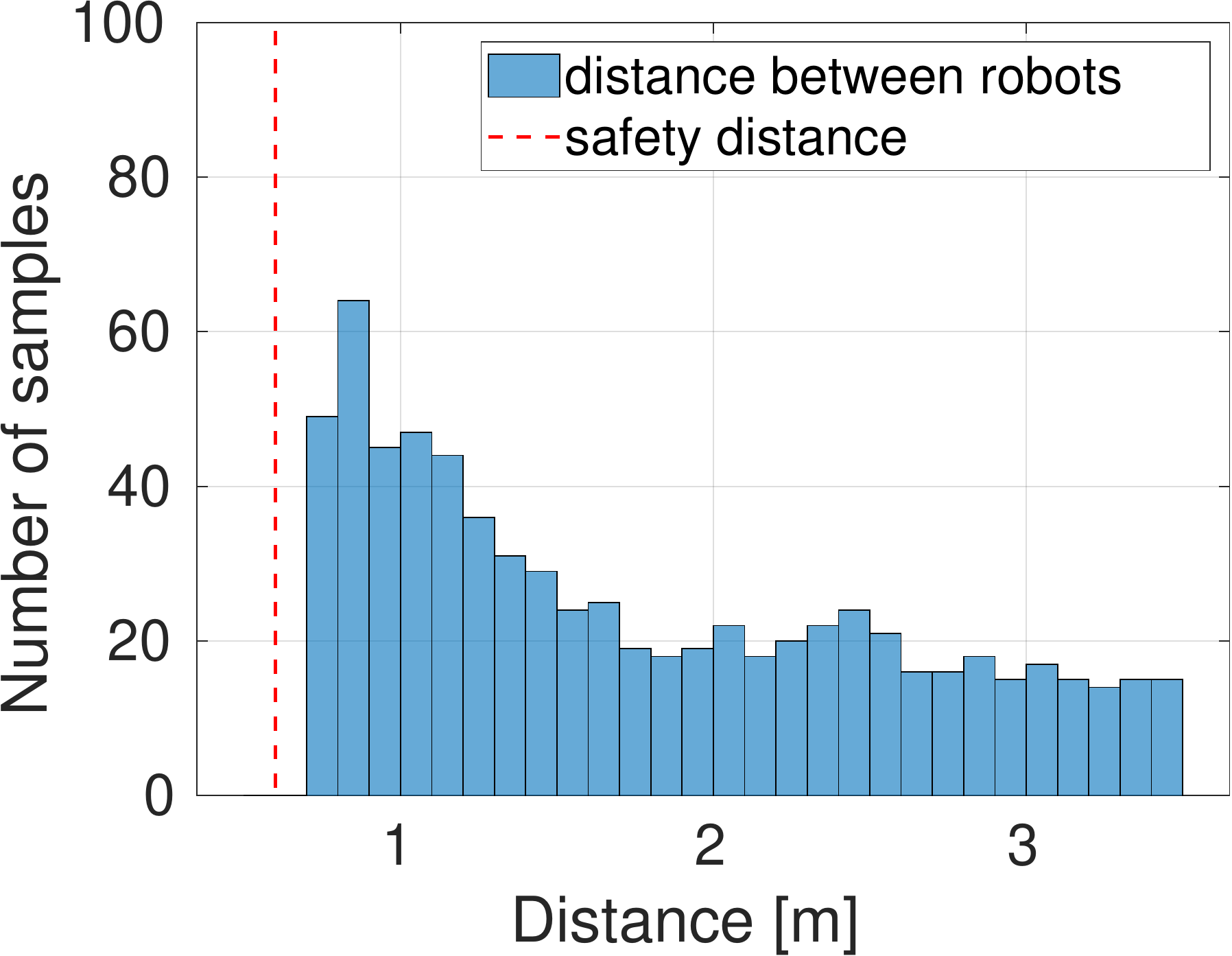}}
    \subfloat[]{\label{subfig:jackal_dis_obs}
       \includegraphics[width=.23\textwidth]{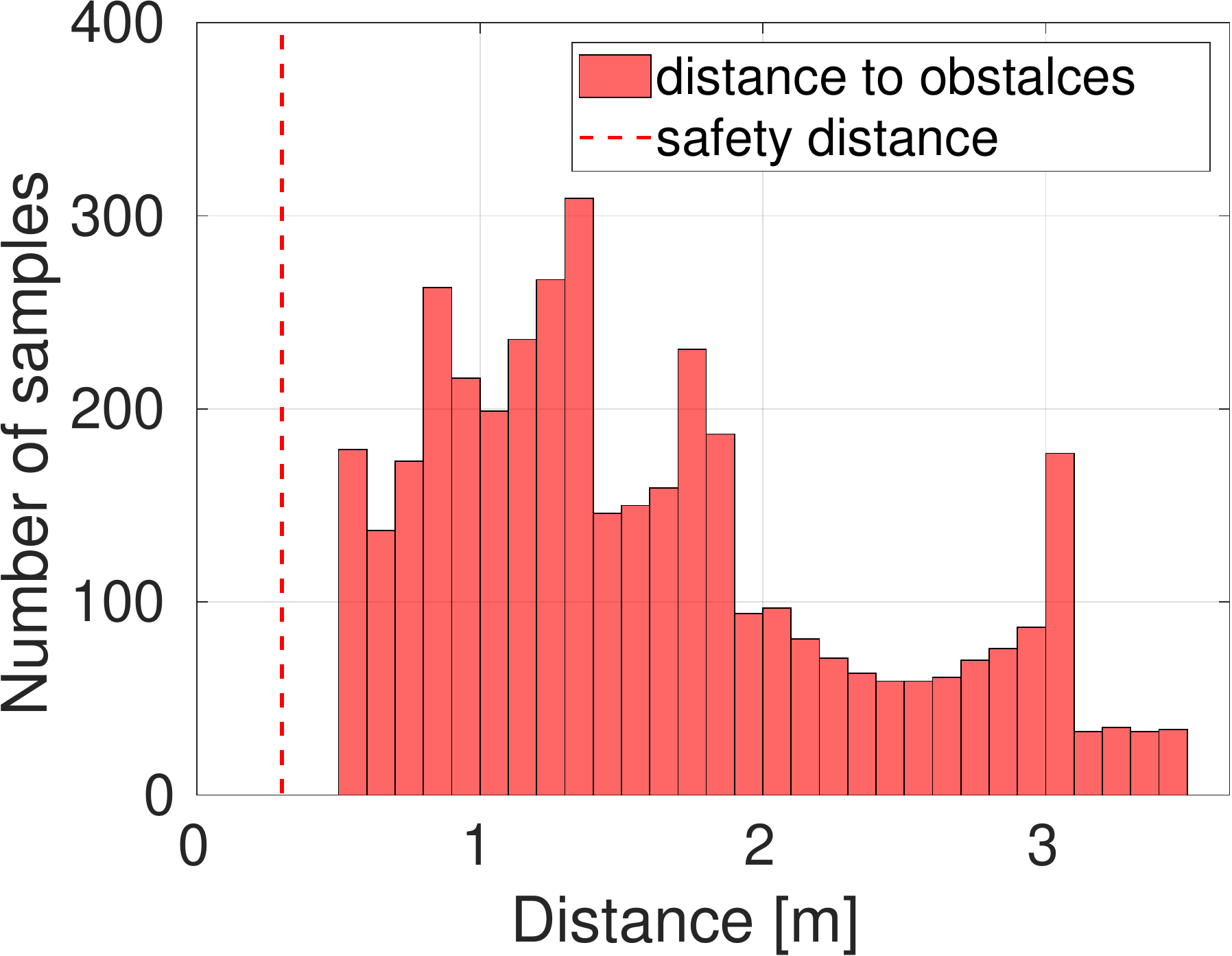}}
 
    \caption{Experimental results with two differential-drive robots. (a) Histogram of inter-robot distance. (b) Histogram of distance between robots and obstacles. }%
    \label{fig:exp_jackal_sta}
 \end{figure}

\subsection{Experimental Setup}
We test our proposed approach on both ground vehicles and aerial vehicles in an indoor environment of 8m (L) $\times$ 3.4m (W) $\times$ 2.5m (H). Our ground vehicle platform is the Clearpath Jackal robot and our aerial vehicle platform is the Parrot Bebop 2 quadrotor. For ground vehicles, we apply the controller designed for differential-drive robots as shown in Section \ref{subsubsec:control_diff}. For quadrotors, the receding horizon trajectory planner presented in Section \ref{subsec:control_mpc} is employed.  
\rebuttal{The quadrotor dynamics model $\vf$ in Problem 1 is given in Appendix \ref{appendix:quad_model}. For solving Problem 1 which is a nonlinear programming problem, we rely on the solver Forces Pro \citep{Zanelli2020} to generate fast C code to solve it.}
Both types of robots allow executing control commands sent via ROS. The experiments are conducted in a standard laptop (Quadcore Intel i7 CPU@2.6 GHz) which connects with the robots via WiFi. 

\rebuttal{An external motion capture system (OptiTrack) is used to track the pose (position and orientation) of each robot and obstacle in the environment running in real time at 120 Hz, which is regarded as the \emph{real} (ground-truth) pose. To validate collision avoidance under uncertainty, we then manually add Gaussian noise to the real pose data to generate noisy measurements. Taking the noisy measurements as inputs, a standard Kalman filter running at 120 Hz is employed to estimate the states of the robots and obstacles.} In all experiments, the added position measurements noise to the robots is zero mean with covariance $\Sigma_i^{\prime} = \tn{diag}(0.06~\tn{m}, 0.06~\tn{m}, 0.06~\tn{m})^2$, which results in an average estimated position uncertainty covariance $\Sigma_i = \tn{diag}(0.04~\tn{m}, 0.04~\tn{m}, 0.04~\tn{m})^2$. The added noise to the obstacles is zero mean with covariance $\Sigma_o^{\prime} = \tn{diag}(0.03~\tn{m}, 0.03~\tn{m}, 0.03~\tn{m})^2$ and the resulted estimated position uncertainty covariance is $\Sigma_o = \tn{diag}(0.02~\tn{m}, 0.02~\tn{m}, 0.02~\tn{m})^2$. The collision probability threshold is set as $\delta = 0.03$ as in previous works \citep{Zhu2019MRS,Zhu2019RAL}.

\begin{figure*}[h]
   \centering
   \subfloat{\label{}
      \includegraphics[width=.3\textwidth]{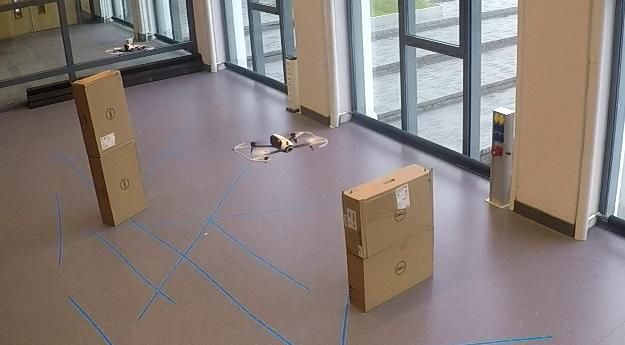}}
   \subfloat{\label{}
      \includegraphics[width=.3\textwidth]{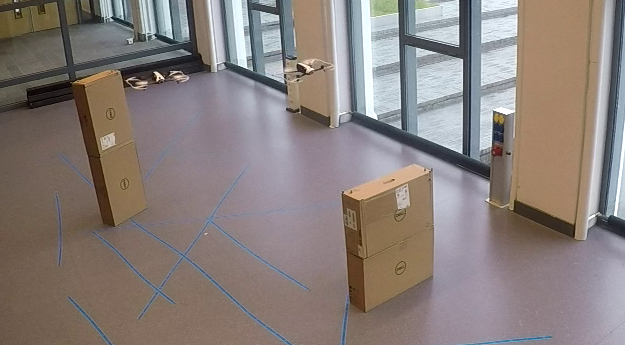}}
   \subfloat{\label{}
      \includegraphics[width=.3\textwidth]{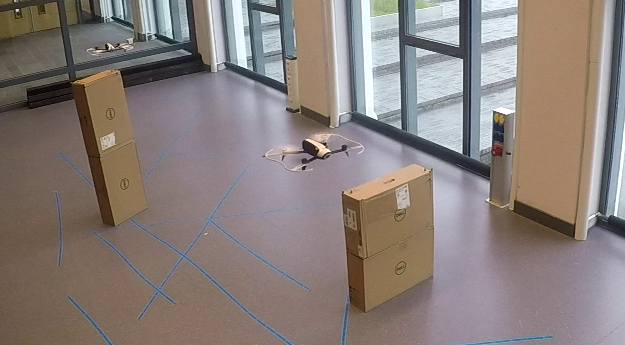}}
   \\
   \setcounter{subfigure}{0}
   \subfloat[$t$ = 0.1 s.]{\label{}
      \includegraphics[width=.33\textwidth]{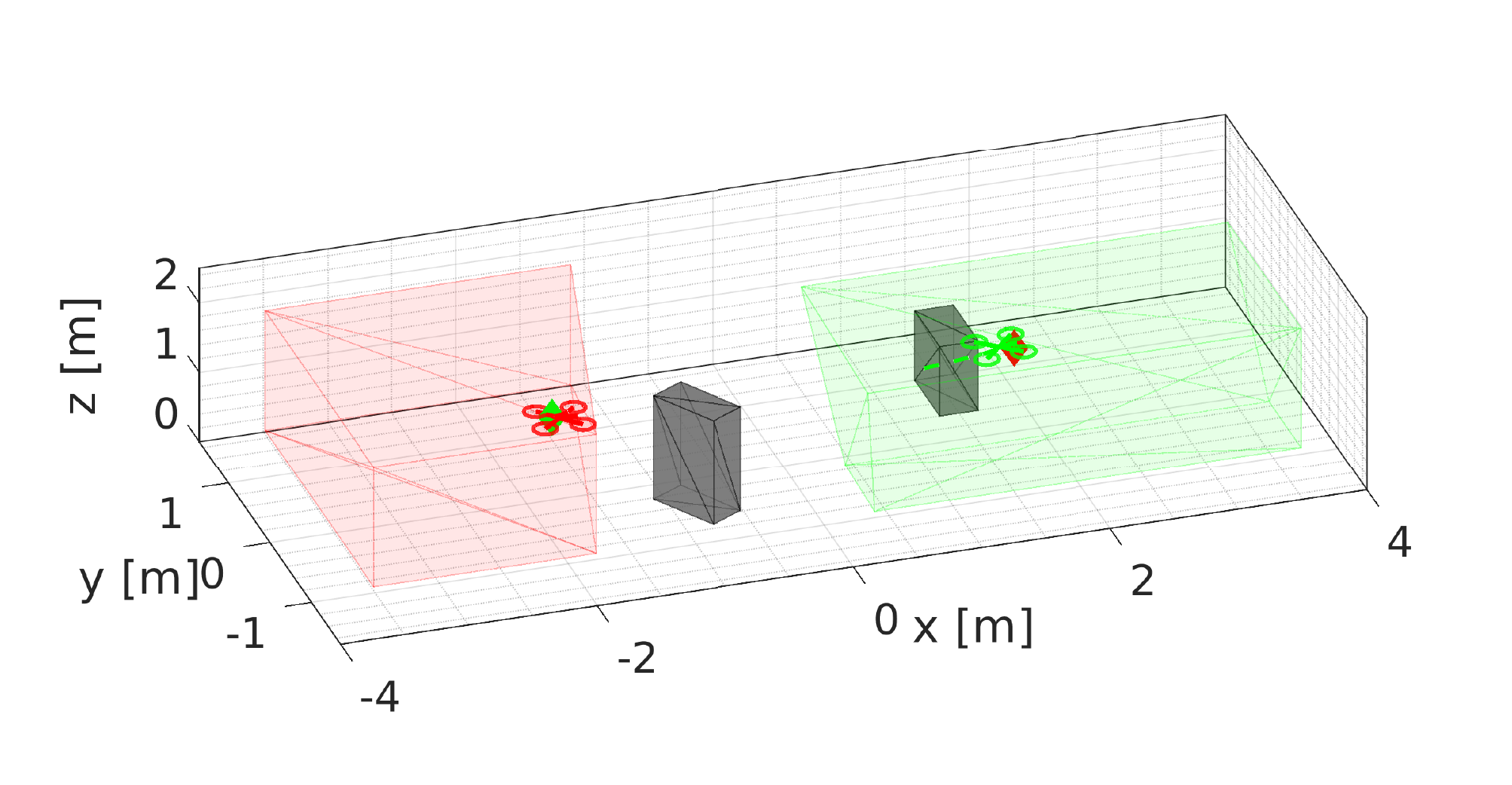}}
   \subfloat[$t$ = 5 s.]{\label{}
      \includegraphics[width=.33\textwidth]{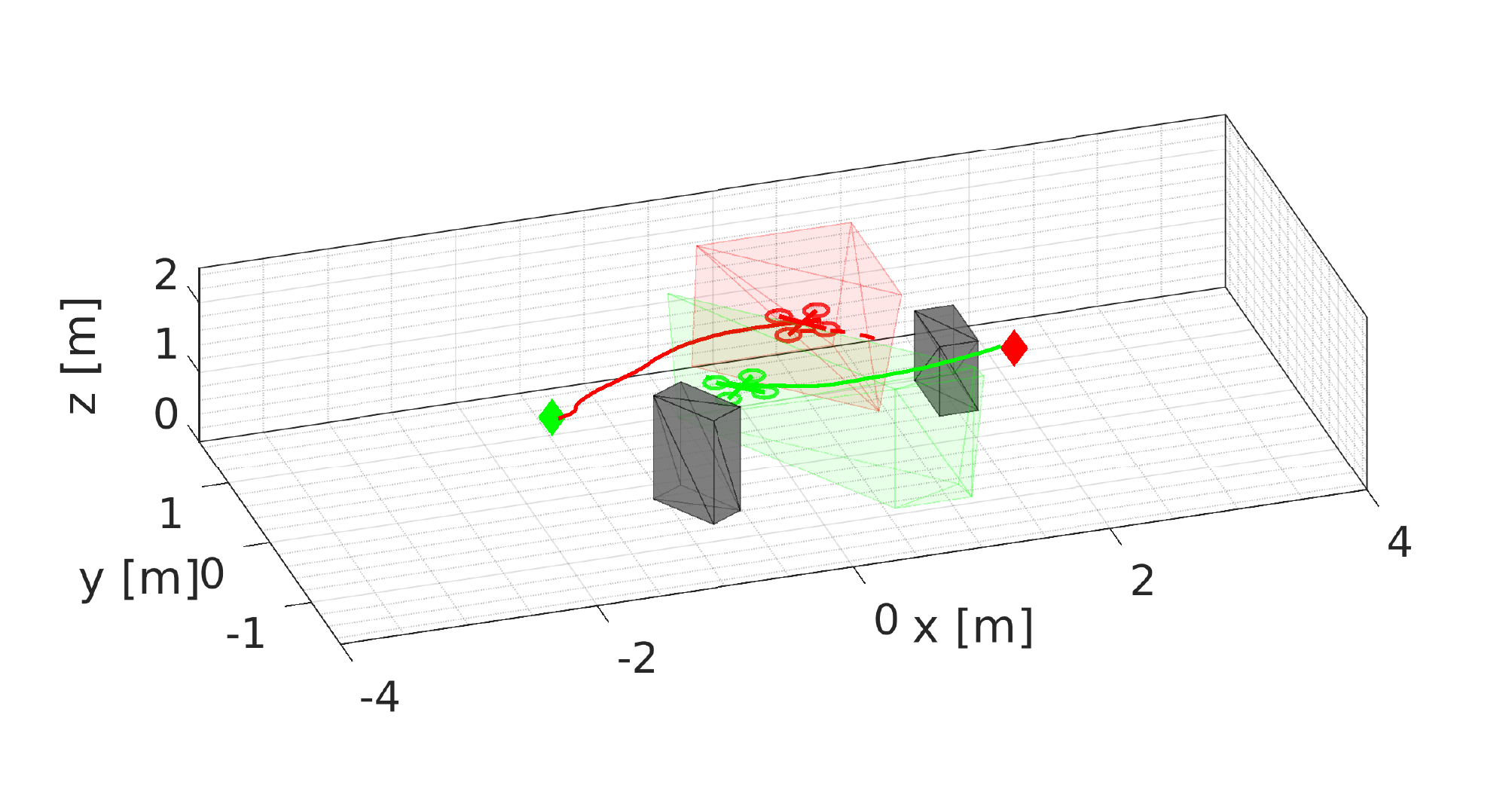}}
   \subfloat[$t$ = 10.5 s.]{\label{}
      \includegraphics[width=.33\textwidth]{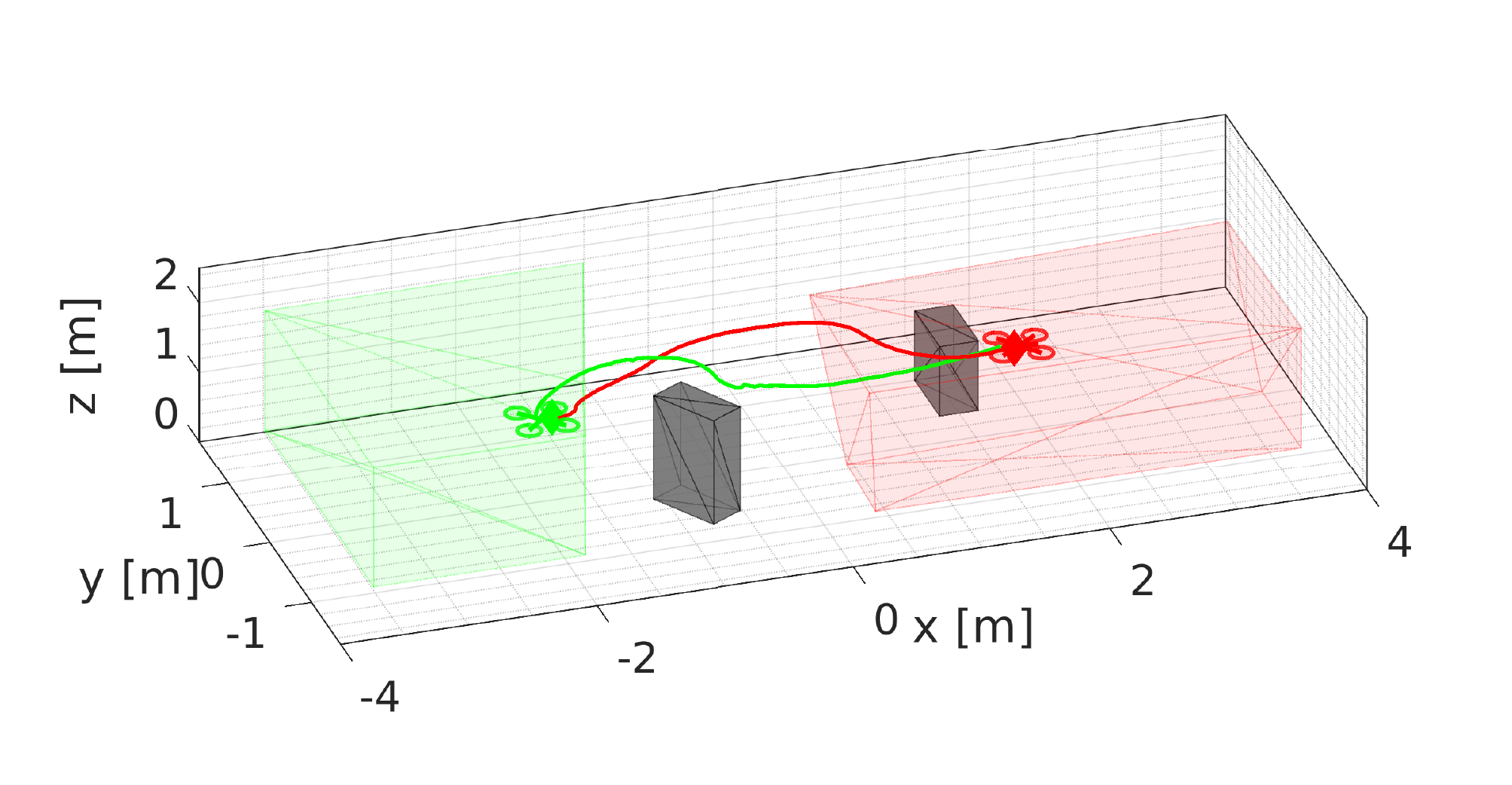}}

   \caption{Collision avoidance with two quadrotors and two static obstacles. The two quadrotors are required to swap their positions. Top row: Snapshots of the experiment. Bottom row: Trajectories of the robots. Quadrotor initial and goal positions are marked in circles and diamonds. Solid lines represent travelled trajectories and dashed lines represent planned trajectories.}%
   \label{fig:exp_quad_2}
\end{figure*}

\begin{figure*}[h]
   \centering
   \subfloat{\label{}
      \includegraphics[width=.3\textwidth]{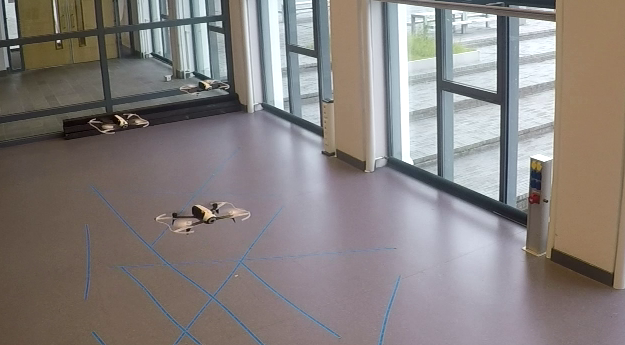}}
   \subfloat{\label{}
      \includegraphics[width=.3\textwidth]{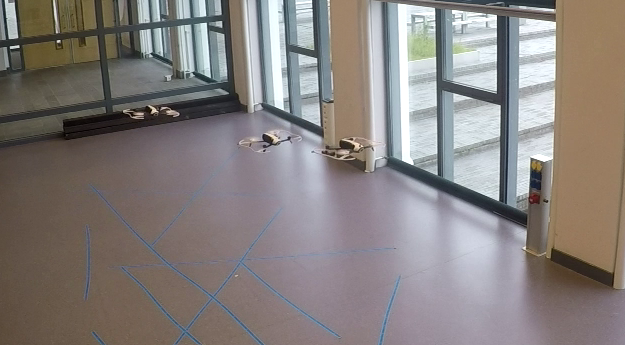}}
   \subfloat{\label{}
      \includegraphics[width=.3\textwidth]{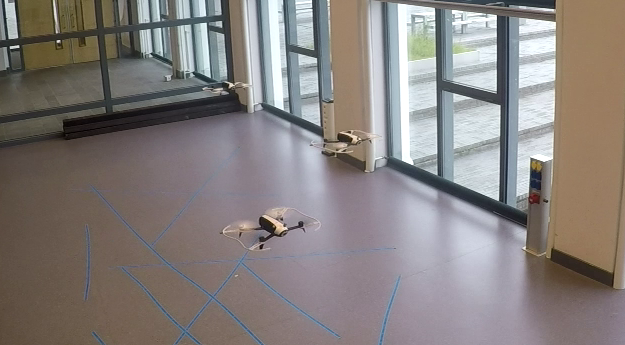}}
   \\
   \setcounter{subfigure}{0}
   \subfloat[$t$ = 0.1 s.]{\label{}
      \includegraphics[width=.33\textwidth]{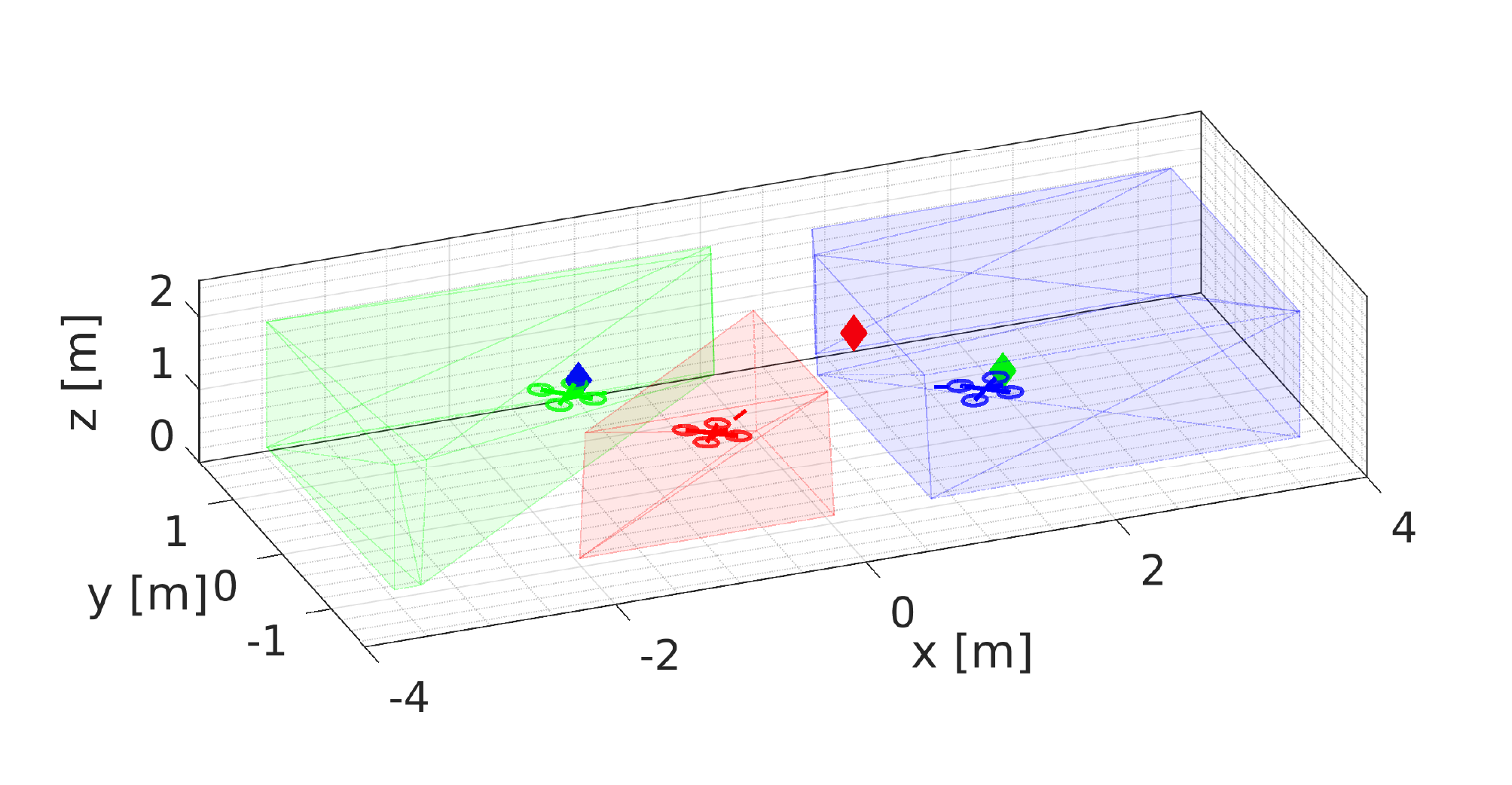}}
   \subfloat[$t$ = 6 s.]{\label{}
      \includegraphics[width=.33\textwidth]{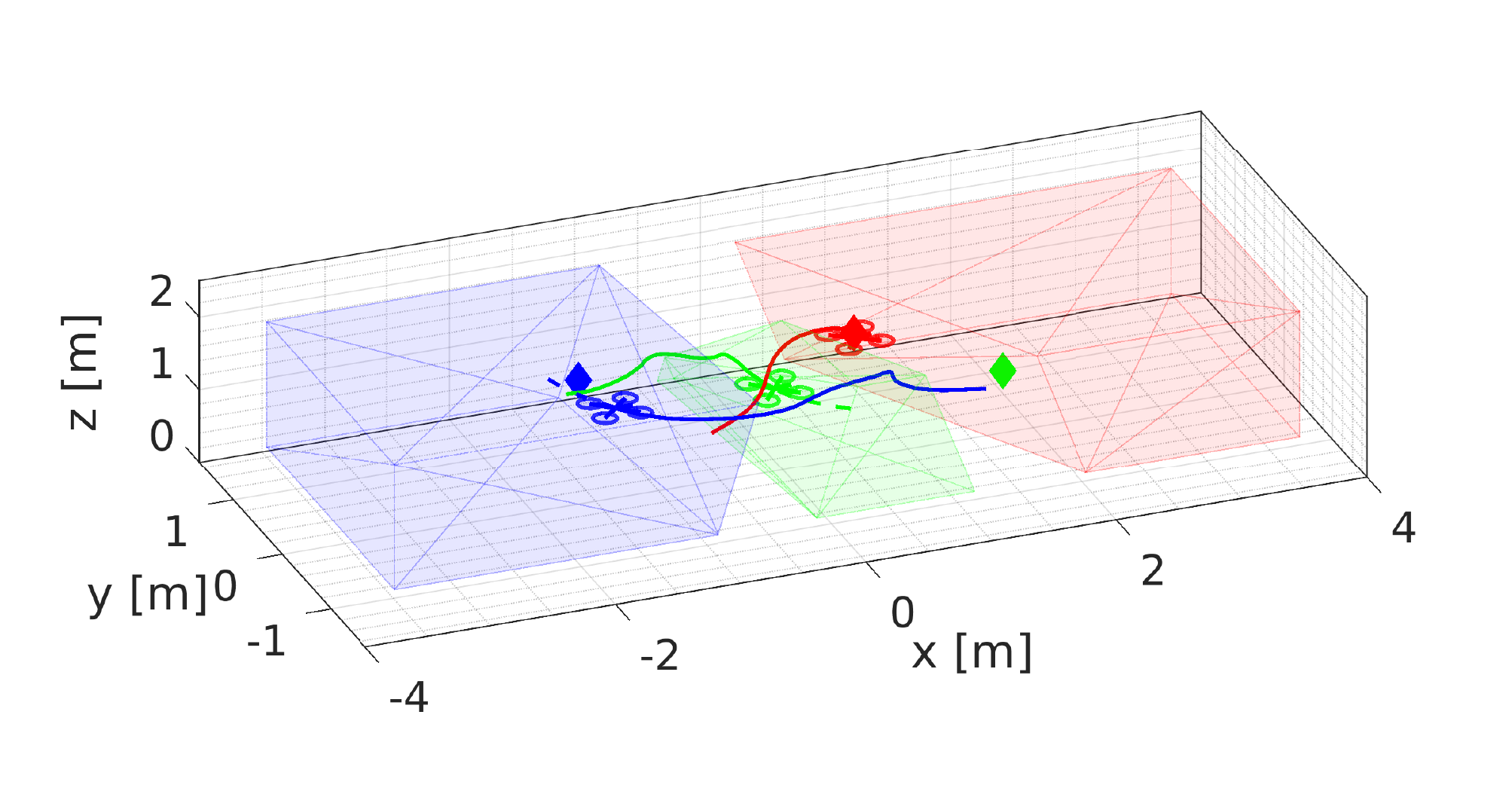}}
   \subfloat[$t$ = 10 s.]{\label{}
      \includegraphics[width=.33\textwidth]{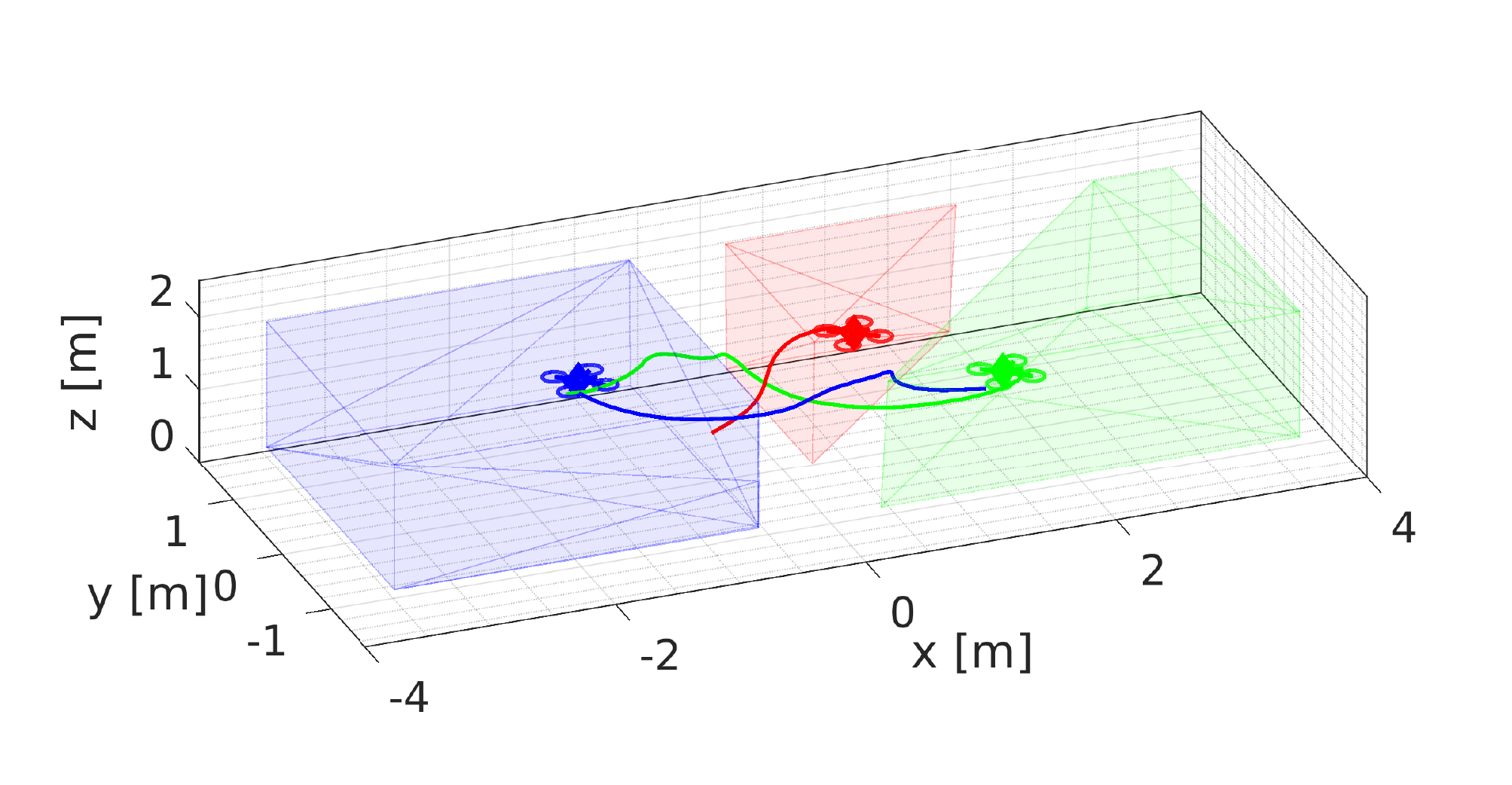}}

   \caption{Collision avoidance with three quadrotors in a shared workspace. Top row: Snapshots of the experiment. Bottom row: Trajectories of the robots.}%
   \label{fig:exp_quad_3}
\end{figure*}

\begin{figure}[t]
   \centering
   \subfloat[]{\label{subfig:quad_dis_his}
      \includegraphics[width=.23\textwidth]{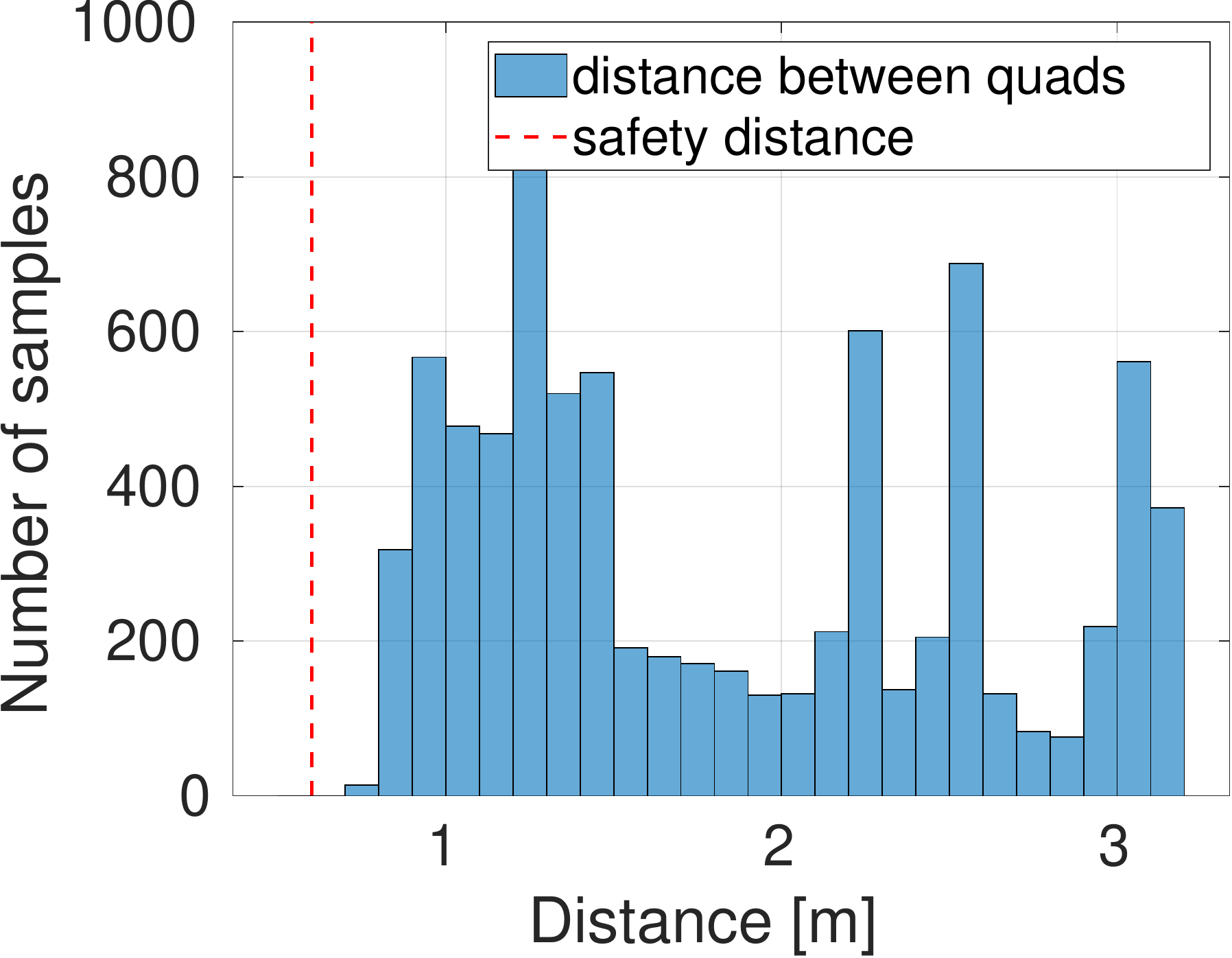}}
   \subfloat[]{\label{subfig:quad_obs_dis_his}
      \includegraphics[width=.23\textwidth]{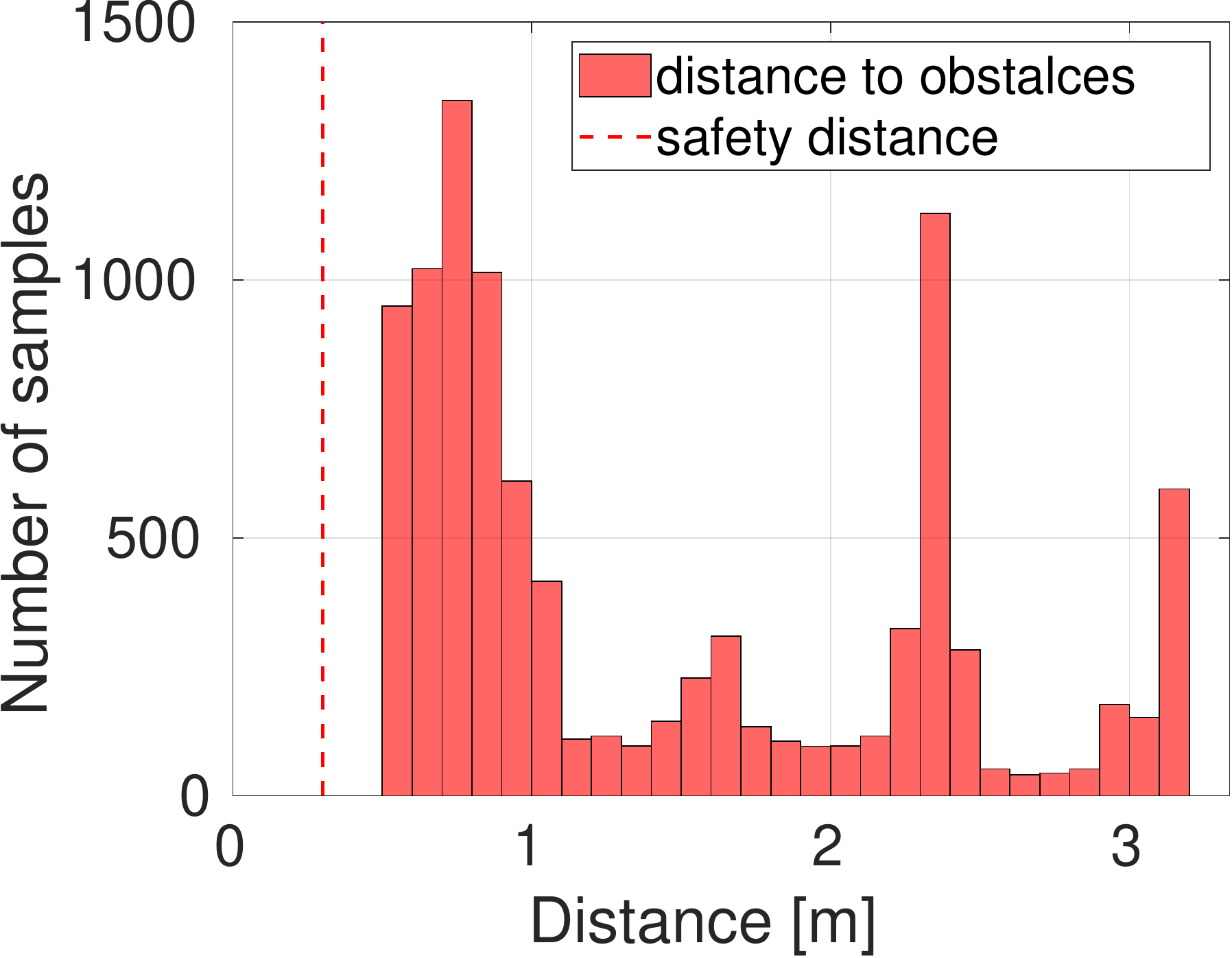}}

   \caption{Experimental results with two/three quadrotors with/without obstacles. (a) Histogram of inter-robot distance. (b) Histogram of distance between robots and obstacles. }%
   \label{fig:exp_quad_3_sta}
\end{figure}

\begin{figure*}[t]
   \centering
   \subfloat{\label{}
      \includegraphics[width=.24\textwidth]{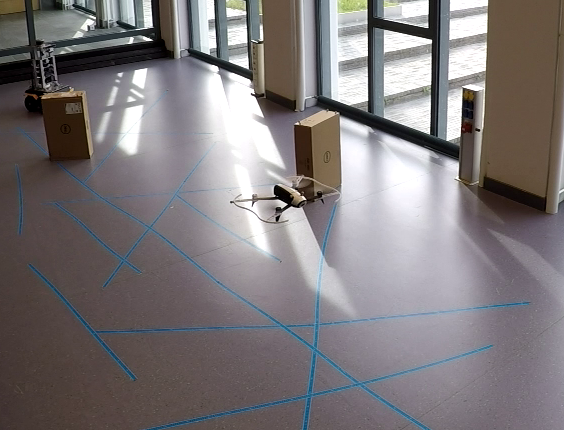}}
   \subfloat{\label{}
      \includegraphics[width=.24\textwidth]{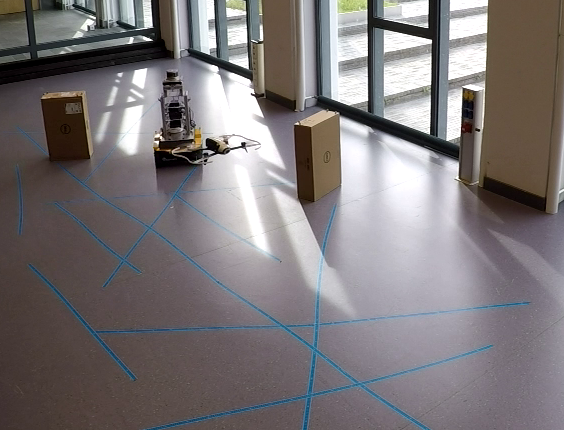}}
   \subfloat{\label{}
      \includegraphics[width=.24\textwidth]{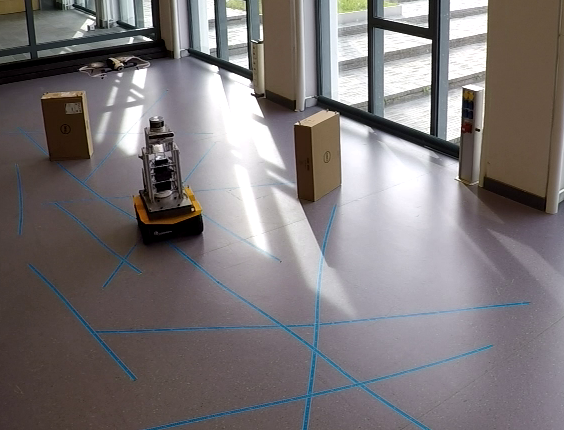}}
   \subfloat{\label{}
      \includegraphics[width=.24\textwidth]{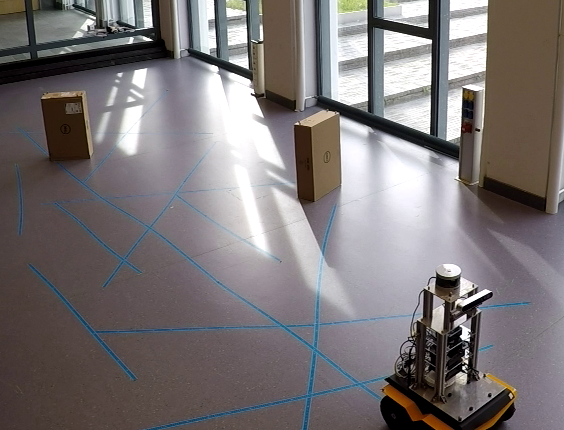}}
   \\
   \setcounter{subfigure}{0}
   \subfloat[$t$ = 0.1 s.]{\label{}
      \includegraphics[width=.24\textwidth]{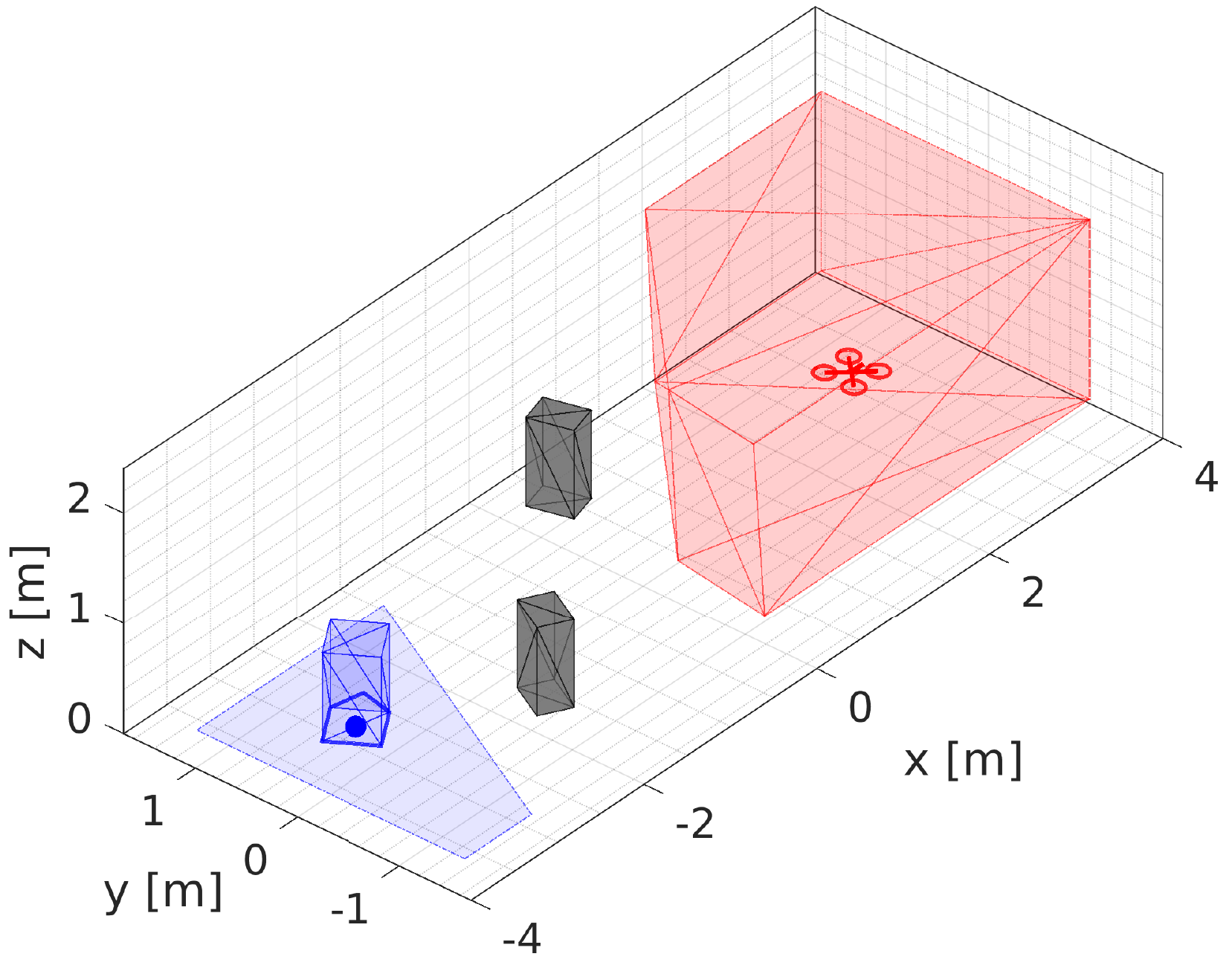}}
   \subfloat[$t$ = 4 s.]{\label{}
      \includegraphics[width=.24\textwidth]{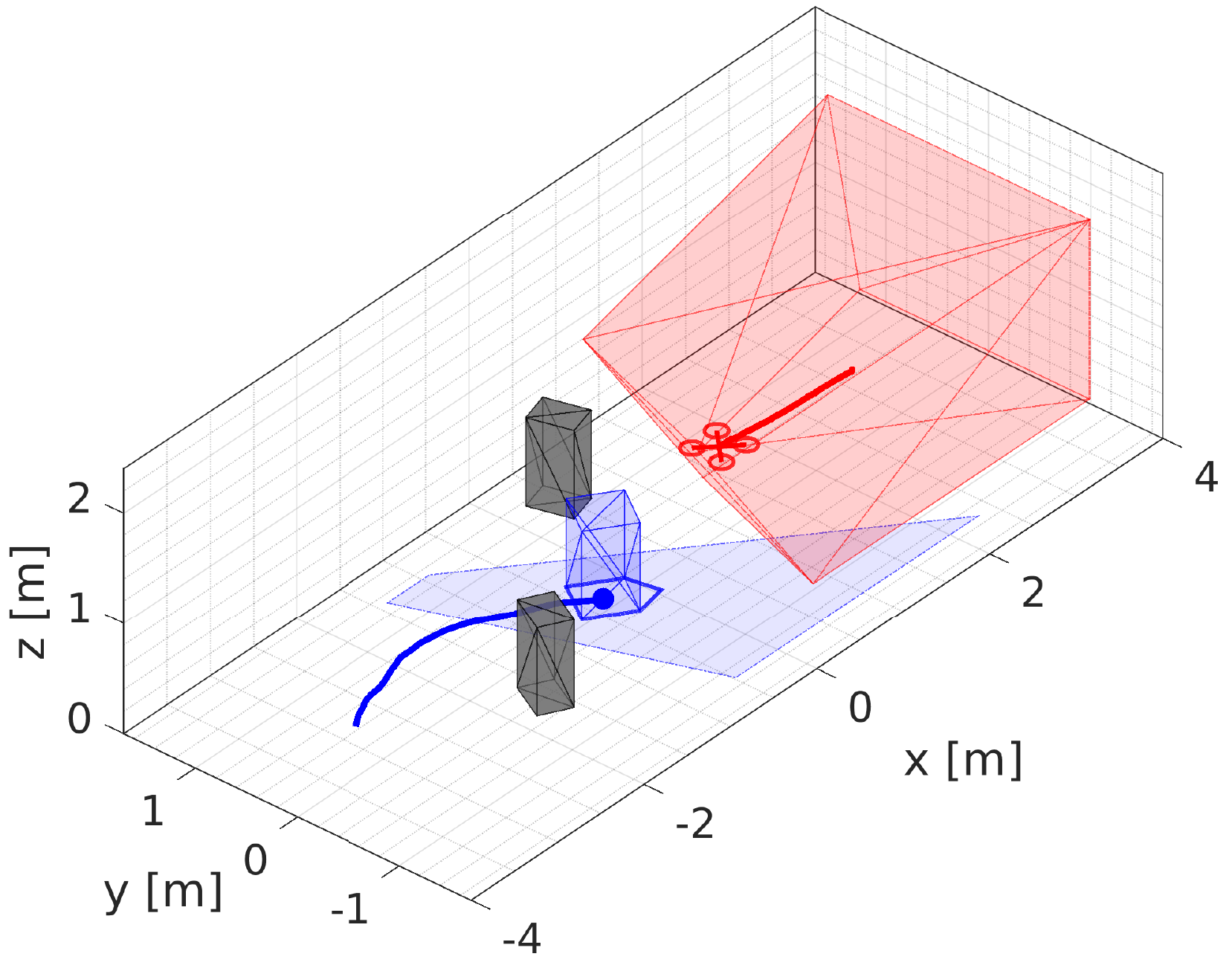}}
   \subfloat[$t$ = 6 s.]{\label{}
      \includegraphics[width=.24\textwidth]{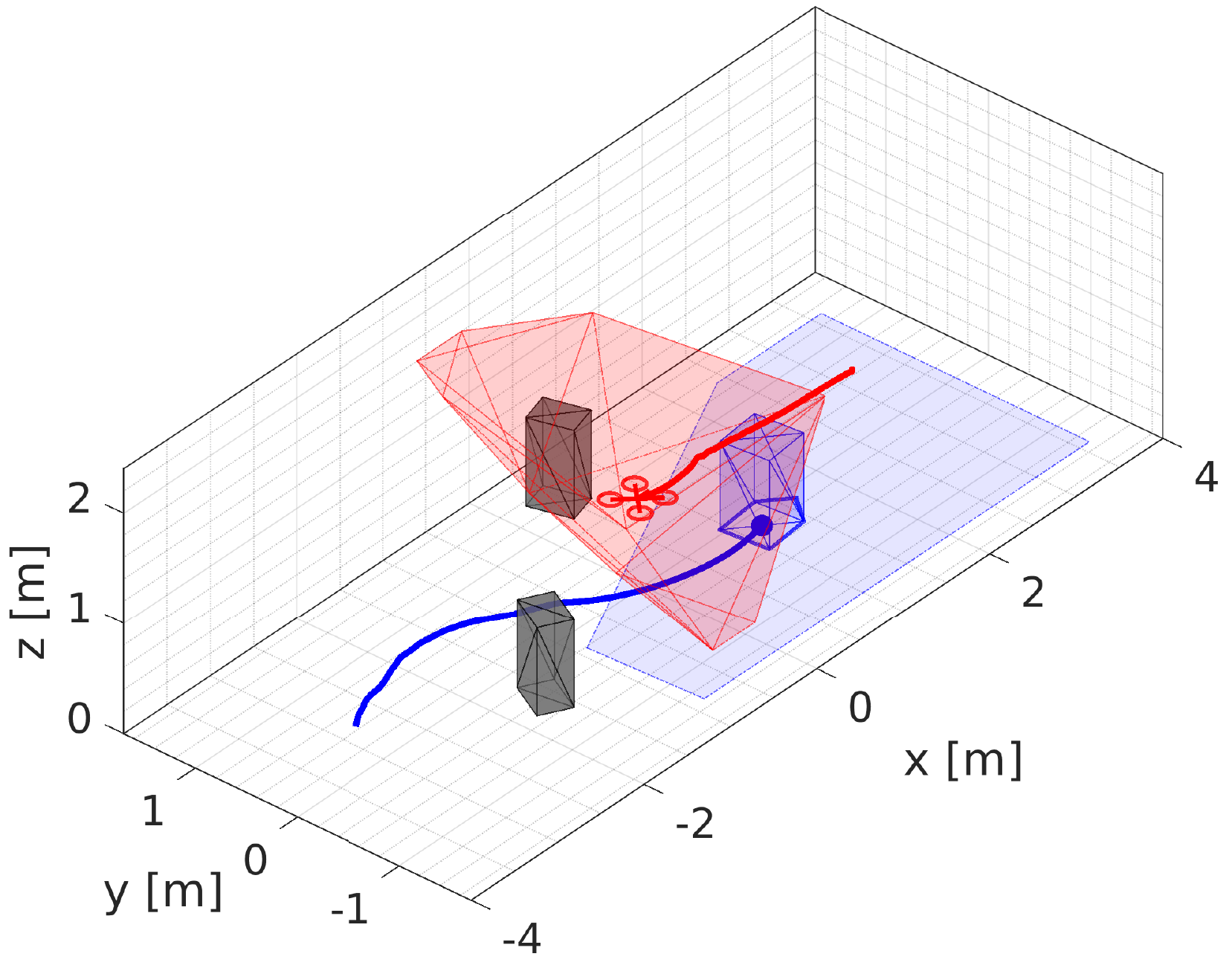}}
   \subfloat[$t$ = 12 s.]{\label{}
      \includegraphics[width=.24\textwidth]{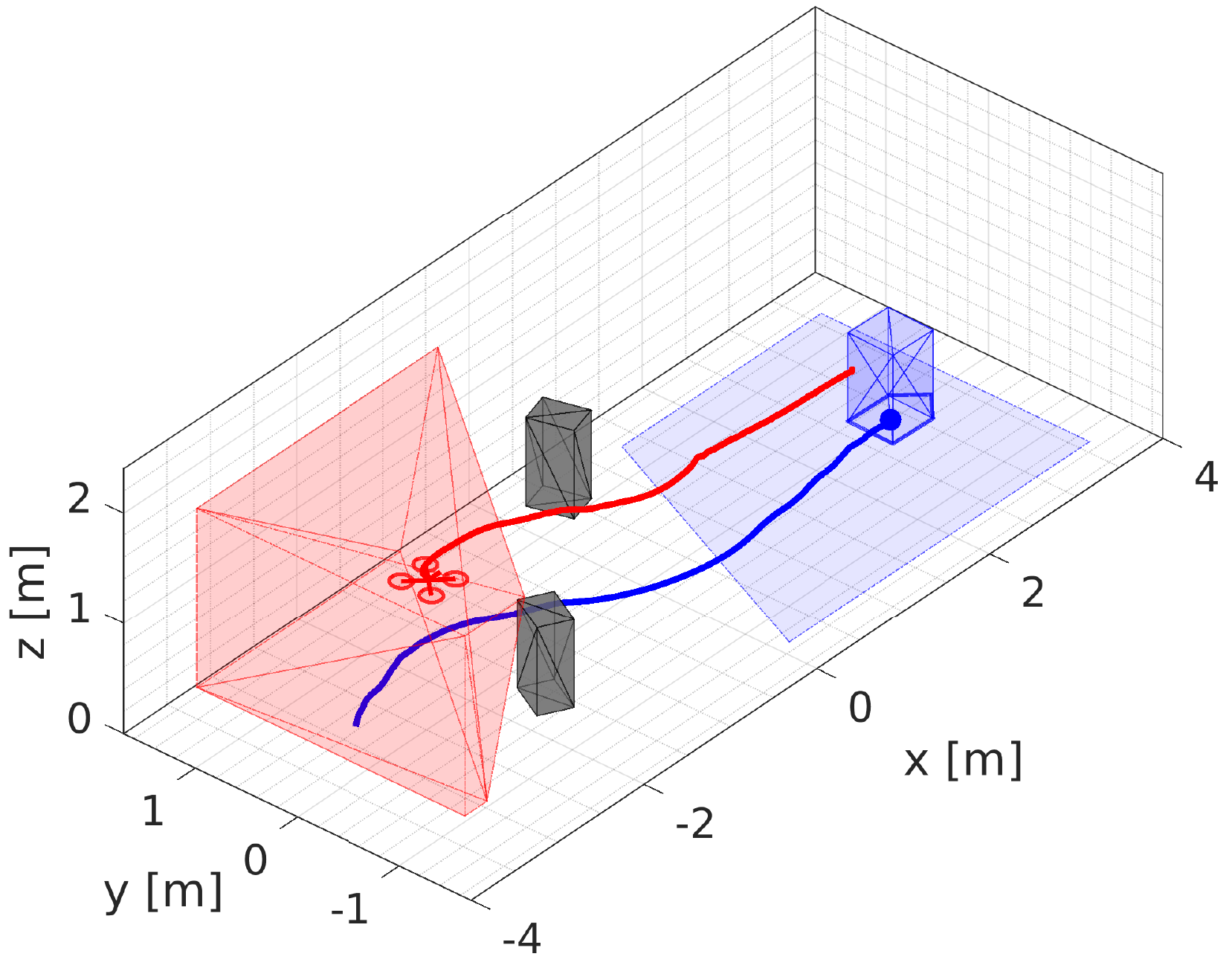}}

   \caption{Collision avoidance with a heterogeneous team of a differential-drive robot and a quadrotor. Top row: Snapshots of the experiment. Bottom row: Trajectories of the robots. }%
   \label{fig:exp_het}
\end{figure*}

\subsection{Experimental Results}

\subsubsection{Experiments with differential-drive robots in 2D}
We first validated our proposed approach with two differential-drive robots. In the experiment, two robots are required to swap their positions while avoiding two static obstacles in the environment. The robot safety radius is set as 0.3 m. We run the experiment four times. The two robots successfully navigated to their goals while avoiding each other as well as the obstacles in all runs. 

Fig. \ref{fig:exp_jackal} presents the results of one run. The top row of the figure shows a series of snapshots during the experiment, while the bottom row shows the robots' travelled trajectories and their corresponding B-UAVCs. It can be seen that each robot always keeps a very safe region (B-UAVC) taking into account its localization and sensing uncertainties. In Fig. \ref{fig:exp_jackal_sta} we cumulate the distance between the two robots (Fig. \ref{subfig:jackal_dis}) and distance between the robots and obstacles (Fig. \ref{subfig:jackal_dis_obs}) during the whole experiments. It can be seen that a minimum safe inter-robot distance of 0.6 m and a safe robot-obstacle distance of 0.3 m were maintained over all the runs. 

\subsubsection{Experiments with quadrotors in 3D}
We then performed experiments with a team of quadrotors in two scenarios: with and without static obstacles. The quadrotor safety radius is set as 0.3 m.

\paragraph{Scenario 1}
Two quadrotors swap their positions while avoiding two static obstacles in the environment. We performed the swapping action four times and Fig. \ref{fig:exp_quad_2} presents one run of the results.  

\paragraph{Scenario 2}
Three quadrotors fly in a confined space while navigating to different goal positions. The goal locations are randomly chosen such that the quadrotors' directions from initial positions towards goals are crossing. New goals are generated after all quadrotors reach their current goals. We run the experiment for a consecutive two minutes within which the goal of each quadrotor has been changed eight times. 

Fig. \ref{fig:exp_quad_3} presents a series of snapshots during the experiment. 
Fig. \ref{fig:exp_quad_3_sta} cumulates the inter-quadrotor distance in the experiments of both scenarios, and the distance between quadrotors and obstacles in Scenario 1. It can be seen that a minimum safety distance of 0.6 m among quadrotors and that of 0.3 m between quadrotors and obstacles were achieved during the whole experiments.

\subsubsection{Experiments with heterogeneous teams of robots}

We further tested our approach with one ground differential-drive robot and one quadrotor to show that it can be applied to heterogeneous robot teams. In the experiment, the ground robot only considers its motion and the obstacles in 2D (the ground plane) while ignoring the flying quadrotor. In contrast, the quadrotor considers both itself location and the ground robot's location as well as obstacles in 3D, in which it assumes the ground vehicle has a height of 0.6 m. To this end, the B-UAVC of the ground robot is a 2D convex region while that of the quadrotor is a 3D one. 

Fig. \ref{fig:exp_het} shows the results of the experiment. It can be seen that the two robots successfully reached their goals while avoiding each other and the static obstacles. Particularly at $t$ = 4 s, the quadrotor actively flies upward to avoid the ground robot. In Fig. \ref{fig:exp_hete_sta} we cumulate the distance between the two robots and the distance between robots and obstacles, which show that a safe inter-robot clearance of 0.6 m and that of 0.3 m between robots and obstacles were maintained during the experiment.

\begin{figure}[t]
    \centering
    \subfloat[]{\label{subfig:hete_dis_his}
        \includegraphics[width=.23\textwidth]{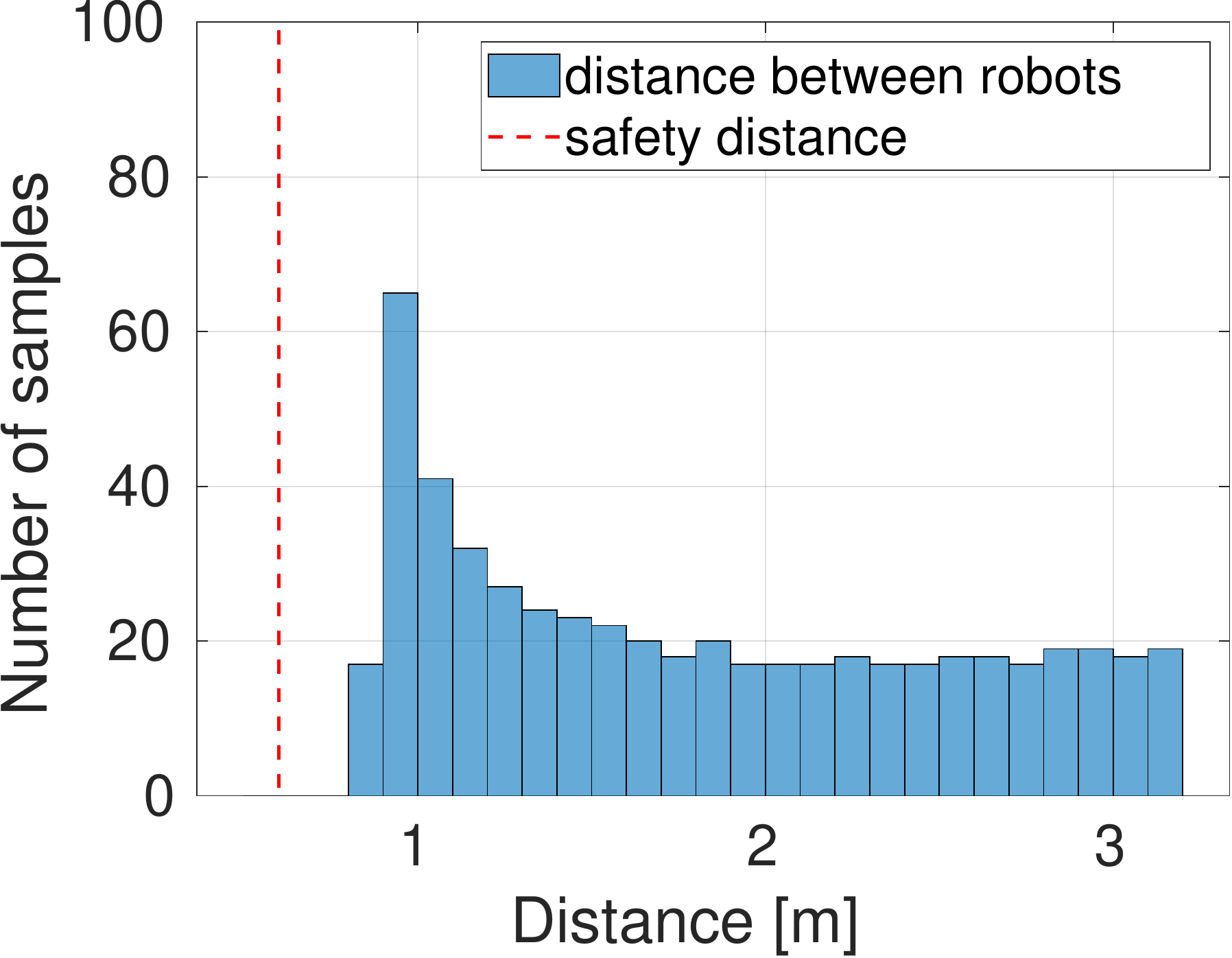}}
    \subfloat[]{\label{subfig:hete_obs_dis_his}
        \includegraphics[width=.23\textwidth]{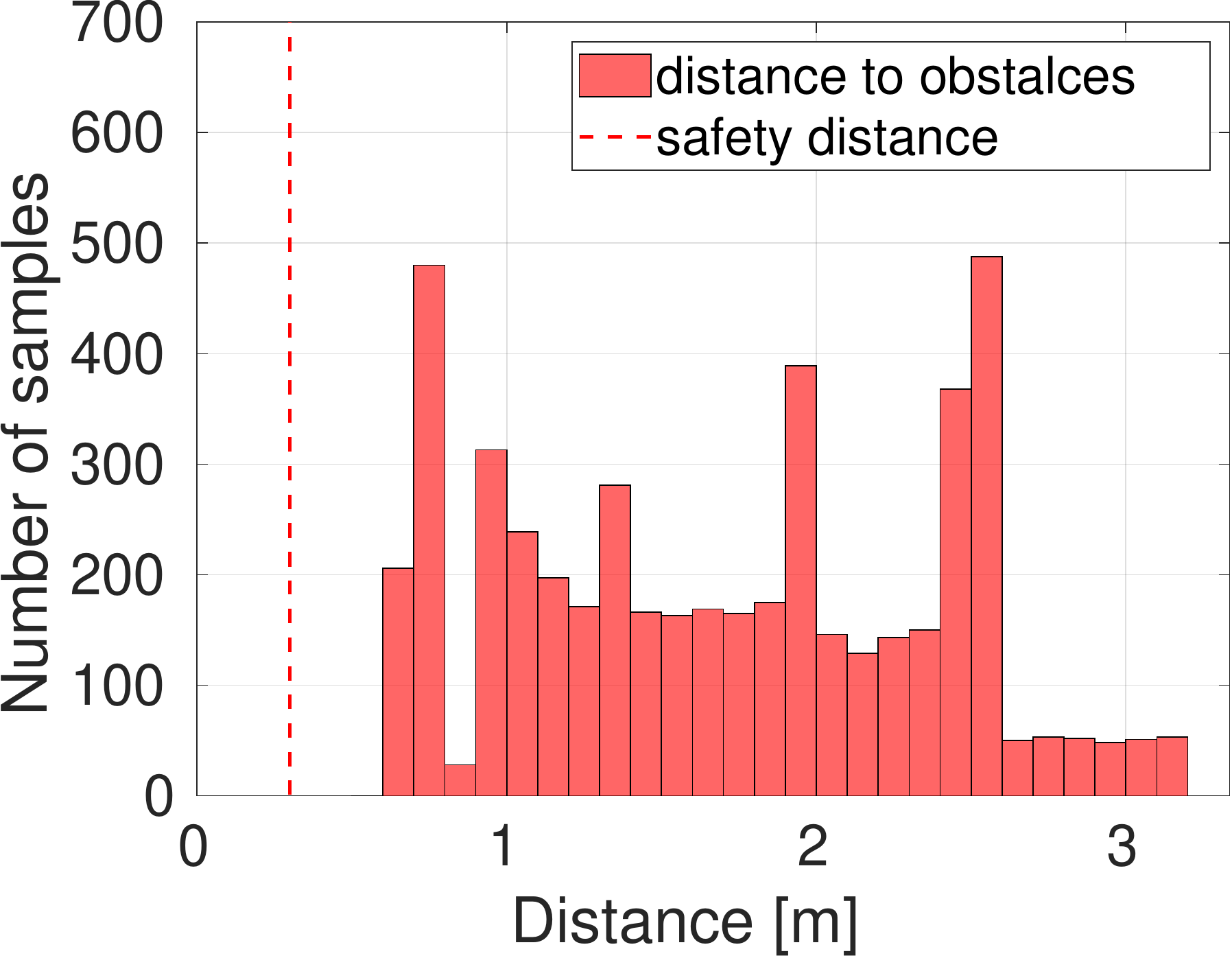}}

    \caption{Experimental results with a ground differential-drive robot and a quadrotor. (a) Histogram of inter-robot distance. (b) Histogram of distance between robots and obstacles. }%
    \label{fig:exp_hete_sta}
\end{figure}

\section{Conclusion}\label{sec:conclsuion}
In this paper we presented a decentralized and communication free multi-robot collision avoidance method that accounts for robot localization and sensing uncertainties. By assuming that the uncertainties are according to Gaussian distributions, we compute a chance-constrained buffered uncertainty-aware Voronoi cell (B-UAVC) for each robot among other robots and static obstacles. The probability of collision between robots and obstacles is guaranteed to be below a specified threshold by constraining each robot's motion to be within its corresponding B-UAVC. We apply the method to single-integrator, double-integrator, differential-drive, and general high-order dynamical multi-robot systems. 
In comparison with the BVC method, we showed that our method achieves robust safe navigation among a larger number of robots with noisy position measurements where the BVC approach will fail. 
In simulation with a team of quadrotors, we showed that our method achieves safer yet more conservative motions compared with the CCNMPC method, which is centralized and requires robots to communicate future trajectories. 
We also validated our method in extensive experiments with a team of ground vehicles, quadrotors, and heterogeneous robot teams in both obstacle-free and obstacles-clutter environments. 
Through simulations and experiments, two limitations of the proposed approach are also observed. 
The approach can achieve a high level of safety under robot localization and sensing uncertainty, however, it also leads to conservative behaviours of the robots, particulary for agile vehicles (quadrotors) in confined space. 
And, since the approach is local and efficient inter-robot coordination is not well investigated, deadlocks and livelocks may occure for large numbers of robots moving in complex environments.  

For future work, we plan to employ the proposed approach as a low-level robust collision-avoidance controller, and incorporate it with other higher-level multi-robot trajectory planning and coordination methods to achieve more efficient multi-robot navigation.

\rebuttal{
\section*{Appendix}
\def\thesection{\Alph{section}}
\setcounter{section}{0}

\section{Proofs of Lemmas and Theorems}

\subsection{Proof of Lemma \ref{lemma:probability_Do}}\label{appendix:proof_probability_Do}
\begin{proof}
    First we can write the random variable $\vd_o$ in an equivalent form $\vd_o = \Sigma_o^\prime\vd_o^\prime$, where $\vd_o^\prime\sim\Gau(0,I) \in \R^d$ and $\Sigma_o^{\prime}\Sigma_o^{\prime T} = \Sigma_o$.
    Note that $\vd_o^{\prime T}\vd_o^{\prime}$ is a chi-squared random variable with $d$ degrees of freedom. Hence, there is
    \begin{equation*}
        \pr(\vd_o^{\prime T}\vd_o^{\prime} \leq F^{-1}(1-\epsilon)) = 1 - \epsilon. 
    \end{equation*}
    Also note that $\Sigma_o^{-1} = (\Sigma_o^{\prime}\Sigma_o^{\prime T})^{-1} = \Sigma_o^{\prime T^{-1}}\Sigma_o^{\prime -1}$, thus $\vd_o^T\Sigma_o^{-1}\vd_o = \vd_o^{\prime T}\Sigma_o^{\prime T}\Sigma_o^{\prime T^{-1}}\Sigma_o^{\prime -1}\Sigma_o^\prime\vd_o^\prime = \vd_o^{\prime T}\vd_o^{\prime}$. Hence, it follows that $\pr(\vd_o^T\Sigma_o^{-1}\vd_o \leq F^{-1}(1-\epsilon)) = 1-\epsilon$. Thus, let $\cD_o = \{ \vd: \vd^T\Sigma_o^{-1}\vd \leq F^{-1}(1-\epsilon) \}$, there is $\pr(\vd_o \in\cD_o) = 1 - \epsilon$.
\end{proof}

\subsection{Proof of Theorem \ref{theorem:maximal_shadow}}\label{appendix:proof_maximal_shadow}
\begin{proof}
    We need to prove that the set $\cS_o$ contains the set $\cO_o$ with probability $1-\epsilon$. It is equivalent to that for any point in $\cO_o$, the set $\cS_o$ contains this point with probability $1-\epsilon$. Recall the definition of $\cO_o$, every $\vy \in \cO_o$ can be written as $\vx + \vd_o$ with some $\vx \in \hat{\cO}_o$. Also note the definition $\cS_o = \{\vx + \vd~|~\vx\in\hat{\cO}_o,\vd\in\cD_o\}$. Hence the probability that $\cS_o$ contains $\vy$ is equal to the probability that $\cD_o$ contains $\vd_o$. That is, $\pr(\vy \in \cS_o) = \pr(\vd_o \in \cD_o)= 1 - \epsilon, \forall \vy \in \cO_o$. Thus, $\pr(\cO_o \subseteq \cS_o) = 1-\epsilon$. $\cS_o$ is a maximal $\epsilon$-shadow of $\cO_o$.

\end{proof}

\section{Procedure to Compute the Best Linear Separator Between Two Gaussian Distributions}\label{appendix:best_linear_separator}
The objective is to solve the following minimax problem:
\begin{equation*}
    (\va_{ij}, b_{ij}) = \arg\underset{\va_{ij}\in\R^d,b_{ij}\in\R}{\min\max}(\pr_i, \pr_j),
\end{equation*}
where 
\begin{equation*}
    \begin{aligned}
        \pr_i(\va_{ij}^T\vp > b_{ij}) &= 1 - \Phi((b_{ij} - \va_{ij}^T\hp_i)/\sqrt{\va_{ij}^T\Sigma_i\va_{ij}}), \\ 
        \pr_j(\va_{ij}^T\vp \leq b_{ij}) &= 1 - \Phi((\va_{ij}^T\hp_j - b_{ij})/\sqrt{\va_{ij}^T\Sigma_j\va_{ij}}).
    \end{aligned}
\end{equation*}
Let $u_1 = \frac{b_{ij} - \va_{ij}^T\hp_i}{\sqrt{\va_{ij}^T\Sigma_i\va_{ij}}}$, $u_2 = \frac{\va_{ij}^T\hp_j - b_{ij}}{\sqrt{\va_{ij}^T\Sigma_j\va_{ij}}}$. As the function $\Phi(\cdot)$ is monotonic, the original minimax problem is equivalent to 
\begin{equation*}
    (\va_{ij}, b_{ij}) = \arg\underset{\va_{ij}\in\R^d,b_{ij}\in\R}{\max\min}(u_1, u_2).
\end{equation*}
We can write $u_1$ in the following form for a given $u_2$,
\begin{equation*}
    u_1 = \frac{\va_{ij}^T\hp_{ij} - u_2\sqrt{\va_{ij}^T\Sigma_j\va_{ij}}}{\sqrt{\va_{ij}^T\Sigma_i\va_{ij}}},
\end{equation*}
where $\hp_{ij} = \hp_j - \hp_i$. For each given $u_2$, $u_1$ needs to be maximized. Hence, we can differentiate the above equation with respect to $\va_{ij}$ and set the derivative to equal to zero, which leads to 
\begin{equation}\label{eq:procedure_a_ij}
    \va_{ij} = [t\Sigma_i + (1-t)\Sigma_j]^{-1}\hp_{ij},
\end{equation}
where $t \in (0,1)$ is a scaler. Thus according to definition of $u_1$ and $u_2$, we have 
\begin{equation}\label{eq:procedure_b_ij}
    b_{ij} = \va_{ij}^T\hp_i + t\va_{ij}^T\Sigma_i\va_{ij} = \va_{ij}^T\hp_j - (1-t)\va_{ij}^T\Sigma_j\va_{ij}.
\end{equation}
It is proved that $u_1 = u_2$ must be hold for the solution of the minimax problem \citep{Anderson1962}, which leads to 
\begin{equation}\label{eq:procedure_t}
    \va_{ij}^T[t^2\Sigma_i - (1-t)^2\Sigma_j]\va_{ij} = 0.
\end{equation}
Thus, one can first solve for $t$ by combining Eqs. (\ref{eq:procedure_a_ij}) and (\ref{eq:procedure_t}) via numerical iteration efficiently. Then $\va_{ij}$ and $b_{ij}$ can be computed using Eqs. (\ref{eq:procedure_a_ij}) and (\ref{eq:procedure_b_ij}).

\section{Deadlock Resolution Heuristic}\label{appendix:deadlock}
We detect and resolve deadlocks in a heuristic way in this paper. Let $\norm{\Delta\vp_i}$ be the position progress between two consecutive time steps of robot $i$, and $\Delta\vp_{\min}$ a predefined minimum allowable progress distance for the robot in $n_{\tn{dead}}$ time steps. If the robot has not reached its goal and $\Sigma_{n_{\tn{dead}}}\norm{\Delta\vp_i} \leq \Delta\vp_{\min}$, we consider the robot as in a deadlock situation. 
For the one-step controller, each robot must be at the ``projected goal'' $\vg_i^*$ when the system is in a deadlock configuration \citep{Zhou2017}. In this case, each robot chooses one of the nearby edges within its B-UAVC to move along. For receding horizon planning of high-order dynamical systems, the robot may get stuck due to a local minima of the trajectory optimization problem. In this case, we temporarily change the goal location $\vg_i$ of each robot by clockwise rotating it along the $z$ axis with $90\degree$, i.e.
\begin{equation}
    \vg_{i,\tn{temp}} = R_Z(-90\degree)(\vg_i - \hp_i) + \hp_i, 
\end{equation}
where $R_Z$ denotes the rotation matrix for rotations around $z$-axis. This temporary rotation will change the objective of the trajectory optimization problem, thus helping the robot to recover from a local minima. Once the robot recovers from stuck, its goal is changed back to $\vg_i$. 

Similar to most heuristic deadlock resolutions, the solutions presented here can not guarantee that all robots will eventually reach their goals since livelocks
(robots continuously repeat a sequence of behaviors that bring them from one deadlock situation to another one) 
may still occur.

\section{Quadrotor Dynamics Model}\label{appendix:quad_model}
We use the Parrot Bebop 2 quadrotor in our experiments. The state of the quadrotor is 
\begin{equation*}
    \vx = [\vp^T, \vv^T, \phi, \theta, \psi]^T \in \R^9,
\end{equation*}
where $\vp = [p_x, p_y, p_z]^T \in \R^3$ is the position, $\vv = [v_x, v_y, v_z]^T \in \R^3$ the velocity, and $\phi, \theta, \psi$ the roll, pitch and yaw angles of the quadrotor. The control inputs to the quadrotor are 
\begin{equation*}
    \vu = [\phi_c, \theta_c, v_{z_c}, \dot{\psi}_c]^T \in \R^4,
\end{equation*}
where $\phi_c$ and $\theta_c$ are commanded roll and pitch angles, $v_{z_c}$ the commanded velocity in vertical $z$ direction, and $\dot{\psi}_c$ the commanded yaw rate. 

The dynamics of the quadrotor position and velocity are 
\begin{equation*}
    \begin{cases}
        \dot\vp = \vv, \\ 
        \mat \dot{v}_x \\ \dot{v}_y \mate = R_Z(\psi)\mat \tan\theta \\ -\tan\phi \mate g - \mat k_{D_x}v_x \\ k_{D_y}v_y \mate, \\ 
        \dot{v}_z = \frac{1}{\tau_{v_z}}(k_{v_z}v_{z_c} - v_z),
    \end{cases}
\end{equation*}
where $g = 9.81~\tn{m}/\tn{s}^2$ is the Earth's gravity, $R_Z(\psi) = \mat \cos\psi &-\sin\psi \\ \sin\psi &\cos\psi \mate$ is the rotation matrix along the $z$-body axis, $k_{D_x}$ and $k_{D_y}$ the drag coefficient, $k_{v_z}$ and $\tau_{v_z}$ the gain and time constant of vertical velocity control. 

The attitude dynamics of the quadrotor are 
\begin{equation*}
    \begin{cases}
        \dot\phi = \frac{1}{\tau_{\phi}}(k_{\phi}\phi_c - \phi), \\
        \dot\theta = \frac{1}{\tau_{\theta}}(k_{\theta}\theta_c - \theta), \\
        \dot\psi = \dot{\psi}_c, 
    \end{cases}
\end{equation*}
where $k_{\phi}, k_{\theta}$ and $\tau_{\phi}, \tau_{\theta}$ are the gains and time constants of roll and pitch angles control respectively. 

We obtained the dynamics model parameters $k_{D_x} = 0.25$, $k_{D_y} = 0.33$, $k_{v_z}=1.2270$, $\tau_{v_z}=0.3367$, $k_{\phi}=1.1260$, $\tau_{\phi}=0.2368$, $k_{\theta}=1.1075$ and $\tau_{\theta}=0.2318$ by collecting real flying data and performing system identification. 

}

\bibliographystyle{spbasic}      
\balance
\bibliography{ref}   

\end{document}